\def\eqref#1{equation~\ref{#1}}
\def\1{\bm{1}}
\def\mM{{\bm{M}}}
\def\mW{{\bm{W}}}
\DeclareMathAlphabet{\mathsfit}{\encodingdefault}{\sfdefault}{m}{sl}
\SetMathAlphabet{\mathsfit}{bold}{\encodingdefault}{\sfdefault}{bx}{n}
\newcommand{\E}{\mathbb{E}}
\pgfplotsset{compat=1.14}
\newtheorem*{nonumdefinition}{Definition}
\newtheorem{theorem}{Theorem}[section]
\newtheorem{corollary}{Corollary}[theorem]
\newtheorem{lemma}[theorem]{Lemma}
\newtheorem*{proof*}{Proof}
\newtheoremstyle{break}
  {\topsep}{\topsep}%
  {\itshape}{}%
  {\bfseries}{}%
  {\newline}{}%
\theoremstyle{break}
\declaretheoremstyle[%
  spaceabove=-3pt,%
  spacebelow=6pt,%
  headfont=\normalfont\itshape,%
  postheadspace=1em,%
  qed=\qedsymbol%
]{mystyle}
\definecolor{customblue}{HTML}{008cf9}
\definecolor{customyellow}{HTML}{ebac23}
\definecolor{customred}{HTML}{b80058}
\definecolor{customgreen}{HTML}{006e00}
\definecolor{customgrey}{HTML}{cacaca}
\definecolor{custombrown}{HTML}{d163e6}
\definecolor{customdarkgrey}{HTML}{707070}
\newcommand{\cmark}{\textcolor{customgreen}{\ding{52}}}
\newcommand{\xmark}{\textcolor{customred}{\ding{56}}}
\newcommand{\bigO}{\mathcal{O}}
\newcommand{\xvec}{\mathbf{x}}
\newcommand{\pprv}{\mathbf{\pi}}
\renewcommand{\mM}{\mathbf{M}}
\renewcommand{\mW}{\mathbf{W}}
\newcommand{\vW}{\mathbf{w}}
\newcommand{\ppr}{\mathbf{PPR}}
\newcommand{\hd}{h_{\mathrm{d}}}
\newcommand{\hsgn}{h_{\mathrm{sgn}}}
\newcommand{\thiswork}{InstantEmbedding\xspace}
\newcommand{\thisworkml}{\shortstack[l]{Instant\\Embedding}}
\def\algbackskip{\hskip-\ALG@thistlm}
\definecolor{color_ML}{rgb}{.29, .59, .82}
\definecolor{color_DM}{rgb}{0, .62, .42}
\definecolor{color_DB}{rgb}{1, .75 ,0}
\definecolor{color_IR}{rgb}{.89, .26, .2}
\definecolor{hlcolor}{rgb}{.9. .99, .9}
\newcommand{\legend}[1]{\textcolor{#1}{\rule{6pt}{6pt}}}
\newcommand{\hl}{\cellcolor{hlcolor}}
\newcommand{\new}[1]{\textcolor{black}{#1}}
\newcommand{\antnew}[1]{\textcolor{black}{#1}}
\title{InstantEmbedding: \\ Efficient Local Node Representations}
\author{\makebox[0.5\linewidth]{Ştefan Postăvaru \thanks{Work done during the Google A.I. Residency program.} \hfill} \\
Google Research \\
\texttt{spostavaru@google.com} \\
\And
Anton Tsitsulin \thanks{Work done while interning at Google.} \\
University of Bonn \\
\texttt{tsitsulin@bit.uni-bonn.de} \\
\And
\makebox[0.5\linewidth]{Filipe Miguel Gonçalves de Almeida \hfill} \\
Google Research \\
\texttt{filipea@google.com} \\
\And
Yingtao Tian \\
Google Research \\
\texttt{alantian@google.com} \\
\And
\makebox[0.5\linewidth]{Silvio Lattanzi \hfill}\\
Google Research \\
\texttt{silviol@google.com} \\
\And
Bryan Perozzi \\
Google Research \\
\texttt{bperozzi@acm.org} \\
}
\begin{document}

\maketitle

\begin{abstract}
In this paper, we introduce InstantEmbedding, an efficient method for generating single-node representations using local PageRank computations. We theoretically prove that our approach produces globally consistent representations in sublinear time.
We demonstrate this empirically by conducting extensive experiments on real-world datasets with over a billion edges.
Our experiments confirm that InstantEmbedding requires drastically less computation time (over $9,\!000$ times faster) and less memory (by over $8,\!000$ times) to produce a single node's embedding than traditional methods including DeepWalk, node2vec, VERSE, and FastRP.
We also show that our method produces high quality representations, demonstrating results that meet or exceed the state of the art for unsupervised representation learning on tasks like node classification and link prediction.

\end{abstract}
\section{Introduction}

Graphs are widely used to represent data when are objects connected to each other, such as social networks, chemical molecules, and knowledge graphs. 
A widely used approach in dealing with graphs is learning compact representations of graphs~\citep{perozzi2014deepwalk,grover2016node2vec,abu2018watch}, which learns a $d$-dimensional embedding vector for each node in a given graph.
Unsupervised embeddings in particular have shown improvements in many downstream machine learning tasks, 
such as visualization~\citep{maaten2008visualizing}, node classification~\citep{perozzi2014deepwalk} and link prediction~\citep{abu2018watch}.
Importantly, since such embeddings are learned solely from the structure of the graph, they can be used across multiple tasks and applications.

Typically, graph embedding models often assume that graph data fits in memory~\citep{perozzi2014deepwalk} and require representations for all nodes to be generated.
%and is stable~\citep{tang2009scalable}.
However, in many real-world applications, it is often the case that graph data is large but also scarcely annotated. For example, the Friendster social graph~\citep{yang2015defining} has only 30\% nodes assigned to a community, from its total 65M entries. At the same time, many applications of graph embeddings such as classifying a data item only require one current representation for the item itself, and eventually representations of labeled nodes.
Therefore, computing a full graph embedding is at worst infeasible and at best inefficient.

These observations motivate the problem which we study in this paper -- the \textit{Local Node Embedding} problem.  In this setting, the embedding for a node is restricted to using only local structural information, and can not access the representations of other nodes in the graph or rely on trained global model state.  In addition, we require that a local method needs to produce embeddings which are consistent with all other node's representations, so that the final representations can be used in the same downstream tasks that graph embeddings have proved adapt at in the past.

In this work, we introduce InstantEmbedding, an efficient method to generate local node embeddings \emph{on the fly} in sublinear time which are globally consistent. 
Considering previous work that links embedding learning methods to matrix factorization~\citep{tsitsulin2018verse,qiu2018network}, our method leverages a high-order similarity matrix based on Personalized PageRank (PPR) as foundations on which local node embeddings are computed via hashing. We offer theoretical guarantees on the locality of the computation, as well as the proof of the global consistency of the generated embeddings.
We show empirically that our method is able to produce high-quality representations on par with state of the art methods, with efficiency several orders of magnitude better in clock time and memory consumption: running $9,\!000$ times faster and using $8,\!000$ times less memory on the largest graphs that contenders can process. 
\section{Preliminaries \& Related Work} 

\begin{table}[!t]
\small
\centering{
\newcolumntype{R}{>{\raggedleft\arraybackslash}X}
\newcolumntype{C}{>{\centering\arraybackslash}X}
\caption{Related work in terms of desirable properties and complexities. Analysis in Section~\ref{sec:analysis}.}
\label{fig:existing_embedding_approaches}
\vspace*{-2ex}
\begin{tabularx}{\linewidth}{p{1cm}CCCCCC@{}}
\toprule
\multicolumn{1}{C}{} & \multicolumn{4}{c}{\textbf{Properties}} & \multicolumn{2}{c}{\textbf{Complexities}} \\
\cmidrule(lr){2-5}\cmidrule(lr){6-7}
\emph{method} & Local Inference & No Global Training & Unsupervised Embedding & Attribute-Free & Time $\bigO$ & Memory $\bigO$ \\
\midrule
DeepWalk & \xmark & \xmark & \cmark & \cmark & $d n \log n$ & $dn + m$ \\
node2vec & \xmark & \xmark & \cmark & \cmark & $d b n $ & $n^3$ \\
VERSE & \xmark & \xmark & \cmark & \cmark & $ d b n $ & $dn + m$ \\
FastRP & \xmark & \xmark & \cmark & \cmark & $ d m \sqrt{n} $ & $dn + m$ \\
\midrule
GCN & \xmark & \xmark & \xmark & \xmark & $d m $ & $dn + m$ \\
DGI & \xmark & \xmark & \cmark & \xmark & $d m $ & $dn + m$ \\
\midrule
\thiswork & \cmark & \cmark & \cmark & \cmark & $ \frac{1}{\alpha(1-\alpha)\epsilon} + d$ & $ \frac{1}{\alpha(1-\alpha)\epsilon} + d$ \\
\bottomrule
\end{tabularx}}
\end{table}

\subsection{Graph Embedding}

Let $G=(V,E)$ represent an unweighted graph, which contains a set of nodes $V$, $|V| = n$, and edges $E\subseteq (V\times V)$, $|E|=m$.
A graph can also be represented as an adjacency matrix $\mathbf{A}\in \{0,1\}^{n \times n}$ where $\mathbf{A}_{u,v} = 1$ iff $(u,v) \in E$.
The task of graph embedding then, is to learn a $d$-dimensional node embedding matrix $\mathbf{X} \in \mathbb{R}^{n \times d}$  where $\mathbf{X}_v$ serves as the embedding for any node $v\in G$. 
We note that $d \ll n$, i.e.\ the learned representations are low-dimensional, and the challenge of graph embedding is to best preserve graph properties (such as node similarities) in this space.

Following the formalization in \cite{abu2018watch}, many graph embedding can be thought of minimizing an objective in the general form: $\min_{\mathbf{X}} L( f(\mathbf{X}), g(\mathbf{A}))$,
where $f: \mathbb{R}^{n \times d} \to \mathbb{R}^{n\times n}$ is a \emph{pairwise} distance function on the embedding space, $g: \mathbb{R}^{n\times n} \to \mathbb{R}^{n\times n}$ is a distance function on the (possibly transformed) adjacency matrix, and $L$ is a loss function over all $(u,v) \in (V \times V)$ pairs.

A number of graph embedding methods have been proposed.
One family of these methods simply learn $\mathbf{X}$ as a lookup dictionary of embeddings and calculate the loss via distance~\citep{kruskal1964multidimensional}, or matrix factorization (either implicit~\citep{perozzi2014deepwalk,grover2016node2vec} or explicit~\citep{ou2016asymmetric}). On attributed structured data, Graph Convolutional Networks~\citep{kipf2016semi} have been successfully applied to both supervised and unsupervised tasks~\citep{velivckovic2018deep}. However, in the absence of node-level features,~\citet{duong2019node}  demonstrated that these methods do not produce meaningful representations. 

\noindent
\textbf{Graph Embedding via Random Projection}
The computational efficiency brought by advances in random projection~\citep{achlioptas2003database, dasgupta2010sparse} paved the way for its adaption in graph embedding to allow direct construction of the embedding matrix $\mathbf{X}$.
Two recent works, RandNE~\citep{zhang2018billion} and FastRP~\citep{chen2019fast} iteratively project the adjacency matrix to simulate the higher-order interactions between nodes.
As we show in the experiments, these methods suffer from high memory requirements and are not always competitive with other methods.

\subsection{Local Algorithms on Graphs}

Local algorithms on graphs~\citep{suomela2013survey} solve graph without using the full graph. 
A well-studied problem in this space is personalized recommendation~\citep{jeh2003scaling} where users are represented as nodes in a graph and the goal is to recommend items to specific users leveraging the graph structure.
Classic solutions to this problem are Personalized PageRank~\citep{gupta2013wtf} and Collaborative Filtering~\citep{schafer2007collaborative,he2017neural}.
Interestingly, these methods have been recently applied to graph neural networks~\citep{klicpera2018combining,he2020lightgcn}.
We now recall the definition of Personalized PageRank that is one of the main ingredients in our embedding algorithm.

\begin{nonumdefinition}[Personalized PageRank (PPR)]
Given $\mathbf{s} \in \mathbb{R}^{n}$ ($\mathbf{s}_i \geq 0, \sum_{i}\mathbf{s}_i = 1$), a distribution of the starting node of random walks, and $\alpha \in (0, 1)$, a decay factor, the Personalized PageRank vector $\pi(\mathbf{s}) \in \mathbb{R}^{n}$ is defined recursively as:\useshortskip
\begin{equation}
\label{eq:ppr-definition}
\pi(\mathbf{s}) = \alpha \mathbf{s} + (1 - \alpha) \pi(\mathbf{s})^\top \mathbf{D}^{-1} \mathbf{A},
\end{equation}
\noindent where $\mathbf{D}^{-1} \mathbf{A}$ is the \antnew{transition} matrix.
\end{nonumdefinition}
PPR takes as input a distribution of starting nodes $\mathbf{s}$, which is typically a $n$ dimensional one-hot vector $\mathbf{e}_i$ with $1$ in the $i$-th coordinate, enforcing a local random walks starting from node $i$.
Following this practice, we denote $\pprv_i \in \mathbb{R}^{n}$, the PPR vector starting from a single node $i$, and $\ppr \in \mathbb{R}^{n\times n}$, the full PPR matrix for all nodes in the graph, where  $\ppr_{i,:} = \pi(\mathbf{e}_i)$.
VERSE~\citep{tsitsulin2018verse} proposes to learn node embeddings by implicitly factorizing $\ppr$.
Its stochastic approach can perform well, but lacks guarantees of stability and convergence.
The idea of learning embeddings based on local random walks has also been used in the property testing framework, a direction in graph algorithm aiming at analyzing the clustering structure of a graph~\citep{kale2008testing, czumaj2010testing, czumaj2015testing, chiplunkar2018testing}.

\subsection{Problem Statement}
\label{sec:problem}
In this work, we consider the problem of embedding a single node in a graph quickly.
More formally, we consider what we term the \textit{Local Node Embedding} problem:
given a graph $G$ and any node $v$, return a globally consistent structural representation for $v$ using only local information around $v$, in time sublinear to the size of the graph.

A solution to the local node embedding problem should posses two following properties:
\begin{enumerate}[leftmargin=0.5cm,itemindent=.5cm,labelwidth=\itemindent,labelsep=0cm,align=left,topsep=0pt,itemsep=-1ex,partopsep=1ex,parsep=1ex]
    \item \textbf{Locality}. The embeddings for a node are computed locally, i.e.\ the embedding for a node can be produced using only local information and in time independent of the \antnew{total} graph size.
    \item \textbf{Global Consistency}. A local method must produce embeddings that are globally consistent (i.e.~able to relate each embedding to each other, s.t.\ distances in the space preserve proximity).
\end{enumerate}

While many node embedding approaches have been proposed~\citep{chen2018tutorial}, to the best of our knowledge  we are the first to examine the local embedding problem.  
Furthermore, no existing methods for positional representations of nodes meet these requirements.
We briefly discuss these requirements in detail below,
and put the related work in terms of these properties in Table~\ref{fig:existing_embedding_approaches}.

\textbf{Locality.}
While classic node embedding methods, such as DeepWalk~\citep{perozzi2014deepwalk}, node2vec~\citep{grover2016node2vec}, or VERSE~\citep{tsitsulin2018verse} rely on information aggregated from local subgraphs (e.g.\ sampled by a random walk), they do not meet our locality requirement.
Specifically, they also require the representations of all the nodes around them, resulting in a dependency on information from all nodes in the graph (in addition to space complexity $O(n d)$ where $d$ is the embedding dimension) to compute a single representation. \new{Classical random-projection based methods also require access to the full adjacency matrix in order to compute the higher-order ranking matrix.} We briefly remark that even methods capable of local attributed subgraph embedding (such as \new{GCN or DGI}) also do not meet this definition of locality, as they require a global training phase to calibrate their graph pooling functions.

\textbf{Global Consistency}.  This property allows embeddings produced by local node embedding to be used together, perhaps as features in a model.  While existing methods for node embeddings are global ones which implicitly have global consistency, this property is not trivial for a local method to achieve.
One exciting implication of a local method which is globally consistent is that it can wait to compute a representation until it is actually required for a task.
For example, in a production system, one might only produce representations for immediate classification when they are requested.

\section{Method}\label{sec:method}

Here we outline our proposed approach for local node embedding.  
We begin by discussing the connection between a recent embedding approach and matrix factorization.  
Then using this analysis, we propose an embedding method based on randomly \antnew{hashing} the PPR matrix.  
We note that this approach has a tantalizing property -- it can be decomposed into entirely local operations per node.
With this observation in hand, we present our solution, InstantEmbedding.
Finally, we analyze the algorithmic complexity of our approach, showing that it is both a local algorithm (which runs in time sublinear to the size of $G$) and that the local representations are globally consistent.
\subsection{Global Embedding using PPR}

A recently proposed method for node embedding, VERSE \citep{tsitsulin2018verse}, \new{learns} node embeddings using a neural network which encodes
%their
Personalized PageRank similarities.
\new{Their} objective function, in the form of Noise Contrastive Estimation (NCE)~\citep{gutmann2010noise}, is:
\begin{equation}
\mathcal{L} = \sum_{i=1}^{n} \sum_{j=1}^{n} \left[ \ppr_{ij} \log \sigma \left(\xvec_{i}^{\top} \xvec_{j}^{\vphantom{\top}} \right) + b \E_{j\prime \sim \mathcal{U}} \log \sigma \left(-\xvec_{i}^{\top} \xvec_{j\prime}^{\vphantom{\top}} \right) \right],
\label{eq:verse}
\end{equation}
\noindent where $\ppr$ is the Personalized PageRank matrix, $\sigma$ is the sigmoid function, $b$ is the number of negative samples, and $\mathcal{U}$ is a uniform noise distribution from which negative samples are drawn. Like many SkipGram-style methods~\citep{NIPS2013_5021}, this learning process can be linked to matrix factorization by the following lemma:

\begin{lemma}[\cite{tsitsulin2020frede}]\label{lemma:verse}
VERSE implicitly factorizes the matrix $\log(\ppr) + \log n - \log b$ into $\mathbf{X}\mathbf{X}^\top$, where $n$ is the number of nodes in the graph and $b$ is the number of negative samples.
\end{lemma}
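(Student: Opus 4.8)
The plan is to follow the Levy--Goldberg style of analysis that recasts a SkipGram-with-negative-sampling objective as an implicit matrix factorization, adapting it to the PPR-based loss in Equation~\ref{eq:verse}. The central idea is that, once the embedding dimension $d$ is large enough that the Gram matrix $\mathbf{X}\mathbf{X}^\top$ can realize an arbitrary symmetric matrix, the objective $\mathcal{L}$ decouples across pairs: each inner product $z_{ij} := \xvec_i^\top \xvec_j$ can be treated as an independent scalar variable, and maximizing $\mathcal{L}$ reduces to maximizing a one-dimensional concave function of each $z_{ij}$ separately. The optimal values $z_{ij}^\star$ then constitute the entries of $\mathbf{X}\mathbf{X}^\top$, and identifying them with the claimed matrix proves the lemma.

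First I would expand the negative-sampling expectation. Since $\mathcal{U}$ is uniform over the $n$ nodes, $\E_{j'\sim\mathcal{U}}[\,\cdot\,] = \tfrac{1}{n}\sum_{j'=1}^{n}[\,\cdot\,]$, which turns the second summand of Equation~\ref{eq:verse} into an explicit double sum over pairs. Collecting every term of $\mathcal{L}$ in which a fixed $z_{ij}$ appears yields a local objective of the form $\ell_{ij}(z) = A_{ij}\log\sigma(z) + B_{ij}\log\sigma(-z)$, where $A_{ij}$ is the positive weight $\ppr_{ij}$ and $B_{ij}$ is the effective negative weight accumulated from the noise term. Using $\tfrac{d}{dz}\log\sigma(z)=\sigma(-z)$ and $\tfrac{d}{dz}\log\sigma(-z)=-\sigma(z)$, the stationarity condition $\ell_{ij}'(z)=A_{ij}\sigma(-z)-B_{ij}\sigma(z)=0$ simplifies, via the identity $\sigma(-z)/\sigma(z)=e^{-z}$, to $e^{-z}=B_{ij}/A_{ij}$. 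Hence $z_{ij}^\star=\log A_{ij}-\log B_{ij}$, and concavity of $\ell_{ij}$ (a nonnegative combination of concave log-sigmoid terms) guarantees this stationary point is the unique maximizer.

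It then remains to pin down $A_{ij}$ and $B_{ij}$. Taking $A_{ij}=\ppr_{ij}$ and tracking the noise term so that the effective per-pair negative weight is $B_{ij}=b/n$ gives $z_{ij}^\star=\log\ppr_{ij}+\log n-\log b$, which are exactly the entries of the matrix $\log(\ppr)+\log n-\log b$ (with $\log$ applied entrywise and the scalars $\log n$, $\log b$ added to every entry). Collecting these over all $(i,j)$ shows that $\mathbf{X}\mathbf{X}^\top = \log(\ppr)+\log n - \log b$ at the optimum, which is the asserted factorization.

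The main obstacle is the bookkeeping that produces the additive $\log n$: it hinges on correctly accounting for the $1/n$ from the uniform noise distribution together with how many times negatives are drawn per source node, and it is precisely this normalization that distinguishes $\log n - \log b$ from a bare $-\log b$ shift. The secondary caveat, inherited from all such derivations, is the decoupling assumption --- treating each $z_{ij}$ as a free variable is exact only when $d$ is large enough for $\mathbf{X}\mathbf{X}^\top$ to span all symmetric matrices, and for $d\ll n$ the statement should be read as identifying the factorization target that the objective is implicitly fitting. I would also flag the standard degeneracy that $\log\ppr_{ij}=-\infty$ when $\ppr_{ij}=0$, handled as usual by restricting to the support of $\ppr$ or by passing to a shifted/positive variant.
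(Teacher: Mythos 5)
Your proposal is correct and follows essentially the same route as the paper's proof: decouple the objective into per-pair terms $\ppr_{ij}\log\sigma(z_{ij}) + \tfrac{b}{n}\log\sigma(-z_{ij})$, set the derivative in $z_{ij}$ to zero, and read off $z_{ij}=\log\tfrac{n\cdot\ppr_{ij}}{b}$. The only substantive difference is that the paper makes the bookkeeping for the $\tfrac{b}{n}$ weight explicit by invoking the right-stochasticity of $\ppr$ (the noise term is weighted by $\sum_j \ppr_{ij}=1$), whereas you assert the weight directly; your added remarks on concavity, the large-$d$ decoupling assumption, and the $\ppr_{ij}=0$ degeneracy are sound and, if anything, more careful than the original.
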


\subsubsection{\antnew{Hashing} for Graph Embedding}

Lemma~\ref{lemma:verse} provides an incentive to find an efficient alternative to factorizing the dense similarity matrix $\mathbf{M} = \log(\ppr) + \log n - \log b$. Our choice of the algorithm requires two important properties: a) providing an unbiased estimator for the inner product, and b) requiring less than $\bigO(n)$ memory. 
The first property is essential to ensure we have a good sketch of $\mathbf{M}$ for the embedding,
while the second one keeps our complexity per node sublinear. 
In order to meet both requirements we propose to use hashing~\citep{weinberger2009feature} to preserve the essential similarities of \textbf{PPR} \antnew{in expectation}.
We leverage two global hash functions $\hd\!: \mathbb{N} \to \{0,...,d-1\}$ and $\hsgn\!: \mathbb{N} \to \{-1, 1\}$ sampled from universal hash families $\mathbb{U}_{d}$ and $\mathbb{U}_{-1, 1}$ respectively,
to define the hashing kernel $H_{\hd, \hsgn} : \mathbb{R}^n \to \mathbb{R}^d$.
Applied to an input vector $\mathbf{x}$, it yields $\mathbf{h} = H_{\hd, \hsgn}(\mathbf{x})$, where \new{ $\mathbf{h}_{i} = \sum_{k \in h_d^{-1}(i)} \mathbf{x}_k \hsgn(k)$}.

We note that although $H_{\hd, \hsgn}$ is proposed for vectors, it can be trivially extended to matrix $\mathbf{M}$ when applied to each row vector of that matrix, e.g. by defining $H_{\hd, \hsgn}(\mathbf{M})_{i,:} \equiv H_{\hd, \hsgn}(\mathbf{M}_{i,:})$.
In the appendix we prove the next lemma that follows from \citep{weinberger2009feature} and highlights both the aforementioned properties:
\begin{lemma}\label{lemma:hash-kernel}
The space complexity of $H_{\hd, \hsgn}$ is $\bigO(1)$ and: 
\[
\E_{\hd\sim \mathbb{U}_{d}, \hsgn\sim \mathbb{U}_{-1, 1}} \left[H_{\hd, \hsgn}(\mM)H_{\hd, \hsgn}(\mM)^{\top}\right] = \mM \mM^{\top}
\]
\end{lemma}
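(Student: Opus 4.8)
The plan is to prove the two assertions in turn: the $\bigO(1)$ space bound, which is essentially a bookkeeping observation about how the hash functions are stored, and the unbiasedness identity, which is the substantive part and reduces to showing that feature hashing preserves inner products in expectation. For the space bound, observe that $H_{\hd,\hsgn}$ is completely determined by the pair $(\hd,\hsgn)$; since each is drawn from a universal family ($\mathbb{U}_{d}$ and $\mathbb{U}_{-1,1}$ respectively), each can be represented by a constant number of parameters (the coefficients of a standard universal hash), independent of $n$ and $d$. Thus storing the kernel map as an object takes $\bigO(1)$ space.

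For the identity, I would first reduce the matrix statement to a statement about inner products of hashed rows. Because $H_{\hd,\hsgn}$ acts row-wise, the $(i,j)$ entry of $H_{\hd,\hsgn}(\mM)H_{\hd,\hsgn}(\mM)^{\top}$ equals $\langle H_{\hd,\hsgn}(\mM_{i,:}),H_{\hd,\hsgn}(\mM_{j,:})\rangle$, while the corresponding entry of $\mM\mM^{\top}$ is $\langle \mM_{i,:},\mM_{j,:}\rangle$. By linearity of expectation it therefore suffices to prove that for arbitrary $\vx,\vy\in\mathbb{R}^n$,
\[
\E_{\hd,\hsgn}\!\left[\langle H_{\hd,\hsgn}(\vx),H_{\hd,\hsgn}(\vy)\rangle\right]=\langle\vx,\vy\rangle,
\]
and then apply this entrywise with $\vx=\mM_{i,:}$ and $\vy=\mM_{j,:}$.

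To establish the vector identity, I would expand the hashed inner product by summing over buckets and using $H_{\hd,\hsgn}(\vx)_c=\sum_{k:\hd(k)=c}\vx_k\hsgn(k)$; the cross terms collapse into a single sum over colliding index pairs,
\[
\langle H_{\hd,\hsgn}(\vx),H_{\hd,\hsgn}(\vy)\rangle=\sum_{k,l:\,\hd(k)=\hd(l)}\vx_k\vy_l\,\hsgn(k)\hsgn(l).
\]
Splitting this sum at $k=l$, the diagonal terms satisfy the collision condition automatically and have $\hsgn(k)^2=1$, so they contribute exactly $\sum_k\vx_k\vy_k=\langle\vx,\vy\rangle$ deterministically. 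It then remains only to show that every off-diagonal term vanishes in expectation: for $k\neq l$ I would take the expectation of $\mathbf{1}[\hd(k)=\hd(l)]\,\hsgn(k)\hsgn(l)$, use the independence of the two hash families to factorize it, and invoke the zero-mean, pairwise-independent signs from $\mathbb{U}_{-1,1}$ to conclude $\E[\hsgn(k)\hsgn(l)]=0$. This kills all off-diagonal contributions and yields the identity.

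The main obstacle is this last step, since it is the only place where the probabilistic properties of the families are genuinely used. One must be careful that $\hd$ and $\hsgn$ are drawn independently and that the sign hash supplies at least $2$-wise independence with uniform $\{-1,1\}$ marginals, so that $\E[\hsgn(k)\hsgn(l)]=0$ for every distinct pair regardless of whether the pair collides under $\hd$. Granting these properties (which is exactly what a universal family provides, following \citep{weinberger2009feature}), the factorization of the expectation is immediate and the remaining manipulations are routine.
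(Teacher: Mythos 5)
Your proof is correct and follows the same overall structure as the paper's: the $\bigO(1)$ space bound via the constant-size representation of a universal hash function, followed by an entrywise reduction of the matrix identity to the unbiasedness of the hashed inner product of two vectors. The only difference is that the paper stops at that point and cites the vector-level unbiasedness as a black-box lemma from \cite{weinberger2009feature}, whereas you prove it directly by splitting the collision sum into the diagonal terms (which contribute $\langle \vx,\vy\rangle$ deterministically since $\hsgn(k)^2=1$) and the off-diagonal terms (which vanish in expectation by independence of $\hd$ and $\hsgn$ and the zero-mean pairwise-independent signs); this makes your argument self-contained, and you correctly flag the one real hypothesis being used, namely that $\mathbb{U}_{-1,1}$ supplies uniform, pairwise-independent signs so that $\E[\hsgn(k)\hsgn(l)]=0$ for $k\neq l$.
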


Our algorithm for global node embedding is presented in Algorithm \ref{algorithm:graph_embedding}.
First, we compute the PPR matrix $\ppr$ (Line 2) with a generic approach (${CreatePPRMatrix}$), which takes a graph and $\epsilon$, the desired precision of the approximation.
We remark that any of the many proposed approaches for computing such a matrix (e.g.\ from~\citet{jeh2003scaling, andersen2007using, lofgren2014fast}) can be used for this calculation.
As the $\ppr$ could be dense, the same could be said about the implicit matrix $\mathbf{M}$.
%Then, we enforce sparsity by applying the $\max$ operator, though we note that for high enough $\epsilon$ this has no effect. % "no effect" is a weak statement, no need..
Thus, we filter the signal from non-significant $\ppr$ values by applying the $\max$ operator.
We remove the constant $\log b$ from the implicit target matrix.
In lines (4-6), the provided hash function accumulates each value in the corresponding embedding dimension.

\begin{algorithm}[h]
\small
\caption{Global Node Embedding using Personalized PageRank}
\hspace*{\algorithmicindent} \textbf{Input:}  graph \emph{G}, embedding dimension \emph{d}, PPR precision $\epsilon$, hash functions $h_d, h_{sgn}$ \\
    \hspace*{\algorithmicindent} \textbf{Output:} embedding matrix $\mW$
\begin{algorithmic}[1]
\Function{GraphEmbedding}{$G, d, \epsilon, h_d, h_\mathrm{sgn}$}
\State $ \ppr \leftarrow {CreatePPRMatrix} (G, \epsilon) $
\State $\mW = \mathbf{0}_{n\times d}$
\For{$\pprv_i$ in $\ppr$}
\For{$r_j$ in $\pprv_i$}
\State $\mW_{i,\hd(j)} \mathrel{+}= \hsgn (j) \times \max(\log(r_j * n), 0)$
\EndFor
\EndFor
\State \Return $\mW$
\EndFunction
\end{algorithmic}
\label{algorithm:graph_embedding}
\end{algorithm}

Interestingly, the projection operation only uses information from each node's individual PPR vector $\pprv_i$ to compute its representation.
In the following section, we will show that local calculation of the PPR can be utilized to develop an entirely local algorithm for node embedding.

\subsection{Local Node Embedding via InstantEmbedding}

Having a local projection method, all that we require is a procedure that can calculate the PPR vector for a node in time sublinear to size of the graph.
Specifically, for InstantEmbedding we propose that the ${CreatePPRMatrix}$ operation consists of invoking the ${SparsePPR}$ routine from Andersen et al.~\citep{andersen2007using} once for each node $i$.
This routine is an entirely local algorithm for efficiently constructing $\pprv_i$, the PPR vector for node $i$, which offers strong guarantees.
The following lemma formalizes the result (proof in Appendix \ref{alemma:space}).

\begin{lemma}\label{lemma:space}
The $\textsc{InstantEmbedding}(v, G, d, \epsilon)$ algorithm computes the local embedding of
a node $v$ by exploring at most the $ O\left(\nicefrac{1}{(1-\alpha)\epsilon}\right) $ nodes in the neighborhood
of $v$.
\end{lemma}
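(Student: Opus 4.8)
The plan is to reduce the claim entirely to the analysis of the $\textsc{SparsePPR}$ (approximate PageRank) routine of Andersen et al., which is the only part of $\textsc{InstantEmbedding}$ that ever touches $G$: the hashing/projection step of Algorithm~\ref{algorithm:graph_embedding} merely reads the already-computed sparse vector $\pprv_v$ and never queries the graph. Hence ``nodes explored'' is exactly the set of vertices that $\textsc{SparsePPR}$ visits while building the approximate PPR vector for $v$, and it suffices to bound that set.

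First I would recall the push-based description of $\textsc{SparsePPR}$. It maintains a pair $(p,r)$ — an approximate PPR estimate $p$ and a residual $r$ — initialized to $p=\mathbf{0}$ and $r=\mathbf{e}_v$, and repeatedly applies a \emph{push} at any node $u$ whose residual is large, i.e.\ $r(u)\ge \epsilon\,\deg(u)$: it moves $\alpha\,r(u)$ into $p(u)$, spreads the remaining $(1-\alpha)\,r(u)$ across the residuals of $u$'s neighbors, and zeroes $r(u)$. The key invariant, maintained by linearity of the PPR operator in Eq.~(\ref{eq:ppr-definition}), is $p+\pi(r)=\pi(\mathbf{e}_v)$; the process halts once $r(u)<\epsilon\,\deg(u)$ for every $u$, and a vertex is ``explored'' precisely when it is pushed or touched as the neighbor of a pushed node.

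The heart of the argument is a conservation-of-mass bound on the pushes. Since $\pi(r)\ge\mathbf{0}$ componentwise, the invariant gives $p\le\pi(\mathbf{e}_v)$, so $\|p\|_1\le\|\pi(\mathbf{e}_v)\|_1=1$. Each push at $u$ increases $\|p\|_1$ by $\alpha\,r(u)$, and its precondition guarantees $r(u)\ge\epsilon\,\deg(u)\ge\epsilon$; summing these increments over all pushes shows the degree-weighted work $\sum_{\text{pushes}}\deg(u)$ is at most $\|p\|_1/(\alpha\epsilon)\le 1/(\alpha\epsilon)$. Because every explored vertex is incident to a push, the number of explored vertices is controlled by this same degree-weighted total (equivalently by $\mathrm{vol}(\mathrm{supp}(p))$), which yields the stated $O\!\left(\nicefrac{1}{(1-\alpha)\epsilon}\right)$ bound under the residual/threshold convention of Andersen et al.~\citep{andersen2007using}. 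I would cite their lemma for the precise constant rather than re-deriving it.

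I expect the main obstacle to be the bookkeeping that converts ``pushes'' into ``distinct explored vertices'': a single vertex may be pushed many times, and vertices can enter the residual without ever being pushed, so one must argue that the set of touched vertices is still governed by the degree-weighted push count and not merely by the number of push \emph{operations}. Getting the dependence on $\alpha$ to land at $(1-\alpha)$ rather than the coarser $\alpha$ obtained above requires using Andersen et al.'s exact threshold convention, which is why I would lean on their stated guarantee for the final constant.
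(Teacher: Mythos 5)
Your proposal is correct and takes essentially the same route as the paper: both reduce the claim to the observation that only the $\textsc{SparsePPR}$ call touches $G$, and then invoke Andersen et al.'s bound $\mathrm{vol}(\mathrm{Supp}(p)) \leq O\left(\nicefrac{1}{(1-\alpha)\epsilon}\right)$ on the support of the computed approximate PPR vector. The only difference is that you additionally sketch the mass-conservation argument underlying that cited bound (and correctly flag the $\alpha$ versus $1-\alpha$ bookkeeping), whereas the paper simply quotes the theorem.
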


We present InstantEmbedding, our algorithm for local node embedding, in Algorithm~\ref{algorithm:instantembedding}. 
As we will show, it is a self-contained solution for the local node embedding problem that can generate embeddings for individual nodes extremely efficiently. 
Notably, per Lemma~\ref{lemma:space}, the local area around $v$ explored by InstantEmbedding is independent of $n$. Therefore the algorithm is strictly local.

\begin{algorithm}[h]
\small
\caption{InstantEmbedding}
\hspace*{\algorithmicindent} \textbf{Input:} node $v$, graph $G$, embedding dimension $d$, PPR precision $\epsilon$, hash functions $h_d, h_{sgn}$ \\
    \hspace*{\algorithmicindent} \textbf{Output:} embedding vector $\mathbf{w}$
\begin{algorithmic}[1]
\Function{InstantEmbedding}{$v, G, d, \epsilon, h_d, h_{sgn}$}
\State $\pprv_v \leftarrow {SparsePPR}(v, G, \epsilon)$
\State $\mathbf{w} \leftarrow \mathbf{0}_d$
\For{$r_j$ in $\pprv_v$}
\State $\mathbf{w}_{\hd(j)} \mathrel{+}= \hsgn(j) \times \max(\log(r_j * n), 0)$
\EndFor
\State \Return $\mathbf{w}$
\EndFunction
\end{algorithmic}
\label{algorithm:instantembedding}
\end{algorithm}

\subsubsection{Analysis}\label{sec:analysis}

We now prove some basic properties of our proposed approach.
First, we show that the runtime of our algorithm is local and independent of $n$, the number of nodes in the graph. 
Then, we show that our local computations are globally consistent, i.e., the embedding of a node $v$ is the same independently if we compute it locally or if we recompute the embeddings for all nodes in the graph at the same time.
Note that we focus on bounding the running time to compute the embedding for a \emph{single} node in the graph.
Nonetheless, the global complexity to compute all the embeddings can be  obtained by multiplying our bound by $n$, although it is not the focus of this work.

We state the following theorem and prove it in Appendix \ref{atheorem:time}.
\begin{theorem}\label{theorem:time}
The ${InstantEmbedding}(v, G, d, \epsilon)$ algorithm has running time $ \bigO \left(d + \nicefrac{1}{\alpha(1-\alpha)\epsilon}\right)$.
\end{theorem}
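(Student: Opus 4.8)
The plan is to charge the running time of Algorithm~\ref{algorithm:instantembedding} to its three sequential phases: the call to $SparsePPR$ (Line 2), the allocation and zero-initialization of the length-$d$ accumulator $\mathbf{w}$ (Line 3), and the hashing loop over the support of $\pprv_v$ (Lines 4--6). Because these run one after another, the total cost is the sum of the three bounds, and I expect the $SparsePPR$ and loop terms to carry the $\alpha,\epsilon$ dependence and merge, via a partial-fraction identity, into the stated form; the initialization contributes the additive $\bigO(d)$, since writing a zero vector of length $d$ costs $\bigO(d)$.

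Next I would bound the hashing loop. Its body executes once per nonzero coordinate $r_j$ of $\pprv_v$, and the number of such coordinates is at most the number of nodes explored around $v$, which Lemma~\ref{lemma:space} bounds by $\bigO(\nicefrac{1}{(1-\alpha)\epsilon})$. Each iteration evaluates $\hd(j)$ and $\hsgn(j)$ in $\bigO(1)$ (the constant-space guarantee of Lemma~\ref{lemma:hash-kernel}), computes $\max(\log(r_j n),0)$ in $\bigO(1)$, and scatter-adds once into $\mathbf{w}$, also $\bigO(1)$; hence the loop costs $\bigO(\nicefrac{1}{(1-\alpha)\epsilon})$.

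The crux is Line 2, which I would bound by invoking the push analysis underlying the $SparsePPR$ routine of Andersen et al. Each push at a node $u$ moves an $\alpha$-fraction of its residual permanently into the approximate PPR vector and costs work proportional to $\mathrm{deg}(u)$, and a push fires only while the degree-normalized residual at $u$ exceeds $\epsilon$. Charging the $\bigO(\mathrm{deg}(u))$ work of each push against the $\geq \alpha\epsilon\,\mathrm{deg}(u)$ residual mass it removes from the unit initial residual telescopes to $\bigO(\nicefrac{1}{\alpha\epsilon})$ total work. This is the step I expect to be the main obstacle: Lemma~\ref{lemma:space} bounds only the number of distinct nodes touched, not the total push work (a node may be pushed repeatedly), so this term does not drop out of the earlier lemmas and instead needs the amortized charging argument, or a direct appeal to the Andersen et al. running-time bound.

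Finally I would sum the three contributions. Using $\nicefrac{1}{\alpha}+\nicefrac{1}{(1-\alpha)}=\nicefrac{1}{\alpha(1-\alpha)}$, the $SparsePPR$ and loop terms combine as $\bigO(\nicefrac{1}{\alpha\epsilon}+\nicefrac{1}{(1-\alpha)\epsilon})=\bigO(\nicefrac{1}{\alpha(1-\alpha)\epsilon})$, and adding the $\bigO(d)$ initialization yields the claimed $\bigO(d+\nicefrac{1}{\alpha(1-\alpha)\epsilon})$.
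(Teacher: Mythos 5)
Your proposal is correct and follows essentially the same decomposition as the paper's proof: $\bigO(\nicefrac{1}{\alpha\epsilon})$ for the $SparsePPR$ call, $\bigO(\nicefrac{1}{(1-\alpha)\epsilon})$ for the hashing loop via the support bound, $\bigO(d)$ for the output vector, combined with $\nicefrac{1}{\alpha}+\nicefrac{1}{1-\alpha}=\nicefrac{1}{\alpha(1-\alpha)}$. The only difference is that you unfold the amortized charging argument behind the Andersen et al.\ running-time bound (correctly noting that the node-count bound of Lemma~\ref{lemma:space} alone does not suffice for the push work), whereas the paper simply cites that bound as a black box.
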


Besides the embedding size $d$, both the time and and space complexity of our algorithm depend only on the approximation factor $\epsilon$ and the decay factor $\alpha$.
Both are independent of $n$, the size of the graph, and $m$, the size of the edge set. 
Notably, if $\bigO\left(\nicefrac{1}{\alpha(1-\alpha)\epsilon}\right)\in o(n)$, as commonly happen in real world applications, our algorithm has sublinear time w.r.t.\ the graph size. Lastly, we note that the space complexity is also sublinear (due to Lemma~\ref{lemma:space}), which we show in the appendix. 

Now we turn our attention to the consistency of our algorithm, by showing that for a node $v$ the embeddings computed by ${InstantEmbedding}$ and ${GraphEmbedding}$ are identical.
In the following we denote the graph embedding computed by ${GraphEmbedding}(G, d, \epsilon)$ for node $v$ by ${GraphEmbedding}(G, d, \epsilon)_v$, and we prove the following theorem (Appendix \ref{atheorem:consistency}).

\begin{theorem}[Global Consistency]
${InstantEmbedding}(v, G, d,\epsilon)$ output equals one of ${GraphEmbedding}(G, d, \epsilon)$ at position $v$.
\end{theorem}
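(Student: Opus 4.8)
The plan is to prove equality of the two embedding vectors by comparing the two algorithms line by line and showing that, for the fixed node $v$, they perform exactly the same arithmetic in exactly the same way. The key observation is that both \textsc{GraphEmbedding} and \textsc{InstantEmbedding} use the \emph{same} pair of global hash functions $\hd$ and $\hsgn$ as inputs, so the only thing I need to establish is that the PPR vector they operate on for node $v$ is identical, and that the subsequent hashing/accumulation loop is a deterministic function of that vector.

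First I would isolate the PPR computation. In \textsc{GraphEmbedding}, line 2 calls $CreatePPRMatrix(G,\epsilon)$ and then the outer loop selects the row $\pprv_v$; in \textsc{InstantEmbedding}, line 2 calls $SparsePPR(v,G,\epsilon)$ directly. Per the description in Section~\ref{sec:method}, the \textsc{InstantEmbedding} instantiation of $CreatePPRMatrix$ is \emph{defined} to be the routine that invokes $SparsePPR$ once per node, so the $v$-th row produced inside \textsc{GraphEmbedding} is by construction exactly the vector $\pprv_v = SparsePPR(v,G,\epsilon)$ returned on line 2 of \textsc{InstantEmbedding}. Here it is worth noting that $SparsePPR$ is a purely local, deterministic routine: given the same node $v$, graph $G$, and precision $\epsilon$, it returns the same (sparse) approximate PPR vector, so the two algorithms genuinely share the identical input to the hashing step.

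Next I would compare the accumulation loops. Both algorithms initialize a zero vector ($\mW_{v,:} = \mathbf{0}_d$ restricted to row $v$ in the global case, $\mathbf{w} = \mathbf{0}_d$ in the local case) and then, for each entry $r_j$ of $\pprv_v$, execute
\[
\mathbf{w}_{\hd(j)} \mathrel{+}= \hsgn(j) \times \max(\log(r_j \cdot n), 0),
\]
with the global algorithm writing into $\mW_{v,\hd(j)}$ and the local one into $\mathbf{w}_{\hd(j)}$. Since the index set, the order of iteration, and the functions $\hd,\hsgn,\log,\max$ are identical, each coordinate receives the same sum of the same terms. Formally I would argue that the final value of coordinate $\ell$ is $\sum_{j \,:\, \hd(j)=\ell} \hsgn(j)\max(\log(\pprv_{v,j}\,n),0)$ in both cases, which is manifestly the same real number; addition is commutative so iteration order is irrelevant, but here the orders even agree. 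Hence $\mathbf{w} = \mW_{v,:} = GraphEmbedding(G,d,\epsilon)_v$.

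The proof is essentially a bookkeeping argument rather than one with a genuine mathematical obstacle, so the ``hard part'' is really a matter of stating the right assumptions cleanly. The one subtlety I would be careful about is guaranteeing that $SparsePPR$ is deterministic in the sense needed: if the underlying $CreatePPRMatrix$ in the global algorithm could in principle use a \emph{different} PPR approximation than the per-node $SparsePPR$, the rows would differ. I would therefore explicitly invoke the definition given in Section~\ref{sec:method}---that InstantEmbedding's $CreatePPRMatrix$ is exactly the routine calling $SparsePPR$ once per node---to pin down that the $v$-th rows coincide. With that assumption made explicit, the remainder is an immediate consequence of the two loops being the same deterministic map applied to the same vector with the same global hash functions, and the theorem follows.
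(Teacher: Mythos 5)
Your proposal is correct and follows essentially the same route as the paper's proof: both arguments reduce the claim to (i) the determinism of the per-node \textsc{SparsePersonalizedPageRank} call, so the $v$-th PPR row coincides in the two algorithms, and (ii) the observation that each output coordinate is the same closed-form sum $\sum_{k:\,\hd(k)=j}\hsgn(k)\max(\log(r_k\cdot n),0)$ under the shared hash functions. Your extra care in pinning down that $CreatePPRMatrix$ is instantiated by per-node $SparsePPR$ calls is a reasonable clarification but not a different argument.
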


\noindent
\textbf{Complexity Comparison.} Table~\ref{fig:existing_embedding_approaches} compares the complexity of \thiswork with that of previous works: $d$, $n$, $m$ stands for embedding dimension, size of graph and number of edges respectively. Specifically, $b \geq 1 $ stands for the number of samples used in node2vec and VERSE.
It is noteworthy that all the previous works have time complexity depending on $n$, and perform at least linear w.r.t.\ size of the graph.
In contrast, our algorithm depends only on $\epsilon$ and $\alpha$, and has sublinear time w.r.t.\ $n$, the graph size.  In Section \ref{sec:experiments}, we experimentally verify the advantages of our principled method.
\section{Experiments}
\label{sec:experiments}

In the light of the theoretical guarantees about the proposed method, we perform extended experiments in order to verify our two main hypotheses:
\begin{enumerate}[leftmargin=0.5cm,itemindent=.5cm,labelwidth=\itemindent,labelsep=0cm,align=left,topsep=0pt,itemsep=-1ex,partopsep=1ex,parsep=1ex]
    \item \textbf{H1.} Computing local node-embedding is more efficient than generating a global embedding.
    \item \textbf{H2.} The local representations are consistent and of high-quality, being competitive with and even surpassing state-of-the-art methods on several tasks.
\end{enumerate}

We assess \textbf{H1} in Section~\ref{sec:performance}, in which we measure the efficiency of generating a single node embedding for each method. 
Then in Section~\ref{sec:quality} we validate \textbf{H2} by comparing our method against the baselines on multiple datasets using tasks of node classification, link prediction and visualization.

\subsection{Datasets and experimental settings}

\begin{wraptable}[12]{R}{0.4\textwidth}
%\begin{table}[!t]
\vspace{-7mm}
\setlength{\tabcolsep}{3.5pt}
\small
\centering{
\newcolumntype{R}{>{\raggedleft\arraybackslash}X}
\newcolumntype{C}{>{\centering\arraybackslash}X}
\newcolumntype{S}{>{\hsize=.5\hsize}C}
\caption{Dataset attributes: size of vertices $|V|$, edges $|E|$, labeled vertices $|S|$.}\label{tbl:graph-statistics}
%\tiny
\vspace*{-1.5ex}
\begin{tabularx}{0.4\textwidth}{p{1.5cm}RR@{}RC}
\toprule
Dataset & $|V|$ & $|E|$ & $|S|$ \\
\midrule
PPI & 3.8k & 38k & 3.8k \\
BlogCatalog & 10k & 334k & 10k \\
\mbox{CoCit} & 44k & 195k & 44k \\
\mbox{CoAuthor} & 52k & 356k & --- \\
Flickr & 81k & 5.9M & 81k \\
YouTube & 1.1M & 3.0M & \textbf{32k} \\
Amazon2M & 2.4M & 62M & --- \\
Orkut & 3.0M & 117M & \textbf{110k} \\
Friendster & 66M & 1806M & --- \\
\bottomrule
\end{tabularx}}

%\end{table}
\end{wraptable}
To ensure a relevant and fair evaluation, we compare our method against multiple strong baselines, including DeepWalk \citep{perozzi2014deepwalk}, node2vec \citep{grover2016node2vec}, VERSE \citep{tsitsulin2018verse}, and FastRP \citep{chen2019fast}.  
Each method was run on a virtual machine hosted on the Google Cloud Platform, with a 2.3GHz 16-core CPU and 128GB of RAM. 
All reported results use dimensionality $d=512$ for every method. 
We provide additional experiments for multiple dimensions, along with full details regarding each method and its parameterization in the Appendix \ref{asec:methods-desc}.
For reproducibility, we release an implementation of our method.\footnote{\href{https://github.com/google-research/google-research/tree/master/graph_embedding/instant_embedding}{https://github.com/google-research/google-research/tree/master/graph\_embedding/instant\_embedding}}

\textbf{\thiswork Instantiation}.
As presented in Section \ref{sec:method}, our implementation of the presented method relies on the choice of PPR approximation used. 
For instant single-node embeddings, we use the highly efficient PushFlow~\citep{andersen2007using} approximation that enables us to dynamically load into memory at most $\nicefrac{2}{(1-\alpha)\epsilon}$ nodes from the full graph to compute a single PPR vector $\pprv$. 
This is achieved by storing graphs in binarized compressed sparse row format that allows selective reads for nodes of interest. 
In the special case when a full graph embedding is requested, we have the freedom to approximate the PPR in a distributed manner (we omit this from runtime analysis, as we had no distributed implementations for the baselines, but we note our local method is trivially parallelizable).
We refer to Appendix~\ref{asec:epsilon} for the study of the influence of $\epsilon$ on runtime and quality.

\textbf{Datasets}. We perform our evaluations on 10 datasets, as presented in Table \ref{tbl:graph-statistics}. Detailed descriptions of these datasets are available in the supplementary material.
\antnew{Note that on YouTube and Orkut the number of labeled nodes is much smaller than the total. We observe this behavior in several real-world application scenarios, where our method shines the most.}

\subsection{Performance Characteristics}\label{sec:performance}
\antnew{We report the mean wall time and total memory consumption ~\citep{heaptrack2018} required to generate an embedding ($d$=512) for a node in the given dataset.
We repeat the experiment 1,000 times for \thiswork due to its locality property; for the baselines, we measure the time 5 times, and memory once.
Complete results for all results and method can be found in Appendix~\ref{apprendix:runtime}.}

\noindent \textbf{Running Time}. 
As Figure \ref{fig:runtime-memory}(a) shows, \thiswork is the most scalable method, drastically outperforming all the other methods.  We are over 9,000 times faster than the next fastest baseline in the largest graph both methods can process, \antnew{and can scale to graphs of any size}.

\begin{filecontents}{runtime_edges_instantembed.data}
num_edges       runtime
38292           0.0075050
195361          0.0201950
333983          0.0082638
355926          0.0162970
2990443         0.0612840
5899882         0.0205110
61859076        0.0971480
117185083       0.1723100
1806067135      0.079300
\end{filecontents}

\begin{filecontents}{runtime_edges_verse.data}
num_edges       runtime
38292           87.53
195361          904.53
333983          198.75
355926          1029.88
2990443         31101.92
5899882         1863.41
61859076        73993.64
117185083       84468.50
\end{filecontents}

\begin{filecontents}{runtime_edges_node2vec.data}
num_edges       runtime
38292           24.82
195361          280.35
333983          67.86
355926          339.01
2990443         7600.46
5899882         802.64
\end{filecontents}

\begin{filecontents}{runtime_edges_fastrp.data}
num_edges       runtime
38292           1.81
195361          7.21
333983          5.62
355926          5.50
2990443         85.52
5899882         79.88
61859076        940.88
\end{filecontents}

\begin{filecontents}{runtime_edges_deepwalk.data}
num_edges       runtime
38292           254.31
195361          2766.99
333983          711.76
355926          3561.27
2990443         81168.81
5899882         6035.50
61859076        173966.97
117185083       226412.07
\end{filecontents}

\begin{filecontents}{runtime_percent_increase_instantembed.data}
embsize         percent
64              0
128             -1.650231497
256             3.963034868
512             1.785583631
\end{filecontents}

\begin{filecontents}{runtime_percent_increase_deepwalk.data}
embsize         percent
64              0
128             71.10396505
256             182.8862088
512             338.7898542
\end{filecontents}

\begin{filecontents}{runtime_percent_increase_verse.data}
embsize         percent
64              0
128             24.08791711
256             80.73448738
512             145.7856049
\end{filecontents}

\begin{filecontents}{runtime_percent_increase_node2vec.data}
embsize         percent
64              0
128             14.19157733
256             44.38160501
512             103.3044301
\end{filecontents}

\begin{filecontents}{runtime_percent_increase_fastrp.data}
embsize         percent
64              0
128             23.9068102
256             67.69859718
512             150.7499994
\end{filecontents}
\begin{filecontents}{heaptrack_edges_instantembed.data}
num_edges       heaptrack
38292           0.17073
195361          0.27798
333983          0.21402
355926          0.18030
2990443         0.54556
5899882         0.44514
61859076        0.67781
117185083       0.88838
1806067135      0.789241
\end{filecontents}

\begin{filecontents}{heaptrack_edges_verse.data}
num_edges       heaptrack
38292           8.39
195361          92.01
333983          23.92
355926          107.85
2990443         2360.00
5899882         212.50
61859076        5520.00
117185083       7240.00
\end{filecontents}

\begin{filecontents}{heaptrack_edges_node2vec.data}
num_edges       heaptrack
38292           65.98
195361          284.20
333983          3000.00
355926          279.75
2990443         28410.00
5899882         70110.00
\end{filecontents}

\begin{filecontents}{heaptrack_edges_fastrp.data}
num_edges       heaptrack
38292           197.67
195361          1660.00
333983          504.65
355926          1770.00
2990443         40610.00
5899882         3830.00
61859076        125870.00
\end{filecontents}

\begin{filecontents}{heaptrack_edges_deepwalk.data}
num_edges       heaptrack
38292           16.75
195361          187.54
333983          46.21
355926          220.32
2990443         4860.00
5899882         387.67
61859076        10910.00
117185083       14000.00
\end{filecontents}

\begin{figure}[t]
\vspace{-2em}
\centering
\begin{tikzpicture}
    \begin{groupplot}[
      group style={
        group name=runtimeplots,
        group size=2 by 1,
        horizontal sep=0.05\textwidth,
      },
      every axis/.append style={
            font=\tiny,
        },
        title style={below,at={(0.5,0.1)},anchor=north,yshift=-11mm,font=\small},
      yminorticks=true,
      ylabel shift = -5pt,
      max space between ticks=20,
      width=0.5\textwidth,
      height=4.5cm,%3.5cm,
    %   ymode=log,
    %   xmode=log,
      ymajorgrids=true,
      grid style=dotted,
      x label style={at={(axis description cs:0.5,-0.125)},anchor=north},
    ]
    
    % Chart for Edges-to-Runtime
    \nextgroupplot[xlabel={log($|E|$)}, 
                   ymode=log,
                   xmode=log,
                   ylabel={Time, log(seconds)},
                   ylabel style={align=center},
                   title={(a) Running Time}
                   %   ylabel shift=-0.75em,
                   ]
        % InstantEmbed Method.
        \addplot[color=customgreen, only marks, mark=triangle*] 
            table [x=num_edges,y=runtime] {runtime_edges_instantembed.data};
        \label{method:InstantEmbedding}
        \addplot [no markers, thick, color=customgreen]
            table [x=num_edges,y={create col/linear regression={y=runtime}}] {runtime_edges_instantembed.data};

        % DeepWalk Method.
        \addplot[color=customblue, only marks, mark=pentagon*] 
            table [x=num_edges,y=runtime] {runtime_edges_deepwalk.data};
        \label{method:DeepWalk}
        \addplot [domain=38292:1806067135, no markers, thick, color=customblue]
            {10^((0.8543641704*log10(x))-1.437342876)};

        % VERSE Method.
        \addplot[color=customred, only marks, mark=diamond*]
            table [x=num_edges,y=runtime] {runtime_edges_verse.data};
        \label{method:VERSE}
        \addplot [domain=38292:1806067135, no markers, thick, color=customred]
            {10^((0.8863200876*log10(x))-2.109330048)};

        % Node2Vec Method.
        \addplot[color=customyellow, only marks, mark=oplus*] 
            table [x=num_edges,y=runtime] {runtime_edges_node2vec.data};
        \label{method:Node2Vec}
        \addplot [domain=38292:1806067135, no markers, thick, color=customyellow]
            {10^((0.959270001*log10(x))-3.017030434)};

        % FastRP Method.
        \addplot[color=custombrown, only marks, mark=square*] 
            table [x=num_edges,y=runtime] {runtime_edges_fastrp.data};
        \label{method:FastRP}
        \addplot [domain=38292:1806067135, no markers, thick, color=custombrown]
            {10^((0.8767703709*log10(x))-3.92009586)};
            
        \draw[<->,very thick,color=customdarkgrey](axis cs:61859076,0.2)--(axis cs:61859076,550);
        \node[anchor=west] (text) at (axis cs:61859076,19.5){$>9000\times$};
        
    % Chart for Edges-to-Memory
    \nextgroupplot[xlabel={log($|E|$))},
                   ymode=log,
                   xmode=log,
                   ylabel={Memory, log(MB)},
                   ylabel style={align=center},
                   xshift=2.5em,
                   title={(b) Memory Usage}
                %   ylabel shift=-1em,
                   ]
        \addplot[color=customgreen, only marks, mark=triangle*] 
            table [x=num_edges, y=heaptrack] {heaptrack_edges_instantembed.data};
        \label{method:InstantEmbedding}
        \addplot [domain=38292:1806067135, no markers, thick, color=customgreen]
            {10^((0.1784049762*log10(x))-1.596062677)};

        \addplot[color=customblue, only marks, mark=pentagon*] 
            table [x=num_edges, y=heaptrack] {heaptrack_edges_deepwalk.data};
        \label{method:DeepWalk}
        \addplot [domain=38292:1806067135, no markers, thick, color=customblue]
            {10^((0.849111938*log10(x))-2.603885882)};

        \addplot[color=customred, only marks, mark=diamond*] 
            table [x=num_edges, y=heaptrack] {heaptrack_edges_verse.data};
        \label{method:VERSE}
        \addplot [domain=38292:1806067135, no markers, thick, color=customred]
            {10^((0.8533960338*log10(x))-2.925237078)};

        \addplot[color=customyellow, only marks, mark=oplus*] 
            table [x=num_edges, y=heaptrack] {heaptrack_edges_node2vec.data};
        \label{method:Node2Vec}
        \addplot [domain=38292:1806067135, no markers, thick, color=customyellow]
            {10^((1.320787252*log10(x))-4.187933927)};

        \addplot[color=custombrown, only marks, mark=square*] 
            table [x=num_edges, y=heaptrack] {heaptrack_edges_fastrp.data};
        \label{method:FastRP}
        \addplot [domain=38292:1806067135, no markers, thick, color=custombrown]
            {10^((0.8722246488*log10(x))-1.725557482)};

        \draw[<->,very thick,color=customdarkgrey](axis cs:117185083,1.8)--(axis cs:117185083,5700);
        \node[anchor=west] (text) at (axis cs:117185083,50.5){$>8000\times$};
    
    % Chart for % Increase in latency.
    % \nextgroupplot[xlabel={(c) Embedding Size},
    %               symbolic x coords={64,128,256,512},
    %               xtick={64,128,256,512},
    %               ylabel={\small Avg Running Time \\ (\% Increase)},
    %               ylabel style={align=center},
    %               xshift=3em,
    %               ylabel shift=-0.5em,
    %               ]
    %     \addplot[color=customgreen, mark=triangle*, smooth, thick] 
    %         table [x=embsize, y=percent] {runtime_percent_increase_instantembed.data};
    %     \label{method:InstantEmbedding}
    %     \addplot[color=customblue, mark=pentagon*, smooth, thick] 
    %         table [x=embsize, y=percent] {runtime_percent_increase_deepwalk.data};
    %     \label{method:DeepWalk}
    %     \addplot[color=customred, mark=diamond*, smooth, thick] 
    %         table [x=embsize, y=percent] {runtime_percent_increase_verse.data};
    %     \label{method:VERSE}
    %     \addplot[color=customyellow, mark=oplus*, smooth, thick] 
    %         table [x=embsize, y=percent] {runtime_percent_increase_node2vec.data};
    %     \label{method:Node2Vec}
    %     \addplot[color=customgrey, mark=square*, smooth, thick] 
    %         table [x=embsize, y=percent] {runtime_percent_increase_fastrp.data};
    %     \label{method:FastRP}

    \end{groupplot}

    % legend
    \path (current bounding box.north)-- coordinate(legendpos)
        (current bounding box.north);
    \matrix[matrix of nodes,
            anchor=south,
            draw,
            inner sep=0.2em,
            draw,
            yshift=2mm,
            font=\small]
            at(legendpos)
            {
                \ref{method:InstantEmbedding} & InstantEmbedding (ours) &[5pt]
                \ref{method:DeepWalk} & DeepWalk &[5pt]
                \ref{method:Node2Vec} & node2vec &[5pt]
                \ref{method:VERSE} & VERSE &[5pt]
                \ref{method:FastRP} & FastRP &[5pt] \\
            };
\end{tikzpicture}
\vspace*{-2ex}
\caption{Required (a) running time and (b) memory consumption to generate a node embedding ($d$=512) based on the edge count of each graph ($|E|$), with the best line fit drawn. Our method is over \textbf{9,000} times faster than FastRP and uses over \textbf{8,000} times less memory than VERSE, the next most efficient baselines respectively, in the largest graph that these baseline methods can process.}
% Our method is over \textbf{9,000} times faster than the next fastest baseline (FastRP), and uses over \textbf{8,000} times less memory than the next most memory-efficient baseline (VERSE), in the largest graph that these baseline methods can process.}
\label{fig:runtime-memory}
\end{figure}
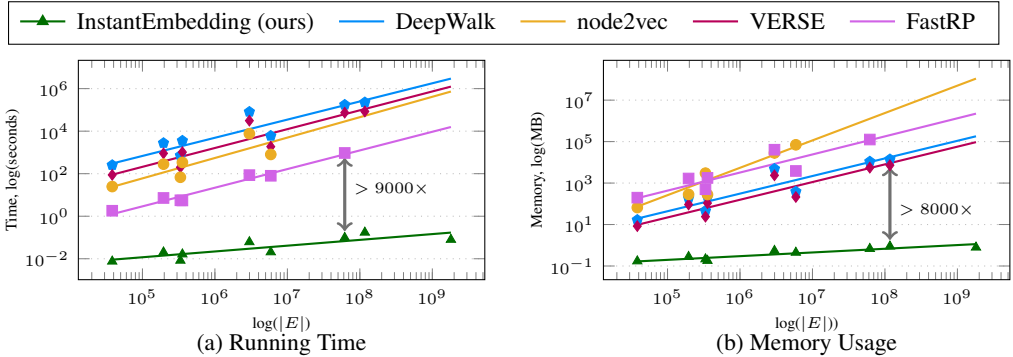

\textbf{Memory Consumption}.
As Figure \ref{fig:runtime-memory}(b) shows, \thiswork is the most efficient method having been able to run in all datasets using negligible memory compared to the other methods. Compared to the next most memory-efficient baseline (VERSE) we are over 8,000 times more efficient in the largest graph both methods can process.

The results of running time and memory analysis confirm hypothesis \textbf{H1} and show that InstantEmbedding has a significant speed and space advantage versus the baselines.  The relative speedup continues to grow as the size of the datasets increase.  On a dataset with over 1 billion edges (Friendster), we can compute an embedding in 80ms -- fast enough for a real-time application!

\subsection{Embedding Quality}
\label{sec:quality}
\textbf{Node Classification.} 
This task measures the semantic information preserved by the embeddings by training a simple classifier on a small fraction of labeled representations. 
For each method, we perform three different random splits of the data.
More details are available in the Appendix~\ref{appendix:classification}.
\begin{table}[t]
\setlength{\tabcolsep}{3.5pt}
\small
\centering{
\newcolumntype{R}{>{\raggedleft\arraybackslash}X}
\newcolumntype{C}{>{\centering\arraybackslash}X}
\newcolumntype{S}{>{\hsize=.5\hsize}C}
\caption{Average Micro-F1 classification scores and confidence intervals. Our method is marked as follows: * - above baselines; \textbf{bold} - no other method is statistically significant better.}
\label{tbl:exp-classification-micro}
\vspace*{-1.5ex}
\begin{tabularx}{\linewidth}{p{2.3cm}CCCCCC}
\toprule
\emph{Method \textbackslash{} Dataset} & PPI & BlogCatalog & CoCit & Flickr & YouTube & Orkut\\
    \midrule
DeepWalk                        & 16.08  \scriptsize{$\pm$ \emph{0.64}} & 32.48  \scriptsize{$\pm$ \emph{0.35}} & 37.44  \scriptsize{$\pm$ \emph{0.67}} & 31.22  \scriptsize{$\pm$ \emph{0.38}} & 38.69  \scriptsize{$\pm$ \emph{1.17}} & 87.67  \scriptsize{$\pm$ \emph{0.23}} \\
node2vec                        & 15.03  \scriptsize{$\pm$ \emph{3.18}} & 33.67  \scriptsize{$\pm$ \emph{0.93}} & 38.35  \scriptsize{$\pm$ \emph{1.75}} & 29.80  \scriptsize{$\pm$ \emph{0.67}}  & 36.02  \scriptsize{$\pm$ \emph{2.01}} & DNC                                   \\
VERSE                           & 12.59  \scriptsize{$\pm$ \emph{2.54}} & 24.64  \scriptsize{$\pm$ \emph{0.85}} & 38.22  \scriptsize{$\pm$ \emph{1.34}} & 25.22  \scriptsize{$\pm$ \emph{0.20}} & 36.74  \scriptsize{$\pm$ \emph{1.05}} & 81.52  \scriptsize{$\pm$ \emph{1.11}} \\
FastRP                          & 15.74  \scriptsize{$\pm$ \emph{2.19}} & 33.54  \scriptsize{$\pm$ \emph{0.96}} & 26.03  \scriptsize{$\pm$ \emph{2.10}} & 29.85  \scriptsize{$\pm$ \emph{0.26}} & 22.83  \scriptsize{$\pm$ \emph{0.41}} & DNC                                   \\
\midrule
\thiswork                       & \textbf{17.67}*  \scriptsize{$\pm$ \emph{1.22}} & \textbf{33.36}  \scriptsize{$\pm$ \emph{0.67}} & \textbf{39.95}*  \scriptsize{$\pm$ \emph{0.67}} & 30.43  \scriptsize{$\pm$ \emph{0.79}} & \textbf{40.04}*  \scriptsize{$\pm$ \emph{0.97}} & 76.83  \scriptsize{$\pm$ \emph{1.16}} \\
\bottomrule
\end{tabularx}}
\vspace{-2ex}
\end{table}
% \begin{table}[!h]
% \setlength{\tabcolsep}{3.5pt}
% %\scriptsize
% \centering{
% \newcolumntype{R}{>{\raggedleft\arraybackslash}X}
% \newcolumntype{C}{>{\centering\arraybackslash}X}
% \newcolumntype{S}{>{\hsize=.5\hsize}C}
% \begin{tabularx}{\linewidth}{p{1.5cm}CCCCC}
% \emph{dataset} & Hadamard & Concat & L1 & L2 & Avg. \\
%     \midrule
% DeepWalk & 66.54 & 68.06 & 79.06 & 78.11 & 68.43 \\ 
% node2vec & 79.42 & 76.67 & 81.25 & 80.85 & 77.07 \\ 
% VERSE & 85.69 & 79.16  & 71.93 & 72.11 & 78.79 \\ 
% \thiswork & 82.90 & 73.59 & 70.26 & 70.21 & 75.40 \\ 
%     \midrule
% \mbox{Feature eng.} &  &  & 77.53 &  &  \\ 
% \bottomrule
% \end{tabularx}}
% \caption{Link prediction on MSAcademic dataset.}\label{tbl:exp-linkpred}
% \end{table}
\begin{table}[t]
\setlength{\tabcolsep}{3.5pt}
\small
\centering{
\newcolumntype{R}{>{\raggedleft\arraybackslash}X}
\newcolumntype{C}{>{\centering\arraybackslash}X}
\newcolumntype{S}{>{\hsize=.5\hsize}C}
\caption{Average ROC-AUC scores and confidence intervals for the link prediction task. Our method is marked as follows:
* - above baselines; \textbf{bold} - no other method is statistically significant better.}\label{tbl:exp-linkpred}
\vspace*{-2ex}
\begin{tabularx}{\linewidth}{p{2.5cm}CCCCC}
\toprule
\emph{Method \textbackslash{} Dataset} & CoAuthor & Blogcatalog & Youtube & Amazon2M \\
\midrule
DeepWalk                        & 88.43  \scriptsize{$\pm$ \emph{1.08}} & 91.41  \scriptsize{$\pm$ \emph{0.67}} & 82.17 \scriptsize{$\pm$ \emph{1.02}} & 98.79  \scriptsize{$\pm$ \emph{0.41}} \\
node2vec                        & 86.09  \scriptsize{$\pm$ \emph{0.85}} & 92.18  \scriptsize{$\pm$ \emph{0.12}} & 81.27  \scriptsize{$\pm$ \emph{1.58}} & DNC                                   \\
VERSE                           & 92.75  \scriptsize{$\pm$ \emph{0.73}} & 93.42  \scriptsize{$\pm$ \emph{0.35}} & 80.03  \scriptsize{$\pm$ \emph{0.99}} & 99.67  \scriptsize{$\pm$ \emph{0.18}} \\
FastRP                          & 82.19  \scriptsize{$\pm$ \emph{2.22}} & 88.68  \scriptsize{$\pm$ \emph{0.70}} & 76.30  \scriptsize{$\pm$ \emph{1.46}}  & 92.12  \scriptsize{$\pm$ \emph{0.61}} \\ \midrule
\thiswork                       & 90.44  \scriptsize{$\pm$ \emph{0.48}} & 92.74  \scriptsize{$\pm$ \emph{0.60}} & \textbf{82.89}*  \scriptsize{$\pm$ \emph{0.83}} & 99.15  \scriptsize{$\pm$ \emph{0.18}} \\ 
\bottomrule
\end{tabularx}}
\vspace{-1mm}
\end{table}

In Table \ref{tbl:exp-classification-micro}  we report the mean Micro F1 scores with their respective confidence intervals (corresponding Macro-F1 scores in the supplementary material). For each dataset, we perform Welch's t-test between our method and the best performing contender. We observe that InstantEmbedding is remarkably good on these node classification, despite its several approximations and locality restriction. Specifically, on four out of five datasets, no other method is statistically significant above ours, and three of these (PPI, CoCit and YouTube) we achieve the best classification results.

\antnew{In Figure~\ref{fig:epsilon-youtube-main}, we study how our hyperparameter, the PPR approximation error $\epsilon$, influences both the classification performance, running time, and memory consumption.
There is a general sweet spot (around $\epsilon=10^{-5}$) across datasets where InstantEmbedding outperforms competing methods while being orders of magnitude faster.
Data on the other datasets is available in Section~\ref{asec:epsilon}.
}

\pgfplotsset{compat=1.5}

\begin{filecontents}{ie_micro.data.youtube}
epsilon       micro
0.1           23.4
0.01          23.6
0.001         27.6
0.0001        33.5
0.00001       38.9
0.000001      40.2
\end{filecontents}

\begin{filecontents}{deepwalk_micro.data.youtube}
epsilon       micro
0.1           38.69
0.000001      38.69
\end{filecontents}

\begin{filecontents}{verse_micro.data.youtube}
epsilon       micro
0.1           36.74
0.000001      36.74
\end{filecontents}

\begin{filecontents}{node2vec_micro.data.youtube}
epsilon       micro
0.1           36.02
0.000001      36.02
\end{filecontents}

\begin{filecontents}{fastrp_micro.data.youtube}
epsilon       micro
0.1           22.83
0.000001      22.83
\end{filecontents}

\begin{filecontents}{ie_time.data.youtube}
epsilon       time
0.1           0.0560
0.01          0.0700
0.001         0.0920
0.0001        0.0880
0.00001       0.2140
0.000001      1.7520
\end{filecontents}

\begin{filecontents}{deepwalk_time.data.youtube}
epsilon       time
0.1           81168
0.000001      81168
\end{filecontents}

\begin{filecontents}{verse_time.data.youtube}
epsilon       time
0.1           31101
0.000001      31101
\end{filecontents}

\begin{filecontents}{node2vec_time.data.youtube}
epsilon       time
0.1           7600
0.000001      7600
\end{filecontents}

\begin{filecontents}{fastrp_time.data.youtube}
epsilon       time
0.1           85.52
0.000001      85.52
\end{filecontents}

\begin{filecontents}{ie_memory.data.youtube}
epsilon       memory
0.1           0.055
0.01          0.082
0.001         0.135
0.0001        0.235
0.00001       1.81
0.000001      12.39
\end{filecontents}

\begin{filecontents}{deepwalk_memory.data.youtube}
epsilon       memory
0.1           4860
0.000001      4860
\end{filecontents}

\begin{filecontents}{verse_memory.data.youtube}
epsilon       memory
0.1           2360
0.000001      2360
\end{filecontents}

\begin{filecontents}{node2vec_memory.data.youtube}
epsilon       memory
0.1           28410
0.000001      28410
\end{filecontents}

\begin{filecontents}{fastrp_memory.data.youtube}
epsilon       memory
0.1           40610
0.000001      40610
\end{filecontents}
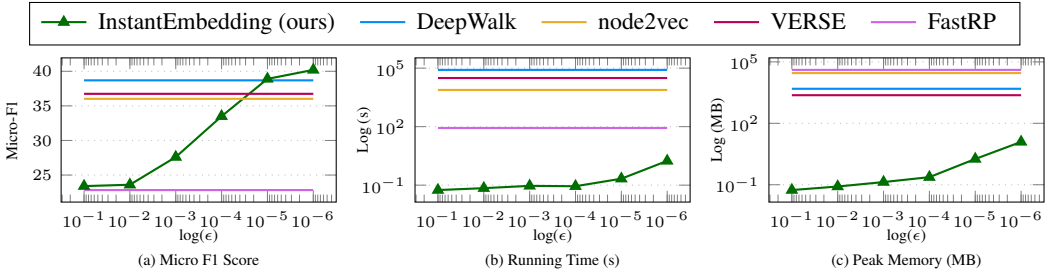
\begin{figure}[h]
\centering
\begin{tikzpicture}
    \begin{groupplot}[
      group style={
        group name=epsilonplots,
        group size=3 by 1,
        horizontal sep=0.075\textwidth,
      },
      every axis/.append style={
            font=\tiny,
        },
        title style={at={(0.5,0.05)},anchor=south,font=\small},
      yminorticks=true,
      max space between ticks=20,
      width=0.375\textwidth,
      height=3.5cm,%3.5cm,
    %   ymode=log,
    %   xmode=log,
      ymajorgrids=true,
      grid style=dotted,
      x label style={at={(axis description cs:0.5,-0.125)},anchor=north},
         y label style={at={(axis description cs:-0.125,.5)},anchor=south},
      every axis title/.style={below,at={(0.5,-0.3)}}
    ]
    
    % Chart for Epsilon-Micro
    \nextgroupplot[xlabel={log($\epsilon$)}, 
                   %ymode=log,
                   xmode=log,
                   x dir=reverse,
                   ylabel={Micro-F1},
                   title={(a) Micro F1 Score},
                   ]
        % InstantEmbed Method.
        \addplot[color=customgreen, thick, mark=triangle*] 
            table [x=epsilon,y=micro] {ie_micro.data.youtube};
        \label{method:InstantEmbedding}
            
        % DeepWalk Method.
        \addplot [thick, color=customblue]
            table [x=epsilon,y=micro] {deepwalk_micro.data.youtube};
        \label{method:DeepWalk}

        % VERSE Method.
        \addplot [thick, color=customred]
            table [x=epsilon,y=micro] {verse_micro.data.youtube};
        \label{method:VERSE}

        % % Node2Vec Method.
        \addplot [thick, color=customyellow]
            table [x=epsilon,y=micro] {node2vec_micro.data.youtube};
        \label{method:Node2Vec}

        % % FastRP Method.
        \addplot [thick, color=custombrown]
            table [x=epsilon,y=micro] {fastrp_micro.data.youtube};
        \label{method:FastRP}
        
    % Chart for Epsilon-Time
    \nextgroupplot[xlabel={log($\epsilon$)}, 
                   ymode=log,
                   xmode=log,
                   x dir=reverse,
                   ylabel={Log (s)},
                   ylabel style={align=center},
                   title={(b) Running Time (s)}
                   %   ylabel shift=-0.75em,
                   ]
        % InstantEmbed Method.
        \addplot[color=customgreen, thick, mark=triangle*] 
            table [x=epsilon,y=time] {ie_time.data.youtube};
        \label{method:InstantEmbedding}
            
        % DeepWalk Method.
        \addplot [thick, color=customblue]
            table [x=epsilon,y=time] {deepwalk_time.data.youtube};
        \label{method:DeepWalk}

        % VERSE Method.
        \addplot [thick, color=customred]
            table [x=epsilon,y=time] {verse_time.data.youtube};
        \label{method:VERSE}

        % % Node2Vec Method.
        \addplot [thick, color=customyellow]
            table [x=epsilon,y=time] {node2vec_time.data.youtube};
        \label{method:Node2Vec}

        % % FastRP Method.
        \addplot [thick, color=custombrown]
            table [x=epsilon,y=time] {fastrp_time.data.youtube};
        \label{method:FastRP}

    % Chart for Epsilon-Memory
    \nextgroupplot[xlabel={log($\epsilon$)}, 
                   ymode=log,
                   xmode=log,
                   x dir=reverse,
                   ylabel={Log (MB)},
                   ylabel style={align=center},
                   title={(c) Peak Memory (MB)}
                   %   ylabel shift=-0.75em,
                   ]
        % InstantEmbed Method.
        \addplot[color=customgreen, thick, mark=triangle*] 
            table [x=epsilon,y=memory] {ie_memory.data.youtube};
        \label{method:InstantEmbedding}
            
        % DeepWalk Method.
        \addplot [thick, color=customblue]
            table [x=epsilon,y=memory] {deepwalk_memory.data.youtube};
        \label{method:DeepWalk}

        % VERSE Method.
        \addplot [thick, color=customred]
            table [x=epsilon,y=memory] {verse_memory.data.youtube};
        \label{method:VERSE}

        % % Node2Vec Method.
        \addplot [thick, color=customyellow]
            table [x=epsilon,y=memory] {node2vec_memory.data.youtube};
        \label{method:Node2Vec}

        % % FastRP Method.
        \addplot [thick, color=custombrown]
            table [x=epsilon,y=memory] {fastrp_memory.data.youtube};
        \label{method:FastRP}
    
    \end{groupplot}
    
    legend
    \path (current bounding box.north)-- coordinate(legendpos)
        (current bounding box.north);
    \matrix[matrix of nodes,
            anchor=south,
            draw,
            inner sep=0.2em,
            draw,
            font=\small]
            at(legendpos)
            {
                \ref{method:InstantEmbedding} & InstantEmbedding (ours) &[5pt]
                \ref{method:DeepWalk} & DeepWalk &[5pt]
                \ref{method:Node2Vec} & node2vec &[5pt]
                \ref{method:VERSE} & VERSE &[5pt]
                \ref{method:FastRP} & FastRP &[5pt] \\
            };

\end{tikzpicture}
\vspace*{-4.5ex}
\caption{The impact of the choice of $\epsilon$ on the quality of the resulting embedding (through the Micro-F1 score), average running time and peak memory increase for the YouTube dataset.}\label{fig:epsilon-youtube-main}
\vspace{-1.5em}
\end{figure}

\textbf{Link prediction.}
We conduct link prediction experiments to assess the capability of the produced representations to model hidden connections in the graph.
For the dataset which has temporal information (CoAuthor),  we select data until 2014 as training data, and split co-authorship links between 2015-2016 in two balanced partitions that we use as validation and test. For the other datasets, we uniformly sample 80\% of the available edges as  training (to learn embeddings on), and use the rest for validation (10\%) and testing (10\%). Over repeated runs, we vary the splits.
More details about the experimental design are available in the supplementary material.
We report results for each method in in Table~\ref{tbl:exp-linkpred}, which shows average ROC-AUC and confidence intervals for each method. Across the datasets, our proposed method beats all baselines except VERSE, however we do achieve the best performance on YouTube by a statistically significant margin.

\textbf{Visualization.}
Figure~\ref{fig:visualization} presents UMAP~\citep{mcinnes2018umap} projections on the CoCit dataset, where we grouped together similar conferences.
We note that our sublinear approach is especially well suited to visualizing graph data, as visualization algorithms only require a small subset of points (typically downsampling to only thousands) to generate a visualization for datasets.

The experimental analysis of node classification, link prediction, and visualization show that despite relying on two different approximations (PPR \& random projection), InstantEmbedding is able to very quickly produce representations which meet or exceed the state of the art in unsupervised representation learning for graph structure, confirming hypothesis \textbf{H2}.
We remark that interestingly InstantEmbedding seems slightly better at node classifications than link prediction.  We suspect that the randomization may effectively act as a regularization which is more useful on classification.

% \begin{wrapfigure}[15]{R}{0.5\textwidth}
% \centering
% % \vspace{.5cm} % fixme
% \includegraphics[width=0.45\textwidth]{images/visualization_ie.png}

% \caption{tSNE projection of various embeddings for the CoCit network of conferences.}
% \label{fig:visualization}
% \end{wrapfigure}

\begin{figure}[ht]
\centering
%\vspace{-1mm}
\begin{subfigure}[t]{0.2\textwidth}
\includegraphics[trim={200px 160px 130px 130px},clip,width=0.95\textwidth]{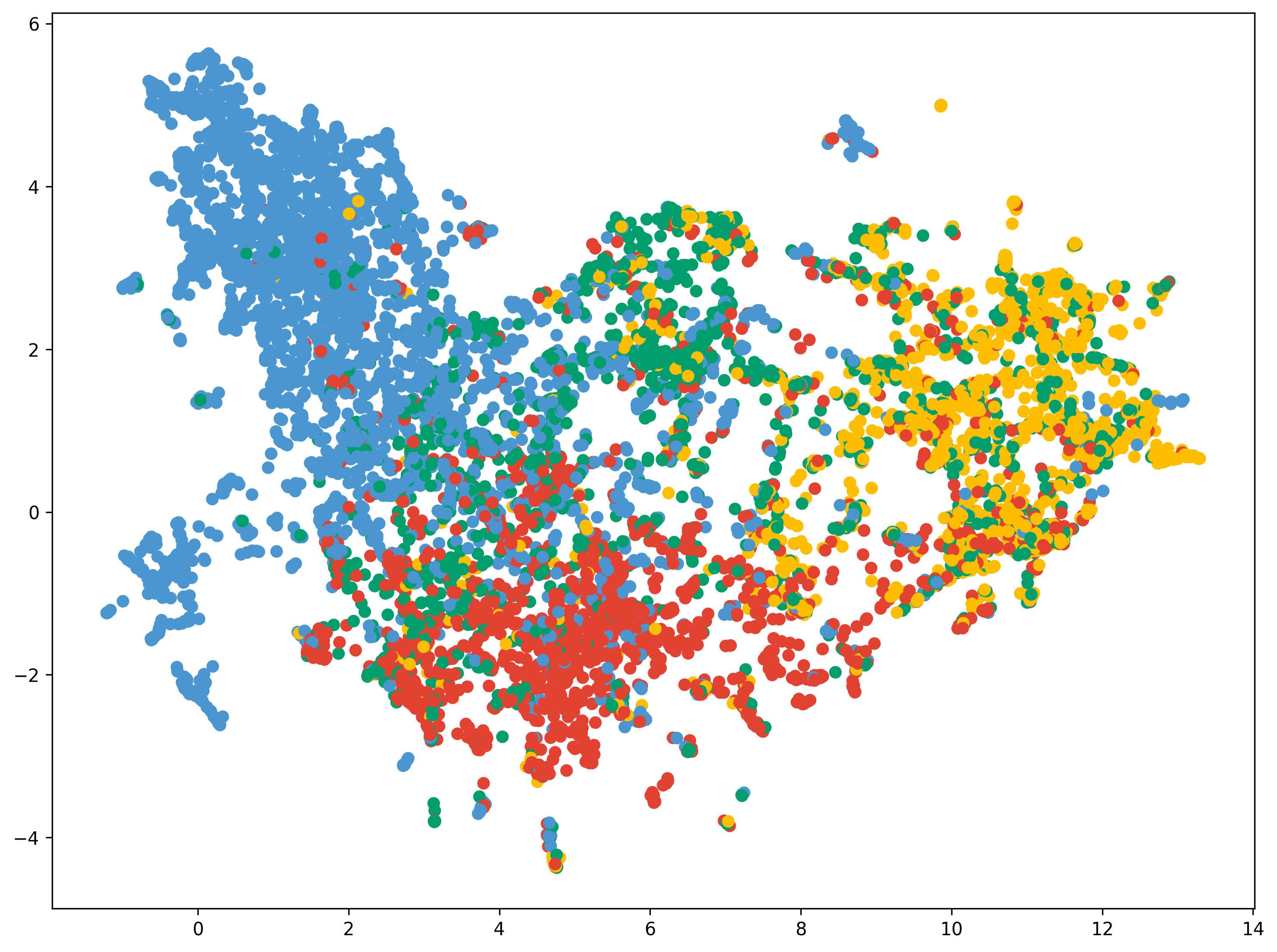}
\caption{DeepWalk}
\end{subfigure}\hfill
\begin{subfigure}[t]{0.2\textwidth}
\includegraphics[trim={200px 160px 130px 130px},clip,width=0.95\textwidth]{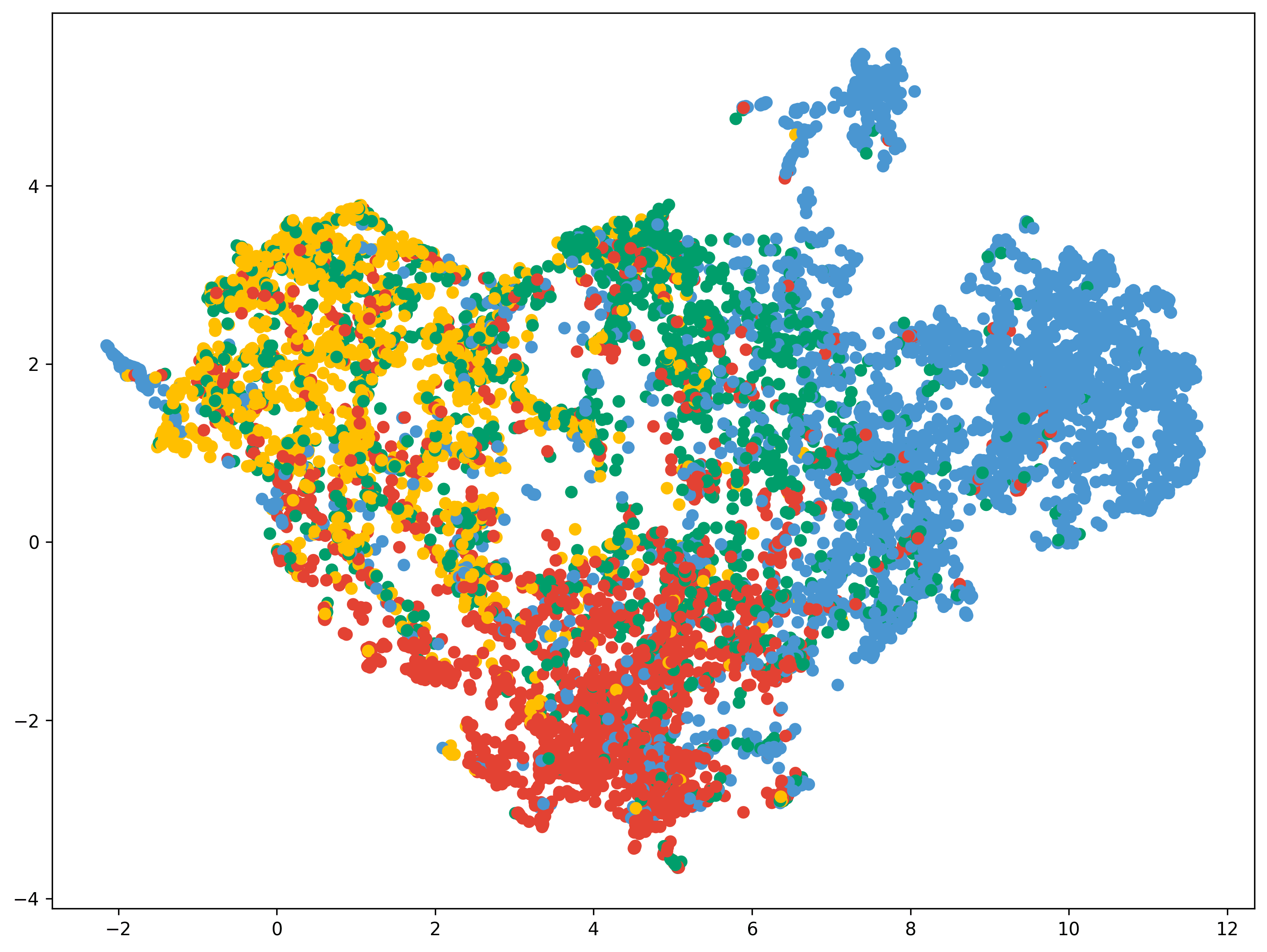}
\caption{VERSE}
\end{subfigure}\hfill
\begin{subfigure}[t]{0.2\textwidth}
\includegraphics[trim={200px 160px 130px 130px},clip,width=0.95\textwidth]{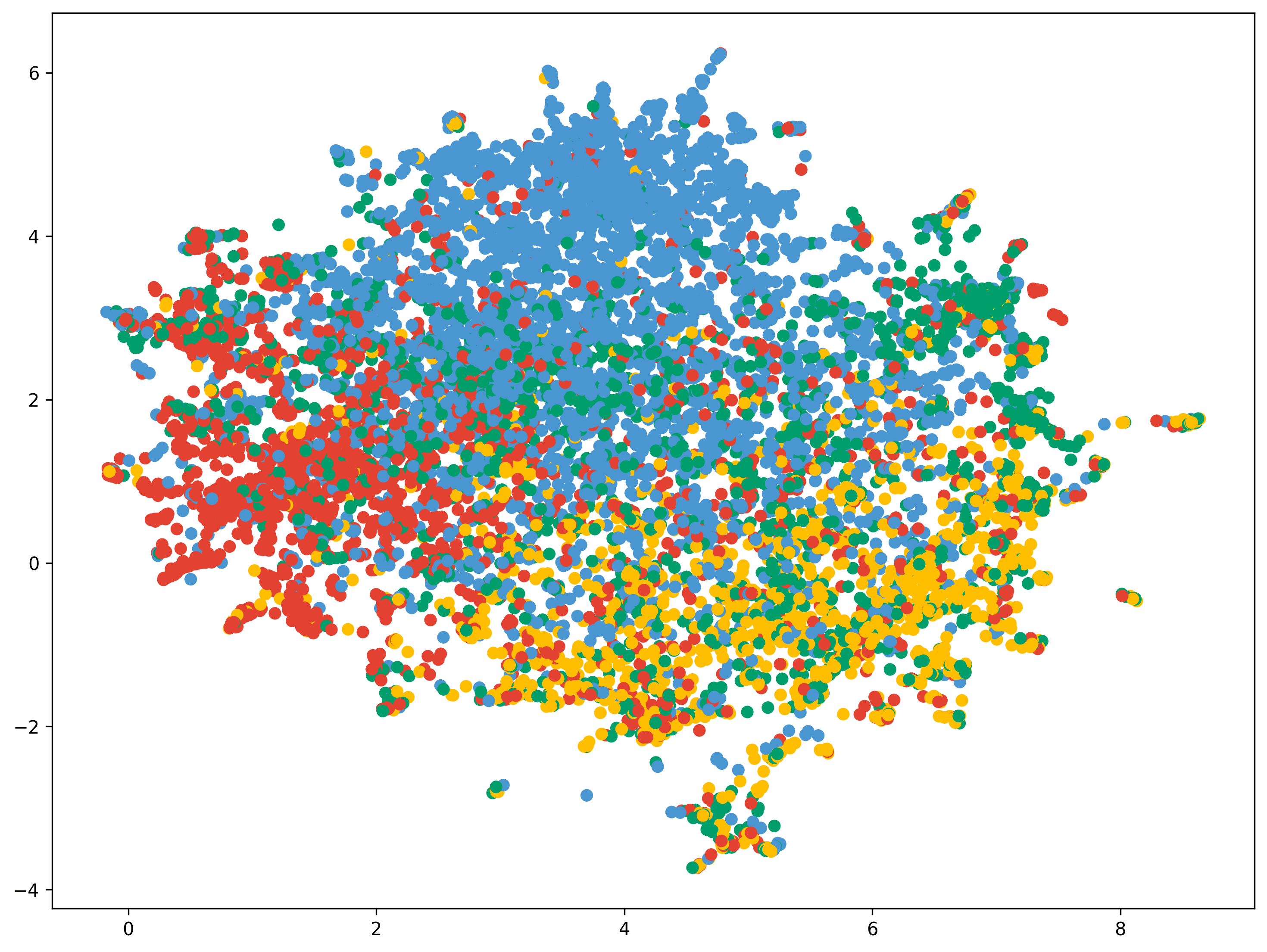}
\caption{FastRP}
\end{subfigure}\hfill
\begin{subfigure}[t]{0.2\textwidth}
\includegraphics[trim={200px 160px 130px 130px},clip,width=0.95\textwidth]{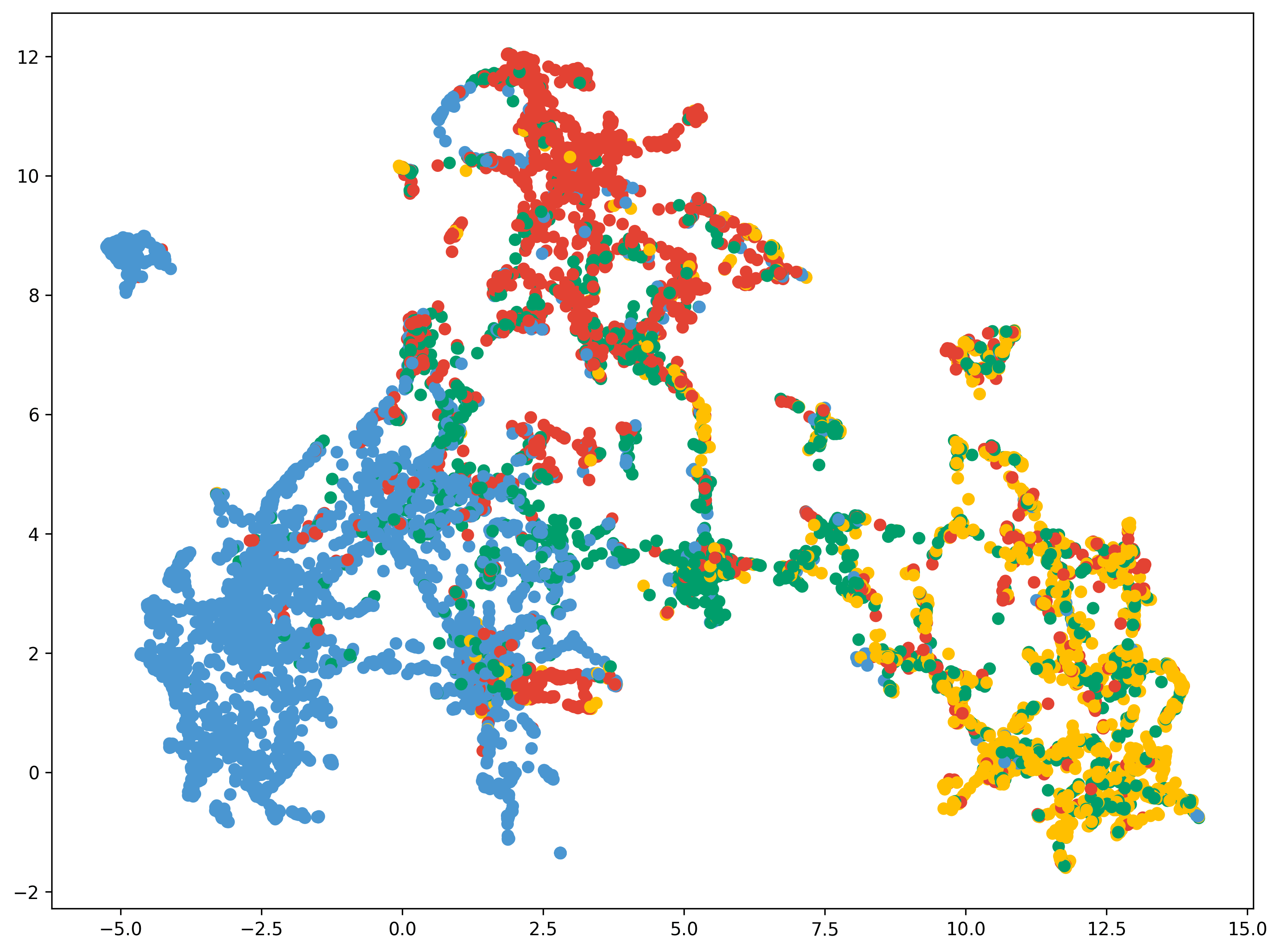}
\caption{\mbox{InstantEmbedding}}
\end{subfigure}
\vspace*{-1ex}
\caption{UMAP visualization of CoCit ($d$=512). Research areas (\legend{color_ML} ML, \legend{color_DM} DM, \legend{color_DB} DB, \legend{color_IR} IR).}\label{fig:visualization}
\vspace{-3mm}
\end{figure}

% \todo{ML - Neurips, icml, iclr, ijcai, CVPR}
% \todo{Data Mining - KDD, ICDM, SDM, ICDE}
% \todo{Data Bases - VLDB, EDBT, SIGMOD}
% \todo{IR - WSDM, CIKM, WWW}
\section{Conclusion}\label{sec:conclusion}
The present work has two main contribution: a) introducing and formally defining the \textit{Local Node Embedding} problem and b) presenting \thiswork, a highly efficient method that selectively embeds nodes using only local information, effectively solving the aforementioned problem. As existing graph embedding methods require accessing the global graph structure at least once during the representation generating process, the novelty brought by \thiswork is especially impactful in real-world scenarios where graphs outgrow the capabilities of a single machine, and annotated data is scarce or expensive to produce. Embedding selectively only the critical subset of nodes for a task makes many more applications feasible in practice, while reducing the costs for others.

Furthermore, we show theoretically that our method embeds a single node in space and time sublinear to the size of the graph. We also empirically prove that \thiswork is capable of surpassing state-of-the-art methods, while being many orders of magnitude faster than them -- our experiments show that we are over 9,000 times faster on large datasets and on a graph over 1 billion edges we can compute a representation in $80$ms.

\bibliography{main}
\bibliographystyle{iclr2021_conference}

\newpage
\appendix
\section*{Appendix}
\section{Proofs}

\subsection{Verse as Matrix Factorization}

\begin{lemma} (restated from~\cite{tsitsulin2020frede}, (ref. Lemma 3.1))
VERSE implicitly factorizes the matrix $\log(\ppr) + \log n - \log b$ into $\mathbf{X}\mathbf{X}^\top$, where $n$ is the number of nodes in the graph and $b$ is the number of negative samples.
\end{lemma}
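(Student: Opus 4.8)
The plan is to follow the standard implicit-matrix-factorization argument for SkipGram-style objectives, treating the VERSE loss of Equation~\ref{eq:verse} as an optimization over the entries of the Gram matrix $\mathbf{X}\mathbf{X}^\top$. First I would introduce the scalar variables $z_{ij} = \xvec_i^\top \xvec_j$, which are exactly the entries $(\mathbf{X}\mathbf{X}^\top)_{ij}$, and rewrite $\mathcal{L}$ as a function of these scalars alone. The only nontrivial rewriting is the negative-sampling term: since $\mathcal{U}$ is the uniform distribution over the $n$ nodes, $\E_{j'\sim\mathcal{U}} \log\sigma(-\xvec_i^\top\xvec_{j'}) = \tfrac{1}{n}\sum_{j'=1}^{n}\log\sigma(-z_{ij'})$, so after collecting, for each ordered pair $(i,j)$, all terms that depend on $z_{ij}$, the objective becomes the separable sum
\[
\mathcal{L} = \sum_{i=1}^{n}\sum_{j=1}^{n}\Big[\ppr_{ij}\log\sigma(z_{ij}) + \tfrac{b}{n}\log\sigma(-z_{ij})\Big].
\]

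Next I would exploit the separability: each summand depends on a single variable $z_{ij}$, and $\log\sigma$ is concave, so $\mathcal{L}$ is concave in each $z_{ij}$ and its global maximizer is found pointwise by setting the partial derivative to zero. Using $\frac{d}{dx}\log\sigma(x) = \sigma(-x)$ and $\frac{d}{dx}\log\sigma(-x) = -\sigma(x)$, the stationarity condition is $\ppr_{ij}\,\sigma(-z_{ij}) = \tfrac{b}{n}\,\sigma(z_{ij})$. Substituting $\sigma(-z_{ij}) = 1 - \sigma(z_{ij})$ and solving the resulting linear equation for $\sigma(z_{ij})$ gives $\sigma(z_{ij}) = \ppr_{ij}/(\ppr_{ij} + b/n)$. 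Taking the logit $z_{ij} = \log\frac{\sigma(z_{ij})}{1-\sigma(z_{ij})}$ then yields $z_{ij} = \log\frac{n\,\ppr_{ij}}{b} = \log\ppr_{ij} + \log n - \log b$. Since $z_{ij}$ is exactly $(\mathbf{X}\mathbf{X}^\top)_{ij}$, reading this off for all $(i,j)$ gives $\mathbf{X}\mathbf{X}^\top = \log(\ppr) + \log n - \log b$ entrywise, which is the claim.

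The main obstacle is the familiar gap between the pointwise optimum and what is realizable by a rank-$d$ Gram matrix: the argument above treats the $z_{ij}$ as free, independent variables, which is exact only when the embedding dimension $d$ is large enough (e.g.\ $d \ge n$) that $\mathbf{X}\mathbf{X}^\top$ can attain the prescribed matrix, and when that target is symmetric positive semidefinite. For the low-dimensional regime $d \ll n$ of interest in practice, the identity holds in the standard \emph{implicit factorization} sense -- the optimizer seeks the best rank-$d$ approximation of this closed-form target -- so I would state the conclusion as characterizing the matrix that VERSE implicitly factorizes rather than as an exactly attained minimizer. I would also flag that $\log(\ppr)$ denotes the entrywise logarithm and is only meaningful where $\ppr_{ij} > 0$, which is precisely what motivates the $\max(\cdot,0)$ thresholding used later in Algorithm~\ref{algorithm:graph_embedding}.
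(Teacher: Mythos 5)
Your proposal is correct and follows essentially the same route as the paper's proof: decompose the negative-sampling expectation over the uniform noise distribution, isolate the separable per-pair loss in $z_{ij} = \xvec_i^\top\xvec_j$, and solve the first-order condition $\ppr_{ij}\,\sigma(-z_{ij}) = \tfrac{b}{n}\,\sigma(z_{ij})$ to obtain $z_{ij} = \log\tfrac{n\,\ppr_{ij}}{b}$. Your added caveats about the rank-$d$ realizability gap and the entrywise logarithm on zero entries are reasonable observations that the paper leaves implicit (it invokes an ``independence assumption'' for the same purpose), but they do not change the argument.
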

\begin{proof} (from~\cite{tsitsulin2020frede}, following~\cite{levy2014neural,qiu2018network})
Since $\ppr$ is right-stochastic and the noise distribution does not depend on $j$ we can decompose the positive and negative terms from the objective of VERSE: 
\[
\mathcal{L} = \sum_{i=1}^{n} \sum_{j=1}^{n} \ppr_{ij} \log \sigma \left(\xvec_{i}^{\top} \xvec_{j}^{\vphantom{\top}}\right) + \frac{b}{n} \sum_{i=1}^{n} \sum_{j\prime=1}^{n} \log \sigma \left(-\xvec_{i}^{\top} \xvec_{j\prime}^{\vphantom{\top}}\right).
\]
\noindent Isolating the loss for a pair of vertices $i, j$:
\[\mathcal{L}_{ij} = \ppr_{ij} \log \sigma \left(\xvec_{i}^{\top} \xvec_{j}^{\vphantom{\top}}\right) + \frac{b}{n} \log \sigma\left(-\xvec_{i}^{\top} \xvec_j^{\vphantom{\top}}\right).
\]
We substitute $z_{ij} = \xvec_{i}^{\top} \xvec_{j}^{\vphantom{\top}}$, use our independence assumption, and solve for $\frac{\partial \mathcal{L}_{ij}}{\partial z_{ij}} = \ppr_{ij} \sigma(- z_{ij}) - \frac{b}{n} \sigma(z_{ij}) \!=\! 0$ to get $z_{ij} \!=\! \log \frac{n \cdot \ppr_{ij}}{b}$, hence
$\log(\ppr) \!+\! \log n \!-\! \log b = \mathbf{X}\mathbf{X}^\top$.
\end{proof}

\subsection{Hash Kernel}

\begin{lemma}\label{alemma:unbiased-hash}
(restated from \cite{weinberger2009feature}) The hash kernel is unbiased:
$$\E_{\hd\sim \mathbb{U}_{d}, \hsgn\sim \mathbb{U}_{-1, 1}} \left[H_{\hd, \hsgn}(\mathbf{x})^{\top}H_{\hd, \hsgn}(\mathbf{x})\right] = \mathbf{x}^{\top}\mathbf{x}$$
\end{lemma}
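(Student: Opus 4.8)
The plan is to expand the quadratic form coordinate-by-coordinate and reduce it to a double sum over index pairs. Writing $H_{\hd,\hsgn}(\mathbf{x})^\top H_{\hd,\hsgn}(\mathbf{x}) = \sum_{i=0}^{d-1}\mathbf{h}_i^2$ and substituting the definition $\mathbf{h}_i = \sum_{k \in \hd^{-1}(i)} \mathbf{x}_k\,\hsgn(k)$, I would square each bucket and sum. The constraint ``both $k$ and $l$ land in bucket $i$'' collapses, once summed over $i$, into the single collision event $\hd(k)=\hd(l)$, yielding
\[
H_{\hd,\hsgn}(\mathbf{x})^\top H_{\hd,\hsgn}(\mathbf{x}) = \sum_{k=1}^{n}\sum_{l=1}^{n} \mathbf{x}_k \mathbf{x}_l\,\hsgn(k)\,\hsgn(l)\,\mathbf{1}[\hd(k)=\hd(l)].
\]

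Next I would take the expectation over $\hd$ and $\hsgn$. Since the families $\mathbb{U}_d$ and $\mathbb{U}_{-1,1}$ are sampled independently, the expectation of the product factors, so each term becomes $\mathbf{x}_k\mathbf{x}_l\,\E[\hsgn(k)\hsgn(l)]\,\Pr[\hd(k)=\hd(l)]$. I would then split the double sum into its diagonal ($k=l$) and off-diagonal ($k\neq l$) parts. On the diagonal, $\hsgn(k)^2 = 1$ because the sign takes values in $\{-1,1\}$, and the collision event holds with probability one; these terms therefore contribute exactly $\sum_k \mathbf{x}_k^2 = \mathbf{x}^\top\mathbf{x}$.

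The crux of the argument is showing that every off-diagonal term vanishes, and this is where the universality of $\mathbb{U}_{-1,1}$ enters: for $k\neq l$ the signs $\hsgn(k)$ and $\hsgn(l)$ are pairwise independent and each uniform on $\{-1,1\}$, so $\E[\hsgn(k)\hsgn(l)] = \E[\hsgn(k)]\,\E[\hsgn(l)] = 0$. I expect this zero-mean/independence property to be the main point to pin down carefully, since it is precisely what characterizes an admissible sign family. A pleasant consequence is that the bucket-collision probability $\Pr[\hd(k)=\hd(l)]$ never needs to be evaluated for $k\neq l$: it multiplies a factor that is already zero, so no assumption on $\mathbb{U}_d$ beyond its independence from $\mathbb{U}_{-1,1}$ is required for unbiasedness. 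Combining the surviving diagonal contribution with the vanishing off-diagonal terms gives $\E[H_{\hd,\hsgn}(\mathbf{x})^\top H_{\hd,\hsgn}(\mathbf{x})] = \mathbf{x}^\top\mathbf{x}$, completing the proof.
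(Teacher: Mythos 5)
Your proof is correct, and it is worth noting that the paper itself gives no proof of this lemma at all --- it is merely restated with a citation to Weinberger et al., and the appendix only proves the matrix corollary by invoking it. Your expansion into a double sum over index pairs, with the diagonal terms surviving and the off-diagonal terms killed by $\E[\hsgn(k)\hsgn(l)]=0$, is exactly the standard argument from the cited reference, so you have correctly filled in the omitted step. One small point to pin down: strict Carter--Wegman universality of $\mathbb{U}_{-1,1}$ only bounds the collision probability by $\nicefrac{1}{2}$ from above, whereas you need the signs to be pairwise independent with uniform marginals (equivalently, collision probability exactly $\nicefrac{1}{2}$) to conclude $\E[\hsgn(k)\hsgn(l)]=0$ for $k\neq l$; you rightly flag this as the defining property of an admissible sign family, and it is the assumption actually made in the feature-hashing literature.
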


\begin{corollary}\label{alemma:hash-space}
(ref. Lemma 3.2) The space complexity of $H_{\hd, \hsgn}$ is $\bigO(1)$ and: 
\[
\E_{\hd\sim \mathbb{U}_{d}, \hsgn\sim \mathbb{U}_{-1, 1}} \left[H_{\hd, \hsgn}(\mathbf{M})H_{\hd, \hsgn}(\mathbf{M})^{\top}\right] = \mathbf{M} \mathbf{M}^{\top}
\]
\end{corollary}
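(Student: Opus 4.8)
The plan is to separate the two claims and reduce the matrix identity to an entry-wise inner-product statement. The space claim is immediate: since $\hd$ and $\hsgn$ are drawn from universal families, each is specified by a constant number of seeds, and by definition $H_{\hd,\hsgn}$ accumulates $\mathbf{h}_i = \sum_{k \in \hd^{-1}(i)} \mathbf{x}_k \hsgn(k)$ by streaming over the nonzero input entries without ever materializing an $\bigO(n)$-sized object, so its working space is $\bigO(1)$. For the expectation identity I would first observe that applying $H_{\hd,\hsgn}$ row-wise makes the $(i,j)$ entry of $H_{\hd,\hsgn}(\mathbf{M})\,H_{\hd,\hsgn}(\mathbf{M})^{\top}$ equal to the inner product of the hashed rows $\langle H_{\hd,\hsgn}(\mathbf{M}_{i,:}),\,H_{\hd,\hsgn}(\mathbf{M}_{j,:})\rangle$, while the corresponding entry of $\mathbf{M}\mathbf{M}^{\top}$ is $\langle \mathbf{M}_{i,:},\,\mathbf{M}_{j,:}\rangle$. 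Since expectation acts entrywise, it suffices to prove the bilinear guarantee
\[
\E_{\hd\sim \mathbb{U}_{d},\, \hsgn\sim \mathbb{U}_{-1, 1}}\!\left[ H_{\hd, \hsgn}(\mathbf{x})^{\top} H_{\hd, \hsgn}(\mathbf{y}) \right] = \mathbf{x}^{\top}\mathbf{y}
\]
for arbitrary $\mathbf{x},\mathbf{y}\in\mathbb{R}^{n}$, then set $\mathbf{x}=\mathbf{M}_{i,:}$, $\mathbf{y}=\mathbf{M}_{j,:}$ and range over all $i,j$.

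To obtain this bilinear statement from the (diagonal) quadratic version in Lemma~\ref{alemma:unbiased-hash}, I would exploit the linearity of the hashing kernel, namely $H_{\hd,\hsgn}(\mathbf{x}+\mathbf{y}) = H_{\hd,\hsgn}(\mathbf{x}) + H_{\hd,\hsgn}(\mathbf{y})$, which is clear from the defining sum, together with the polarization identity $\mathbf{x}^{\top}\mathbf{y} = \tfrac14\!\left(\|\mathbf{x}+\mathbf{y}\|^{2} - \|\mathbf{x}-\mathbf{y}\|^{2}\right)$. Applying polarization to the hashed vectors and using linearity to rewrite $\|H_{\hd,\hsgn}(\mathbf{x})\pm H_{\hd,\hsgn}(\mathbf{y})\|^{2} = \|H_{\hd,\hsgn}(\mathbf{x}\pm\mathbf{y})\|^{2}$, then taking expectations and invoking Lemma~\ref{alemma:unbiased-hash} on each squared-norm term, collapses the right-hand side to $\tfrac14\!\left(\|\mathbf{x}+\mathbf{y}\|^{2} - \|\mathbf{x}-\mathbf{y}\|^{2}\right) = \mathbf{x}^{\top}\mathbf{y}$. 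Assembling entrywise then yields the matrix identity.

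The main obstacle is precisely this step from the diagonal guarantee to the full bilinear one: Lemma~\ref{alemma:unbiased-hash} only controls $\E[\|H_{\hd,\hsgn}(\mathbf{x})\|^{2}]$, so the off-diagonal cross terms in $\langle H_{\hd,\hsgn}(\mathbf{x}), H_{\hd,\hsgn}(\mathbf{y})\rangle$ are not handled directly and must be extracted. Polarization sidesteps an index-by-index calculation, but relies on linearity of $H_{\hd,\hsgn}$ in its input. As an alternative route that makes the cancellation transparent, I would expand the inner product directly as $\sum_{k,k'} \mathbf{x}_k \mathbf{y}_{k'}\, \hsgn(k)\hsgn(k')\, \mathbf{1}[\hd(k)=\hd(k')]$ and take expectations: the diagonal $k=k'$ contributes $\hsgn(k)^{2}=1$ and recovers $\sum_k \mathbf{x}_k\mathbf{y}_k = \mathbf{x}^{\top}\mathbf{y}$, while every $k\neq k'$ term vanishes because $\hsgn$ is symmetric on $\{-1,1\}$ and $\hsgn(k),\hsgn(k')$ are independent (and independent of $\hd$) under $\mathbb{U}_{-1,1}$, so $\E[\hsgn(k)\hsgn(k')]=0$. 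Either route reaches the same conclusion, and I would prefer polarization for brevity since it reuses the already-established Lemma~\ref{alemma:unbiased-hash} as a black box.
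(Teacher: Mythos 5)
Your proof is correct, and at the top level it follows the same route as the paper: both reduce the matrix identity to an entrywise statement $\E\left[\langle H_{\hd,\hsgn}(\mathbf{M}_{i,:}), H_{\hd,\hsgn}(\mathbf{M}_{j,:})\rangle\right] = \langle \mathbf{M}_{i,:}, \mathbf{M}_{j,:}\rangle$, and both dispatch the space claim by noting that a member of a universal family is determined by $\bigO(1)$ seeds (the paper additionally cites the multiplication hash $h^A(x)=\lceil n(xA \bmod 1)\rceil$ as a concrete instance). Where you genuinely diverge is in how the off-diagonal entries $i\neq j$ are handled. The paper simply invokes its restated Lemma~\ref{alemma:unbiased-hash} for every pair $(i,j)$, even though that lemma as written only asserts the quadratic case $\E\left[\lVert H_{\hd,\hsgn}(\mathbf{x})\rVert^{2}\right]=\lVert\mathbf{x}\rVert^{2}$; the bilinear version is implicitly imported from \cite{weinberger2009feature}, where it is indeed proved for two distinct vectors. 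You correctly identify this as the nontrivial step and close it yourself, either by polarization combined with the linearity of $H_{\hd,\hsgn}$ (which reuses the quadratic lemma as a black box), or by the direct expansion $\sum_{k,k'}\mathbf{x}_k\mathbf{y}_{k'}\hsgn(k)\hsgn(k')\mathbf{1}[\hd(k)=\hd(k')]$ with $\E[\hsgn(k)\hsgn(k')]=0$ for $k\neq k'$. Both of your routes are valid (the second tacitly uses pairwise independence of $\hsgn$, which is the standard assumption here), and your write-up is in this respect more self-contained than the paper's: it proves the bilinear guarantee rather than citing a diagonal-only statement for an off-diagonal claim.
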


\begin{proof}
We note that the space complexity required to store a hash function from an universal family is $\bigO(1)$. Indeed, one can choose and universal hash family such that its elements are uniquely determined by a fixed choice of keys. As an example, the multiplication hash function (\cite{cormen2009introduction}) $h^A(x) = \lceil n(xA \; mod \; 1))\rceil$ requires constant memory to store the key $A \in (0, 1)$.

In order to prove the projection provides unbiased dot-products, considering the expectation per each entry, 
we have:
\begin{align*}
  & \E_{\hd\sim \mathbb{U}_{d}, \hsgn\sim \mathbb{U}_{-1, 1}} \left[ \left( H_{\hd, \hsgn}(\mathbf{M})H_{\hd, \hsgn}(\mathbf{M})^{\top}\right) _{i,j} \right] \\
= & \E_{\hd\sim \mathbb{U}_{d}, \hsgn\sim \mathbb{U}_{-1, 1}} \left[ \left( H_{\hd, \hsgn}(\mathbf{M}_i)H_{\hd, \hsgn}(\mathbf{M}_j)^{\top}\right) \right] \\
= & \mathbf{M}_i \mathbf{M}_j^{\top}  \quad\quad\quad \text{From Lemma \ref{alemma:unbiased-hash}} \\
= & \left( \mathbf{M} \mathbf{M}^{\top} \right)_{i,j} \\
\end{align*}
which holds for all $i,j$ pairs.
\end{proof}

\subsection{SparsePersonalizedPageRank Properties}

\begin{algorithm}[h]
\small
\caption{$\textsc{SparsePersonalizedPageRank}$ cf. \cite{andersen2007using}}
\hspace*{\algorithmicindent} \textbf{Input:}  node $v$, graph \emph{G}, precision $\epsilon$, return probability $\alpha$ \\
    \hspace*{\algorithmicindent} \textbf{Output:} PPR vector $\pprv$
\begin{algorithmic}[1]
\Function{$\textsc{SparsePersonalizedPageRank}$}{$v$, \emph{G}, $\epsilon$, $\alpha$}
\State $\mathbf{r} \leftarrow \mathbf{0_n} ~(sparse)$
\State $\pprv \leftarrow \mathbf{0_n} ~(sparse)$
\State $\mathbf{r}[v] = 1$
\While{$\exists ~w \in G, \mathbf{r}[w] > \epsilon \times deg(w)$}
\State $\hat{r} \leftarrow \mathbf{r}[w]$
\State $\pprv[w] \leftarrow \pprv[w] + \alpha \hat{r} $
\State $\mathbf{r}[w] \leftarrow \frac{(1-\alpha)\hat{r}}{2}$
\State $\mathbf{r}[u] \leftarrow \mathbf{r}[u] + \frac{(1-\alpha)\hat{r}}{2 ~deg(w)}, \forall (w,u) \in G$
\EndWhile
\State \Return $\pprv$
\EndFunction
\end{algorithmic}
\label{algorithm:push_flow}
\end{algorithm}

\begin{theorem}\label{atheorem:ppr}
(restated from~\citet{andersen2007using})
Properties of the $\textsc{SparsePersonalizedPageRank}(v, G, \epsilon)$ (\ref{algorithm:push_flow}) algorithm are as follows.
For any starting vector $v$, and any constant $\epsilon\in (0,1]$, the algorithm
computes an $\epsilon$-approximate PersonalizedPageRank vector $p$. Furthermore the support of $p$
satisfies $vol(Supp(p))\leq \bigO\left(\nicefrac{1}{(1-\alpha)\epsilon}\right)$, and the running time of the algorithm is $\bigO(\nicefrac{1}{\alpha\epsilon})$.
\end{theorem}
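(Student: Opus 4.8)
The plan is to follow the analysis of Andersen et al., whose engine is a single \emph{conservation invariant} tying the running estimate to the exact PPR through linearity. Write $\mathrm{ppr}(s)$ for the exact Personalized PageRank vector of a seed distribution $s$ (so $\mathrm{ppr}(e_v)$ is the target), and let $p$ and $r$ denote the algorithm's estimate vector $\pprv$ and residual at any point of the \textsc{while} loop. I would first prove, by induction on the push operations, the invariant $p + \mathrm{ppr}(r) = \mathrm{ppr}(e_v)$. It holds at initialization, where $p=\mathbf{0}$ and $r=e_v$. For the inductive step, a single push at $w$ moves $\alpha\,r[w]$ from the residual into $p[w]$ and redistributes the remaining $(1-\alpha)\,r[w]$ according to one (lazy) step of the random walk; because $\mathrm{ppr}$ is linear in its seed and satisfies $\mathrm{ppr}(x) = \alpha x + (1-\alpha)\mathrm{ppr}(xW)$, the change in $p$ exactly cancels the change in $\mathrm{ppr}(r)$, preserving the invariant. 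Correctness (the first claim) is then immediate: the loop exits only when $r[w]\le \epsilon\,\deg(w)$ for every $w$, and the invariant rewrites as $p=\mathrm{ppr}(e_v)-\mathrm{ppr}(r)=\mathrm{ppr}(e_v-r)$ with $r[w]/\deg(w)\le\epsilon$, which is precisely the definition of an $\epsilon$-approximate PPR vector.

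The two complexity claims I would obtain by the same amortized charging argument, whose currency is the total mass deposited into $p$. Since $r\ge 0$ throughout, the invariant gives $0\le p\le \mathrm{ppr}(e_v)$ coordinatewise, and $\mathrm{ppr}(e_v)$ is a probability distribution, so $\|p\|_1\le 1$ at all times and $\|p\|_1$ is nondecreasing. A push at $w$ requires $r[w]>\epsilon\,\deg(w)$ and therefore increases $\|p\|_1$ by $\alpha\,r[w]>\alpha\epsilon\,\deg(w)$, while costing $\Theta(\deg(w))$ work to touch $w$'s neighbors. Summing over all pushes, $\sum_{\text{pushes at }w}\deg(w) < \tfrac{1}{\alpha\epsilon}\sum_{\text{pushes}}\big(\text{increase in }\|p\|_1\big)\le \tfrac{1}{\alpha\epsilon}$, which bounds both the number of pushes and the total running time by $\bigO(1/(\alpha\epsilon))$; this also certifies that the loop terminates.

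For the support bound I would charge by vertex rather than by push: every $w$ with $p[w]>0$ was pushed at least once, so $p[w]\ge \alpha\epsilon\,\deg(w)$, and summing over the support against $\|p\|_1\le 1$ gives $\mathrm{vol}(\mathrm{Supp}(p))=\sum_{w\in\mathrm{Supp}(p)}\deg(w)\le \tfrac{1}{\alpha\epsilon}\|p\|_1$, matching the stated $\bigO\!\big(1/((1-\alpha)\epsilon)\big)$ order (the exact dependence on $\alpha$ follows the detailed degree-weighted accounting of Andersen et al., which sharpens the $\epsilon$-coefficient by tracking the residual retained at each pushed vertex).

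I expect the main obstacle to be the inductive verification of the invariant, specifically checking that the redistribution step of the push --- retaining half the residual at $w$ and spreading the other half across its neighbors --- is exactly one application of the linear operator $\alpha I + (1-\alpha)W$ composed inside $\mathrm{ppr}$, so that the cancellation is algebraically exact. The subtlety is that a vertex may be pushed many times, so the claim must be stated as a loop invariant preserved by \emph{each} push rather than as a one-shot decomposition; the charging arguments for running time and support then handle repeated pushes uniformly because they meter the monotone, capped potential $\|p\|_1$ rather than the per-vertex history. Once the invariant is in place, both the approximation guarantee and the two $1/\epsilon$-type bounds fall out directly.
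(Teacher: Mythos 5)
First, a point of context: the paper does not actually prove this theorem --- it is restated verbatim from \citet{andersen2007using}, and the paper's only original content here is the remark (Appendix~\ref{app:repa}) that the algorithm's push rule is the \emph{lazy}-walk version, so the bounds transfer to the standard PPR of Equation~\ref{eq:ppr-definition} only after the reparameterization $\alpha \mapsto \nicefrac{\alpha}{2-\alpha}$. Your reconstruction of the Andersen--Chung--Lang analysis is the right skeleton: the invariant $p + \mathrm{ppr}(r) = \mathrm{ppr}(e_v)$, the termination condition giving $\epsilon$-approximation, and the amortized charge of $\Theta(\deg(w))$ work per push against an increase of at least $\alpha\epsilon\deg(w)$ in $\|p\|_1$ (equivalently, a decrease of $\|r\|_1$), capped at $1$, correctly yield the first claim and the $\bigO(\nicefrac{1}{\alpha\epsilon})$ running time. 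Note only that your invariant must be stated with respect to the lazy transition matrix $\tfrac{1}{2}(I + D^{-1}A)$ for the cancellation to be exact with the push rule of Algorithm~\ref{algorithm:push_flow}; you gesture at this with ``(lazy)'' but then write the non-lazy recurrence.

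The genuine gap is in the support bound. Your charging argument --- every $w \in \mathrm{Supp}(p)$ satisfies $p[w] \ge \alpha\epsilon\deg(w)$, summed against $\|p\|_1 \le 1$ --- is valid but proves $\mathrm{vol}(\mathrm{Supp}(p)) \le \nicefrac{1}{\alpha\epsilon}$, which is a \emph{different} bound from the stated $\bigO\left(\nicefrac{1}{(1-\alpha)\epsilon}\right)$ (compare small $\alpha$, where yours blows up and the stated one does not). Deferring ``the exact dependence on $\alpha$'' to unspecified accounting in Andersen et al.\ is exactly the step that cannot be skipped, because the correct argument charges against the \emph{residual}, not the estimate: after the last push at $w$, the retained residual is $r[w] = \tfrac{(1-\alpha)\hat{r}}{2} \ge \tfrac{(1-\alpha)\epsilon\deg(w)}{2}$, and $r[w]$ can only increase afterwards (subsequent pushes at neighbors add mass to it). Summing this lower bound over $\mathrm{Supp}(p)$ against $\|r\|_1 \le 1$ gives $\mathrm{vol}(\mathrm{Supp}(p)) \le \nicefrac{2}{(1-\alpha)\epsilon}$, which is where the $(1-\alpha)$ in the theorem comes from. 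Without switching the charge from $p$ to $r$, the stated support bound is not established.
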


We note here that \cite{andersen2007using} prove their results for the lazy transition matrix and not the standard transition matrix that we consider here. Nevertheless as discussed in Appendix~\ref{app:repa} switching between the two definitions does not change the asymptotic of their results.

\subsection{InstantEmbedding Locality}

\begin{lemma}\label{alemma:space}
(ref. Lemma 3.3) The $\textsc{InstantEmbedding}(v, G, d, \epsilon)$ algorithm computes the local embedding of a node $v$ by exploring at most the $\bigO \left(\nicefrac{1}{(1-\alpha)\epsilon}\right)$ nodes in the neighborhood
of $v$.
\end{lemma}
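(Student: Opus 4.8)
The plan is to reduce the locality bound entirely to the volume guarantee on the support of the Personalized PageRank vector provided by Theorem~\ref{atheorem:ppr}, since the only part of $\textsc{InstantEmbedding}$ (Algorithm~\ref{algorithm:instantembedding}) that inspects the graph is the call to $\textsc{SparsePersonalizedPageRank}$ (Algorithm~\ref{algorithm:push_flow}). First I would observe that the hashing loop of Algorithm~\ref{algorithm:instantembedding} iterates only over the nonzero entries $r_j$ of the sparse vector $\pprv_v$; it performs arithmetic on the embedding accumulator $\mathbf{w}$ and never queries the graph, so it explores no node outside the support of $\pprv_v$. Consequently the set of nodes explored by $\textsc{InstantEmbedding}$ is exactly the set of nodes touched by the push procedure.

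Next I would characterize which nodes $\textsc{SparsePersonalizedPageRank}$ touches. The residual $\mathbf{r}$ is initialized only at $v$, and a node $w$ enters the support of the output $p$ precisely when a push is applied to it, i.e.\ when $\mathbf{r}[w] > \epsilon \deg(w)$; each such push writes to $\mathbf{r}$ at $w$ and at every neighbor $u$ of $w$ (line 9). Therefore the collection of nodes that ever receive a nonzero residual or output value---equivalently, the nodes explored---is contained in $\{v\} \cup \mathrm{Supp}(p) \cup N(\mathrm{Supp}(p))$, where $N(\cdot)$ denotes the set of graph neighbors.

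It then remains to bound the cardinality of this set by the volume $\mathrm{vol}(\mathrm{Supp}(p)) = \sum_{w \in \mathrm{Supp}(p)} \deg(w)$. Since every node receiving a push has degree at least one, $|\mathrm{Supp}(p)| \le \mathrm{vol}(\mathrm{Supp}(p))$; and since each neighbor in $N(\mathrm{Supp}(p))$ is reached through at least one edge incident to a support node, the number of distinct neighbors is at most the total number of such edge endpoints, giving $|N(\mathrm{Supp}(p))| \le \sum_{w \in \mathrm{Supp}(p)} \deg(w) = \mathrm{vol}(\mathrm{Supp}(p))$. Adding these contributions (and the single node $v$) yields that the number of explored nodes is $\bigO(\mathrm{vol}(\mathrm{Supp}(p)))$, which by the support bound of Theorem~\ref{atheorem:ppr} is $\bigO(\nicefrac{1}{(1-\alpha)\epsilon})$.

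The step I expect to be the main obstacle is the second one: pinning down the exact set of explored nodes from the imperative push routine and, in particular, correctly accounting for boundary neighbors that receive residual mass but are never themselves pushed. These neighbors lie outside $\mathrm{Supp}(p)$, so the volume bound of Theorem~\ref{atheorem:ppr} does not apply to them directly, and the crux is the counting argument that converts the volume of the support into an upper bound on the number of its neighbors. A minor edge case to dispatch is that of degree-zero support nodes, for which the inequality $|\mathrm{Supp}(p)| \le \mathrm{vol}(\mathrm{Supp}(p))$ must be argued separately, but such nodes contribute no neighbors and hence no additional explored nodes.
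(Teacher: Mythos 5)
Your proposal is correct and follows essentially the same route as the paper: reduce everything to the support/volume guarantee of Theorem~\ref{atheorem:ppr} for $\textsc{SparsePersonalizedPageRank}$, since the hashing loop never touches the graph. Your treatment is in fact somewhat more careful than the paper's own (which conflates the explored set with the support and glosses over the boundary nodes that receive residual mass but are never pushed); your explicit bound $|N(\mathrm{Supp}(p))| \le \mathrm{vol}(\mathrm{Supp}(p))$ is exactly the missing bookkeeping, and it does not change the asymptotics.
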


\begin{proof}
First recall that the only operation that explores the graph in $\textsc{InstantEmbedding}$ is $\textsc{SparsePersonalizedPageRank}$,
which explores a  node $w$ in the graph if and only if a neighbor of $w$ has positive score and so it is part of the support of $\pprv$. Furthermore at the beginning of the algorithm only $v$ is active and executes a push operation. So it follows that every node explored by the algorithm is connected to $v$ via a path composed only by the nodes with $\pprv$ score strictly larger than $0$. So its distance from $v$ is bounded by the support of the $\pprv$ vector
that is $ O\left(\nicefrac{1}{(1-\alpha)\epsilon}\right)\ $ cf. Theorem \ref{atheorem:ppr}.
\end{proof}

\subsection{InstantEmbedding Complexity}

\begin{theorem}\label{atheorem:time}
(ref. Theorem 3.4) The $\textsc{InstantEmbedding}(v, G, d, \epsilon)$ algorithm has running time $ \bigO \left(d + \nicefrac{1}{\alpha(1-\alpha)\epsilon}\right)$.
\end{theorem}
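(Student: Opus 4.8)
The plan is to split the running time of $\textsc{InstantEmbedding}$ into the costs of its three elementary steps—the single call to $\textsc{SparsePersonalizedPageRank}$ on Line~2, the zero-initialization of the output vector $\mathbf{w}$ on Line~3, and the projection loop on Lines~4--6—bound each independently, and add them. The two nontrivial ingredients I would draw on are Theorem~\ref{atheorem:ppr} (for the PPR routine) and Corollary~\ref{alemma:hash-space} together with Lemma~\ref{alemma:space} (for the loop).

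First, the PPR call: Theorem~\ref{atheorem:ppr} states directly that $\textsc{SparsePersonalizedPageRank}(v,G,\epsilon)$ terminates in time $\bigO(\nicefrac{1}{\alpha\epsilon})$. Second, allocating and zeroing the dense embedding vector $\mathbf{w}\in\mathbb{R}^d$ costs $\bigO(d)$.

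The heart of the argument is the loop. I would stress that it ranges over the \emph{nonzero} coordinates of the sparse vector $\pprv_v$ only, i.e.\ over $Supp(\pprv_v)$. Since a coordinate of $\pprv_v$ can be nonzero only for a node that was actually touched during the push computation, Lemma~\ref{alemma:space} bounds the number of iterations by $\bigO(\nicefrac{1}{(1-\alpha)\epsilon})$. Within each iteration the algorithm evaluates the two hash functions $\hd$ and $\hsgn$, which by Corollary~\ref{alemma:hash-space} are stored and evaluated in $\bigO(1)$, plus a constant amount of arithmetic ($\log$, $\max$, one accumulation); hence the loop costs $\bigO(\nicefrac{1}{(1-\alpha)\epsilon})$ in total.

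Summing the three contributions gives $\bigO\!\left(d + \nicefrac{1}{\alpha\epsilon} + \nicefrac{1}{(1-\alpha)\epsilon}\right)$, and I would finish with the elementary identity
\[
\frac{1}{\alpha\epsilon}+\frac{1}{(1-\alpha)\epsilon}=\frac{1}{\epsilon}\cdot\frac{(1-\alpha)+\alpha}{\alpha(1-\alpha)}=\frac{1}{\alpha(1-\alpha)\epsilon},
\]
which collapses the two PPR-related terms into the claimed $\bigO\!\left(d+\nicefrac{1}{\alpha(1-\alpha)\epsilon}\right)$. The main obstacle is really the bookkeeping in the loop step: one must verify that the iteration touches only the sparse support of $\pprv_v$—never all $n$ coordinates—so that no hidden $\bigO(n)$ term appears and sublinearity is preserved, and that the iteration count is governed by the support bound of Lemma~\ref{alemma:space} rather than by $d$ or $n$. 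Everything else is a direct citation of the two results above followed by the algebra displayed.
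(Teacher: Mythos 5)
Your proposal is correct and follows essentially the same route as the paper's proof: decompose the runtime into the $\bigO(\nicefrac{1}{\alpha\epsilon})$ PPR call (Theorem~\ref{atheorem:ppr}), the $\bigO(d)$ dense-output handling, and the projection loop bounded by the support size $\bigO(\nicefrac{1}{(1-\alpha)\epsilon})$ with $\bigO(1)$ work per nonzero entry, then sum. Your only addition is spelling out the algebraic identity $\nicefrac{1}{\alpha\epsilon}+\nicefrac{1}{(1-\alpha)\epsilon}=\nicefrac{1}{\alpha(1-\alpha)\epsilon}$, which the paper asserts without showing.
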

\begin{proof}
The first step of the $\textsc{InstantEmbedding}$ is computing the approximate Personalized PageRank vector. As noted in Theorem \ref{atheorem:ppr}, this can be done in time $\bigO \left(\nicefrac{1}{\alpha\epsilon}\right)$.

We now focus our attention to the second part of our algorithm, projecting the sparse PPR vector into the embedding space. For each non-zero entry $r_j$ of the PPR vector $\pprv$, we compute hash functions $h_d(j)$, $h_{sgn}(j)$ and $\max(\log(r_j * n), 0)$ in $\bigO(1)$ time. The total number of iterations is equal to the support size of $\pprv$, i.e. $\bigO \left(\nicefrac{1}{(1-\alpha)\epsilon}\right)$.

Finally, we note that our algorithm always generates a dense embedding, handling this variable in $O(d)$ time complexity. However, in practice this term is negligible as $\nicefrac{1}{e} >> d$. Summing up the aforementioned bounds we get the total running time of our algorithm:
$$ \bigO \left(d + \nicefrac{1}{\alpha\epsilon} + \nicefrac{1}{(1-\alpha)\epsilon} \right) = \bigO \left(d + \nicefrac{1}{\alpha(1-\alpha)\epsilon}\right)$$

\end{proof}

\subsection{Global Consistency}

\begin{theorem}\label{atheorem:consistency}
(ref. Theorem 3.5) \\$\textsc{InstantEmbedding}(v, G, d, \epsilon)$~output~equals~$\textsc{GraphEmbedding}(G, d, \epsilon)_v$.
\end{theorem}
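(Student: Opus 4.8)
The plan is to prove the equality by showing that the two algorithms perform exactly the same arithmetic when producing the embedding of $v$, which rests on two facts: that they operate on the same Personalized PageRank vector $\pprv_v$, and that the hashing projection of $\textsc{GraphEmbedding}$ separates into independent per-row computations. First I would fix the instantiation of $\textsc{CreatePPRMatrix}$ that InstantEmbedding uses, namely one independent call to $\textsc{SparsePersonalizedPageRank}$ per node. Under this instantiation the $v$-th row $\ppr_{v,:}$ of the matrix built inside $\textsc{GraphEmbedding}$ is, by construction, the output of $\textsc{SparsePersonalizedPageRank}(v,G,\epsilon)$, which is precisely the vector $\pprv_v$ computed on line 2 of $\textsc{InstantEmbedding}$.

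Next I would analyze the projection loop of $\textsc{GraphEmbedding}$. The update $\mW_{i,\hd(j)} \mathrel{+}= \hsgn(j)\times\max(\log(r_j n),0)$ writes only into row $i$ and reads its values $r_j$ only from $\pprv_i$; consequently the rows of $\mW$ are assembled completely independently, and row $v$ of the output is a function of $\pprv_v$ and the global hash functions $\hd,\hsgn$ alone. This separability is the key structural observation, and it lets me isolate the row-$v$ subcomputation of $\textsc{GraphEmbedding}$ from the rest of the matrix.

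Finally I would line the two up. Starting from $\mathbf{0}_d$, iterating over the (sparse) entries $r_j$ of $\pprv_v$, and accumulating $\hsgn(j)\times\max(\log(r_j n),0)$ into coordinate $\hd(j)$ is exactly what lines 3--5 of $\textsc{InstantEmbedding}$ compute; since both start from the same zero vector, consume the same $\pprv_v$, and call the same fixed hash functions, their accumulators agree coordinate by coordinate, giving $\mathbf{w}=\mW_{v,:}$. The one point demanding care — and the main obstacle — is justifying that $\pprv_v$ is literally the same vector in both runs rather than merely another valid $\epsilon$-approximation returned by a different push ordering; I would resolve this by appealing to the fact that $\textsc{SparsePersonalizedPageRank}$ is invoked deterministically and independently on node $v$ in both algorithms, so the two runs coincide and no approximation ambiguity remains.
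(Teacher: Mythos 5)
Your proposal is correct and follows essentially the same route as the paper's proof: both arguments rest on the determinism of $\textsc{SparsePersonalizedPageRank}$ (so that row $v$ of the matrix built by $\textsc{GraphEmbedding}$ is literally the vector $\pprv_v$ computed by $\textsc{InstantEmbedding}$) together with the observation that the hashing projection decomposes into independent per-row accumulations driven only by $\pprv_v$ and the fixed hash functions $\hd,\hsgn$. Your explicit attention to the possibility that a different push ordering might return a different valid $\epsilon$-approximation is a slightly more careful phrasing of the paper's opening remark that the routine ``will compute an unique vector for a given node,'' but it is the same idea.
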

\begin{proof}
We begin by noting that for a fixed parameterization, the $\textsc{SparsePersonalizedPageRank}$ routine will compute an unique vector for a given node. Analyzing now the $\mW_{v,j}$ entry of the embedding matrix generated by $\textsc{GraphEmbedding}(G, d, \epsilon)$, we have:
$$\mW_{v,j} = \sum_{r_k \in \pprv_v} h_{sgn}(k) \times \max(\log(r_k * n), 0) \mathbb{I}[h_d(k)=j]$$
The entire computation is deterministic and directly dependent only on the hash functions of choice and the indexing of the graph. By fixing the two hash functions $h_d$ and $h_{sgn}$, we also have that $\mW_{v,j} = \mathbf{w}^v_j$ where $\mathbf{w}^v = \textsc{InstantEmbedding}(v, G, d, \epsilon)$, $\forall v \in [0..n-1], j \in [0..d-1]$.
\end{proof}

\subsection{Reparameterization}\label{app:repa}

We note that \cite{andersen2007using} in their paper use a lazy random walk transition matrix. Furthermore in their analysis they also consider a lazy random walk. Nevertheless, this does not affect the asymptotic of their results, in fact in Proposition 8.1 in~\cite{andersen2007using} they show that the two definition are equivalent up to a small change in $\alpha$. More precisely, a standard personalized PageRank with decay factor $\alpha$ is equivalent to a lazy random walk with decay factor $\frac{\alpha}{2-\alpha}$. So all the asymptotic of the bounds in~\cite{andersen2007using} apply also to the classic random walk setting that we study in this paper.

\section{Experiments}

\subsection{Methods Descriptions}\label{asec:methods-desc}

We ran all baselines on 128 and 512 embedding dimensions. As we expect our method to perform better as we increase the projection size, we performed an auxiliary test with embedding size 2048 for \thiswork. We also make the observation that learning-based methods generally do not scale well with an increase of the embedding space. The following are the description and individual parameterization for each method.   

\begin{itemize}
    \item \textbf{DeepWalk} \citep{perozzi2014deepwalk}: Constructs node-contexts from random-walks and learns representations by increasing the nodes co-occurrence likelihood by modeling the posterior distribution with hierarchical softmax. We set the number of walks per node and their length to $80$, and context windows size to $10$.
    \item \textbf{Node2Vec} \citep{grover2016node2vec}: Samples random paths in the graph similar to DeepWalk, while adding two parameters, $p$ and $q$, controlling the behaviour of the walk. Estimates the likelihood through negative sampling. We set again the number of walks per node and their length to $80$ and windows size $10$, number of negative samples to $5$ per node and $p=q=1$.
    \item \textbf{Verse} \citep{tsitsulin2018verse}: Minimizes objective through gradient descent, by sampling nodes from PPR random walks and negatives samples from a noise distribution. We train it over $10^5$ epochs and set the stopping probability to $0.15$.
    \item \textbf{FastRP} \citep{chen2019fast}: Computes a high-order similarity matrix as a linear combination of multiple-steps transitions matrices and projects it into an embedding space through a sparse random matrix. We fix the linear coefficients to $[0, 0, 1, 6]$ and the normalization parameter $-0.65$.
    \item \textbf{\thiswork} (this work): Approximate per-node PPR vectors with return probability $\alpha$ and precision $\epsilon$, which are projected into the embedding space using two fixed hash functions. In all our experiments, we set $\alpha = 0.15$ and $\epsilon > \frac{1}{n}$, where $n$ is the number of nodes in the graph.
\end{itemize}

\subsection{Dataset Descriptions}

The graph datasets we used in our experiments are as follows:

\begin{itemize}
\item \emph{PPI} \citep{stark2006biogrid}: Subgraph of the protein-protein interaction for Homo Sapiens species and ground-truth labels represented by biological states. Data originally processed by \cite{grover2016node2vec}. 

\item \emph{Blogcatalog} \citep{tang2010social}: Network of social interactions between bloggers. Authors specify categories for their blog, which we use as labels.

\item \emph{Microsoft Academic Graph (MAG)} \citep{mag2016}: Collection of scientific papers, authors, journals and conferences. Two distinct subgraphs were originally processed by \cite{tsitsulin2018verse}, based on co-authorship (CoAuthor) and co-citations (CoCit) relations. For the latter one, labels are represented by the unique conference where the paper was published. 

\item \emph{Flickr} \citep{tang2010social}: Contact network of users within 195 randomly sampled interest groups. 

\item \emph{YouTube} \citep{tang2010social} Social network of users on the video-sharing platform. Labels are represented by group of interests with at least 500 subscribers.

\item \emph{Amazon2M} \citep{clustergcn}: Network of products where edges are represented by co-purchased relations.

\item \emph{Orkut} \citep{yang2015defining}: Social network where users can create and join groups, used at ground-truth labels. We followed the approach of \cite{tsitsulin2018verse} and selected only the top 50 largest groups. 

\item \emph{Friendster} \citep{yang2015defining}: Social network where users can form friendship edge each other. It also allows users form a group which other members can then join.

\end{itemize}

\subsection{Runtime Analysis}\label{apprendix:runtime}

\subsubsection{General Setup}
For the runtime analysis we use the same parameterization as described in \ref{asec:methods-desc} for all methods. In the special case of InstantEmbedding, we dynamically load into memory just the required subgraph in order to approximate the PPR vector for a single node. We individually ran each method on a virtual machine hosted on the Google Cloud Platform, with a 2.3GHz 16-core CPU and 128GB of RAM.

\clearpage
\subsubsection{Runtime: Speed}
All methods, except the single-threaded FastRP, leveraged the 16 cores of our machines. Some methods did not complete all the tasks: none ran on Friendster; node2vec was unable to run on Amazon2M and Orkut; FastRP did not run on Orkut specifically for a 512-dimension embedding. We note that all reported numbers are real execution times, taking into account also loading the data in memory. When computing InstantEmbedding for multiple nodes, the average time-per node is expected to decrease, as less I/O operations are required.  The detailed results are shown in Table \ref{tbl:runtime-statistics-time}.

\begin{table}[!b]
\vspace{-3mm}
\setlength{\tabcolsep}{3.5pt}
\small
\centering{
\newcolumntype{R}{>{\raggedleft\arraybackslash}X}
\newcolumntype{C}{>{\centering\arraybackslash}X}
\newcolumntype{S}{>{\hsize=.5\hsize}C}
\caption{Average run time (in seconds) to generate a 128-size and a 512-size node embedding for each method and each dataset with the respective standard deviation. Each experiment was run 5 times for all the methods (given their global property) except for \thiswork for which we ran the experiment 1000 times (given the method’s locality property).
\\ \textbf{bold} - improvement over the baselines; \textit{DNC} - Did Not Complete.}
\label{tbl:runtime-statistics-time}
%\tiny
\begin{tabularx}{1\linewidth}{p{2cm}>{\centering}m{0.75cm}>{\centering}m{2.5cm}CCCC}
\toprule
    &   & InstantEmbedding & DeepWalk & node2vec & VERSE & FastRP\\

\midrule

\multirow{4.5}{*}{PPI} & \multirow{2}{*}{128} &
% InstantEmbedding
\textbf{0.00735} & 
% DeepWalk
92.74 & 
% node2vec
12.90 & 
% VERSE
40.05 & 
% FastRP
1.42 \\

& &
% InstantEmbedding
\scriptsize{$\pm$ \emph{0.00130}} & 
% DeepWalk
\scriptsize{$\pm$ \emph{0.68}}  & 
% node2vec
\scriptsize{$\pm$ \emph{0.26}}  &
% VERSE
\scriptsize{$\pm$ \emph{0.08}} & 
% FastRP
\scriptsize{$\pm$ \emph{0.02}}
\\[+0.4em]

%\cdashline{2-7}[2pt/5pt]\\[-0.6em]

& \multirow{2}{*}{512} &
% InstantEmbedding
\textbf{0.00751} & 
% DeepWalk
254.31 & 
% node2vec
24.82 & 
% VERSE
87.53 & 
% FastRP
1.81 \\

& &
% InstantEmbedding
\scriptsize{$\pm$ \emph{0.00137}} & 
% DeepWalk
\scriptsize{$\pm$ \emph{7.68}} & 
% node2vec
\scriptsize{$\pm$ \emph{0.17}} & 
% VERSE
\scriptsize{$\pm$ \emph{0.24}} & 
% FastRP
\scriptsize{$\pm$ \emph{0.02}} \\

\cmidrule(lr){2-7}

\multirow{4.5}{*}{BlogCatalog} & \multirow{2}{*}{128} &
% InstantEmbedding
\textbf{0.00627} & 
% DeepWalk
349.66 & 
% node2vec
37.10 & 
% VERSE
109.15 & 
% FastRP
3.03 \\

& &
% InstantEmbedding
\scriptsize{$\pm$ \emph{0.00221}} & 
% DeepWalk
\scriptsize{$\pm$ \emph{30.03}}  & 
% node2vec
\scriptsize{$\pm$ \emph{0.19}}  &
% VERSE
\scriptsize{$\pm$ \emph{1.20}} & 
% FastRP
\scriptsize{$\pm$ \emph{0.08}}
\\[+0.4em]

%\cdashline{2-7}[2pt/5pt]\\[-0.6em]

& \multirow{2}{*}{512} &
% InstantEmbedding
\textbf{0.00826} & 
% DeepWalk
711.76 & 
% node2vec
67.86 & 
% VERSE
198.75 & 
% FastRP
5.62 \\

& &
% InstantEmbedding
\scriptsize{$\pm$ \emph{0.00436}} & 
% DeepWalk
\scriptsize{$\pm$ \emph{17.81}} & 
% node2vec
\scriptsize{$\pm$ \emph{0.11}} & 
% VERSE
\scriptsize{$\pm$ \emph{1.68}} & 
% FastRP
\scriptsize{$\pm$ \emph{0.15}} \\

\cmidrule(lr){2-7}

\multirow{4.5}{*}{CoCit} & \multirow{2}{*}{128} &
% InstantEmbedding
\textbf{0.01993} & 
% DeepWalk
1,015.44 & 
% node2vec
149.53 & 
% VERSE
427.06 & 
% FastRP
3.51 \\

& &
% InstantEmbedding
\scriptsize{$\pm$ \emph{0.01042}} & 
% DeepWalk
\scriptsize{$\pm$ \emph{3.23}}  & 
% node2vec
\scriptsize{$\pm$ \emph{1.14}}  &
% VERSE
\scriptsize{$\pm$ \emph{4.23}} & 
% FastRP
\scriptsize{$\pm$ \emph{0.12}}
\\[+0.4em]

%\cdashline{2-7}[2pt/5pt]\\[-0.6em]

& \multirow{2}{*}{512} &
% InstantEmbedding
\textbf{0.02019} & 
% DeepWalk
2,766.99 & 
% node2vec
280.35 & 
% VERSE
904.53 & 
% FastRP
7.21 \\

& &
% InstantEmbedding
\scriptsize{$\pm$ \emph{0.01048}} & 
% DeepWalk
\scriptsize{$\pm$ \emph{5.71}} & 
% node2vec
\scriptsize{$\pm$ \emph{0.82}} & 
% VERSE
\scriptsize{$\pm$ \emph{7.89}} & 
% FastRP
\scriptsize{$\pm$ \emph{0.72}} \\

\cmidrule(lr){2-7}

\multirow{4.5}{*}{CoAuthor} & \multirow{2}{*}{128} &
% InstantEmbedding
\textbf{0.01612} & 
% DeepWalk
1,334.55 & 
% node2vec
189.30 & 
% VERSE
468.47 & 
% FastRP
2.71 \\

& &
% InstantEmbedding
\scriptsize{$\pm$ \emph{0.00733}} & 
% DeepWalk
\scriptsize{$\pm$ \emph{10.84}}  & 
% node2vec
\scriptsize{$\pm$ \emph{11.78}}  &
% VERSE
\scriptsize{$\pm$ \emph{1.88}} & 
% FastRP
\scriptsize{$\pm$ \emph{0.02}}
\\[+0.4em]

%\cdashline{2-7}[2pt/5pt]\\[-0.6em]

& \multirow{2}{*}{512} &
% InstantEmbedding
\textbf{0.01630} & 
% DeepWalk
3,561.27 & 
% node2vec
339.01 & 
% VERSE
1,029.88 & 
% FastRP
5.50 \\

& &
% InstantEmbedding
\scriptsize{$\pm$ \emph{0.00761}} & 
% DeepWalk
\scriptsize{$\pm$ \emph{27.86}} & 
% node2vec
\scriptsize{$\pm$ \emph{1.04}} & 
% VERSE
\scriptsize{$\pm$ \emph{9.96}} & 
% FastRP
\scriptsize{$\pm$ \emph{0.08}} \\

\cmidrule(lr){2-7}

\multirow{4.5}{*}{Flickr} & \multirow{2}{*}{128} &
% InstantEmbedding
\textbf{0.02042} & 
% DeepWalk
2,519.22 & 
% node2vec
564.71 & 
% VERSE
1,038.87 & 
% FastRP
38.41 \\

& &
% InstantEmbedding
\scriptsize{$\pm$ \emph{0.01140}} & 
% DeepWalk
\scriptsize{$\pm$ \emph{121.60}}  & 
% node2vec
\scriptsize{$\pm$ \emph{5.01}}  &
% VERSE
\scriptsize{$\pm$ \emph{11.27}} & 
% FastRP
\scriptsize{$\pm$ \emph{0.42}}
\\[+0.4em]

%\cdashline{2-7}[2pt/5pt]\\[-0.6em]

& \multirow{2}{*}{512} &
% InstantEmbedding
\textbf{0.02051} & 
% DeepWalk
6,035.50 & 
% node2vec
802.64 & 
% VERSE
1,863.41 & 
% FastRP
79.88 \\

& &
% InstantEmbedding
\scriptsize{$\pm$ \emph{0.01128}} & 
% DeepWalk
\scriptsize{$\pm$ \emph{102.25}} & 
% node2vec
\scriptsize{$\pm$ \emph{4.95}} & 
% VERSE
\scriptsize{$\pm$ \emph{39.82}} & 
% FastRP
\scriptsize{$\pm$ \emph{2.00}} \\

\cmidrule(lr){2-7}

\multirow{4.5}{*}{YouTube} & \multirow{2}{*}{128} &
% InstantEmbedding
\textbf{0.06065} & 
% DeepWalk
27,249.93 & 
% node2vec
4,301.05 & 
% VERSE
16,618.20 & 
% FastRP
30.44 \\

& &
% InstantEmbedding
\scriptsize{$\pm$ \emph{0.04521}} & 
% DeepWalk
\scriptsize{$\pm$ \emph{1,383.18}}  & 
% node2vec
\scriptsize{$\pm$ \emph{21.36}}  &
% VERSE
\scriptsize{$\pm$ \emph{282.96}} & 
% FastRP
\scriptsize{$\pm$ \emph{1.14}}
\\[+0.4em]

%\cdashline{2-7}[2pt/5pt]\\[-0.6em]

& \multirow{2}{*}{512} &
% InstantEmbedding
\textbf{0.06128} & 
% DeepWalk
81,168.81 & 
% node2vec
7,600.46 & 
% VERSE
31,101.92 & 
% FastRP
85.52 \\

& &
% InstantEmbedding
\scriptsize{$\pm$ \emph{0.04534}} & 
% DeepWalk
\scriptsize{$\pm$ \emph{2,752.42}} & 
% node2vec
\scriptsize{$\pm$ \emph{64.14}} & 
% VERSE
\scriptsize{$\pm$ \emph{121.03}} & 
% FastRP
\scriptsize{$\pm$ \emph{4.81}} \\

\cmidrule(lr){2-7}

\multirow{4.5}{*}{Amazon2M} & \multirow{2}{*}{128} &
% InstantEmbedding
\textbf{0.09746} & 
% DeepWalk
63,525.32 & 
% node2vec
\multirow{2}{*}{\textit{DNC}} & 
% VERSE
38,627.77 & 
% FastRP
450.84 \\

& &
% InstantEmbedding
\scriptsize{$\pm$ \emph{0.05306}} & 
% DeepWalk
\scriptsize{$\pm$ \emph{164.83}}  & 
% node2vec
 &
% VERSE
\scriptsize{$\pm$ \emph{4,058.04}} & 
% FastRP
\scriptsize{$\pm$ \emph{21.07}}
\\[+0.4em]

%\cdashline{2-7}[2pt/5pt]\\[-0.6em]

& \multirow{2}{*}{512} &
% InstantEmbedding
\textbf{0.09715} & 
% DeepWalk
173,966.97 & 
% node2vec
\multirow{2}{*}{\textit{DNC}} & 
% VERSE
73,993.64 & 
% FastRP
940.88 \\

& &
% InstantEmbedding
\scriptsize{$\pm$ \emph{0.05187}} & 
% DeepWalk
\scriptsize{$\pm$ \emph{1,094.66}} & 
% node2vec
 & 
% VERSE
\scriptsize{$\pm$ \emph{2,110.29}} & 
% FastRP
\scriptsize{$\pm$ \emph{31.16}} \\

\cmidrule(lr){2-7}

\multirow{4.5}{*}{Orkut} & \multirow{2}{*}{128} &
% InstantEmbedding
\textbf{0.17192} & 
% DeepWalk
94,217.21 & 
% node2vec
\multirow{2}{*}{\textit{DNC}} & 
% VERSE
50,516.07 & 
% FastRP
843.46 \\

& &
% InstantEmbedding
\scriptsize{$\pm$ \emph{0.04782}} & 
% DeepWalk
\scriptsize{$\pm$ \emph{1,121.94}}  & 
% node2vec
 &
% VERSE
\scriptsize{$\pm$ \emph{4,082.24}} & 
% FastRP
\scriptsize{$\pm$ \emph{17.69}}
\\[+0.4em]

%\cdashline{2-7}[2pt/5pt]\\[-0.6em]

& \multirow{2}{*}{512} &
% InstantEmbedding
\textbf{0.17231} & 
% DeepWalk
219,003.92 & 
% node2vec
\multirow{2}{*}{\textit{DNC}} & 
% VERSE
84,468.50 & 
% FastRP
\multirow{2}{*}{\textit{DNC}} \\

& &
% InstantEmbedding
\scriptsize{$\pm$ \emph{0.04806}} & 
% DeepWalk
\scriptsize{$\pm$ \emph{781.12}} & 
% node2vec
 & 
% VERSE
\scriptsize{$\pm$ \emph{3,407.44}} & 
% FastRP
 \\

\cmidrule(lr){2-7}

\multirow{4.5}{*}{Friendster} & \multirow{2}{*}{128} &
% InstantEmbedding
\textbf{0.07910} & 
% DeepWalk
\multirow{2}{*}{\textit{DNC}} & 
% node2vec
\multirow{2}{*}{\textit{DNC}} & 
% VERSE
\multirow{2}{*}{\textit{DNC}} & 
% FastRP
\multirow{2}{*}{\textit{DNC}} \\

& &
% InstantEmbedding
\scriptsize{$\pm$ \emph{0.04084}} & 
% DeepWalk
 & 
% node2vec
 &
% VERSE
 & 
% FastRP

\\[+0.4em]

%\cdashline{2-7}[2pt/5pt]\\[-0.6em]

& \multirow{2}{*}{512} &
% InstantEmbedding
\textbf{0.07930} & 
% DeepWalk
\multirow{2}{*}{\textit{DNC}} & 
% node2vec
\multirow{2}{*}{\textit{DNC}} & 
% VERSE
\multirow{2}{*}{\textit{DNC}} & 
% FastRP
\multirow{2}{*}{\textit{DNC}} \\

& &
% InstantEmbedding
\scriptsize{$\pm$ \emph{0.04090}} & 
% DeepWalk
 & 
% node2vec
 & 
% VERSE
 & 
% FastRP
 \\

\bottomrule
\end{tabularx}}
\end{table}

\subsubsection{Task: Memory Usage}
The methods that failed to complete in the \textit{Running Times} section are also marked here accordingly. We note that due to the local nature of our method, we can consistently keep the average memory usage under 1MB for all datasets. This observation reinforces the sublinear guarantees of our algorithm when being within a good $\epsilon$-regime, as stated in Lemma \ref{alemma:space}.  The detailed results are shown in Table \ref{tbl:runtime-statistics-memory}.

\begin{table}[h]
\vspace{-3mm}
\setlength{\tabcolsep}{3.5pt}
\small
\centering{
\newcolumntype{R}{>{\raggedleft\arraybackslash}X}
\newcolumntype{C}{>{\centering\arraybackslash}X}
\newcolumntype{S}{>{\hsize=.5\hsize}C}
\caption{Peak memory used (in MB) to generate a 128-size and 512-size node embedding for each method and each dataset. Each experiment was run once for all the methods (given their global property) except for \thiswork for which we ran the experiment 1000 times (given the method’s locality property) and report the mean peak memory consumption with the respective standard deviation.
\\ \textbf{bold} - improvement over the baselines; \textit{DNC} - Did Not Complete.}
\label{tbl:runtime-statistics-memory}
%\tiny
\begin{tabularx}{1\linewidth}{p{2cm}>{\centering}m{0.75cm}>{\centering}m{2.5cm}CCCC}
\toprule
    &   & InstantEmbedding & DeepWalk & node2vec & VERSE & FastRP\\

\midrule

\multirow{2.5}{*}{PPI} & 128 &
% InstantEmbedding
\textbf{0.1692} \scriptsize{$\pm$ \emph{0.0214}} & 
% DeepWalk
4.80 & 
% node2vec
54.02 & 
% VERSE
2.40 & 
% FastRP
68.17 \\[+0.2em]

%\cdashline{2-7}[2pt/5pt]\\[-0.7em]

& 512 &
% InstantEmbedding
\textbf{0.1707} \scriptsize{$\pm$ \emph{0.0211}} & 
% DeepWalk
16.75 & 
% node2vec
65.98 & 
% VERSE
8.39 & 
% FastRP
197.67 \\

\cmidrule(lr){2-7}

\multirow{2.5}{*}{BlogCatalog} & 128 &
% InstantEmbedding
\textbf{0.2040} \scriptsize{$\pm$ \emph{0.0788}} & 
% DeepWalk
14.54 & 
% node2vec
2,970.00 & 
% VERSE
8.08 & 
% FastRP
150.47 \\[+0.2em]

%\cdashline{2-7}[2pt/5pt]\\[-0.7em]

& 512 &
% InstantEmbedding
\textbf{0.2140} \scriptsize{$\pm$ \emph{0.0871}} & 
% DeepWalk
46.21 & 
% node2vec
3,000.00 & 
% VERSE
23.92 & 
% FastRP
504.65 \\

\cmidrule(lr){2-7}

\multirow{2.5}{*}{CoCit} & 128 &
% InstantEmbedding
\textbf{0.2697} \scriptsize{$\pm$ \emph{0.0848}} & 
% DeepWalk
52.27 & 
% node2vec
148.93 & 
% VERSE
24.38 & 
% FastRP
438.27 \\[+0.2em]

%\cdashline{2-7}[2pt/5pt]\\[-0.7em]

& 512 &
% InstantEmbedding
\textbf{0.2780} \scriptsize{$\pm$ \emph{0.0692}} & 
% DeepWalk
187.54 & 
% node2vec
284.20 & 
% VERSE
92.01 & 
% FastRP
1,660.00 \\

\cmidrule(lr){2-7}

\multirow{2.5}{*}{CoAuthor} & 128 &
% InstantEmbedding
\textbf{0.1778} \scriptsize{$\pm$ \emph{0.0592}} & 
% DeepWalk
61.13 & 
% node2vec
120.56 & 
% VERSE
28.25 & 
% FastRP
465.47 \\[+0.2em]

%\cdashline{2-7}[2pt/5pt]\\[-0.7em]

& 512 &
% InstantEmbedding
\textbf{0.1803} \scriptsize{$\pm$ \emph{0.0642}} & 
% DeepWalk
220.32 & 
% node2vec
279.75 & 
% VERSE
107.85 & 
% FastRP
1,770.00 \\

\cmidrule(lr){2-7}

\multirow{2.5}{*}{Flickr} & 128 &
% InstantEmbedding
\textbf{0.4138} \scriptsize{$\pm$ \emph{0.1525}} & 
% DeepWalk
140.33 & 
% node2vec
69,860.00 & 
% VERSE
88.83 & 
% FastRP
1,080.00 \\[+0.2em]

%\cdashline{2-7}[2pt/5pt]\\[-0.7em]

& 512 &
% InstantEmbedding
\textbf{0.4451} \scriptsize{$\pm$ \emph{0.1729}} & 
% DeepWalk
387.67 & 
% node2vec
70,110.00 & 
% VERSE
212.50 & 
% FastRP
3,830.00 \\

\cmidrule(lr){2-7}

\multirow{2.5}{*}{YouTube} & 128 &
% InstantEmbedding
\textbf{0.5902} \scriptsize{$\pm$ \emph{0.2407}} & 
% DeepWalk
1,360.00 & 
% node2vec
24,910.00 & 
% VERSE
611.48 & 
% FastRP
10,240.00 \\[+0.2em]

%\cdashline{2-7}[2pt/5pt]\\[-0.7em]

& 512 &
% InstantEmbedding
\textbf{0.5456} \scriptsize{$\pm$ \emph{0.2642}} & 
% DeepWalk
4,860.00 & 
% node2vec
28,410.00 & 
% VERSE
2,360.00 & 
% FastRP
40,610.00 \\

\cmidrule(lr){2-7}

\multirow{2.5}{*}{Amazon2M} & 128 &
% InstantEmbedding
\textbf{0.6321} \scriptsize{$\pm$ \emph{0.3122}} & 
% DeepWalk
3,380.00 & 
% node2vec
\textit{DNC} & 
% VERSE
1,760.00 & 
% FastRP
26,440.00 \\[+0.2em]

%\cdashline{2-7}[2pt/5pt]\\[-0.7em]

& 512 &
% InstantEmbedding
\textbf{0.6778} \scriptsize{$\pm$ \emph{0.2862}} & 
% DeepWalk
10,910.00 & 
% node2vec
\textit{DNC} & 
% VERSE
5,520.00 & 
% FastRP
125,870.00 \\

\cmidrule(lr){2-7}

\multirow{2.5}{*}{Orkut} & 128 &
% InstantEmbedding
\textbf{0.9124} \scriptsize{$\pm$ \emph{0.0672}} & 
% DeepWalk
4,560.00 & 
% node2vec
\textit{DNC} & 
% VERSE
2,520.00 & 
% FastRP
35,940.00 \\[+0.2em]

%\cdashline{2-7}[2pt/5pt]\\[-0.7em]

& 512 &
% InstantEmbedding
\textbf{0.8884} \scriptsize{$\pm$ \emph{0.1224}} & 
% DeepWalk
14,000.00 & 
% node2vec
\textit{DNC} & 
% VERSE
7,240.00 & 
% FastRP
\textit{DNC} \\

\cmidrule(lr){2-7}

\multirow{2.5}{*}{Friendster} & 128 &
% InstantEmbedding
\textbf{0.6818} \scriptsize{$\pm$ \emph{0.2476}} & 
% DeepWalk
\textit{DNC} & 
% node2vec
\textit{DNC} & 
% VERSE
\textit{DNC} & 
% FastRP
\textit{DNC} \\[+0.2em]

%\cdashline{2-7}[2pt/5pt]\\[-0.7em]

& 512 &
% InstantEmbedding
\textbf{0.7892} \scriptsize{$\pm$ \emph{0.1753}} & 
% DeepWalk
\textit{DNC} & 
% node2vec
\textit{DNC} & 
% VERSE
\textit{DNC} & 
% FastRP
\textit{DNC} \\

\bottomrule
\end{tabularx}}
\end{table}

\clearpage
\subsection{Embedding Quality}

\subsubsection{Quality: Node Classification}\label{appendix:classification}

These tasks aim to measure the semantic information preserved by the embeddings, through the means of the generalization capacity of a simple classifier, trained on a small fraction of labeled representations. 
All methods use 512 embedding dimensions.  
For each methods, we perform three different splits of the data, and for our method we generate five embeddings, each time sampling a different projection matrix. 
We use a logistic regression (LR) classifier from using Scikit-Learn \citep{scikit-learn} to train the classifiers.  In the case of multi-class classification (we follow \cite{perozzi2014deepwalk} and use a one-vs-rest LR ensemble) -- we assume the number of correct labels K is known and select the top K probabilities from the ensemble.
To simulate the sparsity of labels in the real-wold, we train on $10\%$ of the available labels for PPI and Blogcatalog and only $1\%$ for the rest of them, while testing on the rest.

We treat CoCit as a multi-class problem as each paper is associated an unique conference were it was published. Also, for Orkut we follow the approach from \cite{tsitsulin2018verse} and select only the top $50$ largest communities, while further filtering nodes belonging to more than one community. 
In these two cases, are fitting a simply logistic regression model on the available labeled nodes. The other datasets have multiple labels per node, and we are using a One-vs-The-Rest ensemble. When evaluating, we assume the number of correct labels, $K$, is known and select the top $K$ probabilities from the ensemble.
For each methods, we are performing three different splits of the data, and for our method we generate five embeddings, sampling different projection matrices.

We report the average and 90\% confidence interval for micro and macro F1-scores  at different fractions of known labels.

The following datasets are detailed for node classification: PPI (Table \ref{appendix:classification-ppi}), BlogCatalog (Table \ref{appendix:classification-blogcatalog}), CoCit (Table \ref{appendix:classification-cocit}), Flickr (Table \ref{appendix:classification-flickr}), and YouTube (Table \ref{appendix:classification-youtube}).

\begin{table}[h]
\setlength{\tabcolsep}{3.5pt}
\small
\centering{
\caption{Classification micro and macro F1-scores for PPI.}
\label{appendix:classification-ppi}
\newcolumntype{R}{>{\raggedleft\arraybackslash}X}
\newcolumntype{C}{>{\centering\arraybackslash}X}
\newcolumntype{S}{>{\hsize=.5\hsize}C}
\begin{tabularx}{\linewidth}{p{1.7cm}p{0.7cm}CCCCCC}
\toprule
\multicolumn{2}{c}{} & \multicolumn{6}{c}{\emph{Labeled Nodes}}  \\
\cmidrule(lr){3-8}
& & \multicolumn{2}{c}{1.00\%} & \multicolumn{2}{c}{5.00\%} & \multicolumn{2}{c}{9.00\%} \\
\cmidrule(lr){3-4}\cmidrule(lr){5-6}\cmidrule(lr){7-8}
\emph{Method} & $d$ & Micro-F1 & Macro-F1 & Micro-F1 & Macro-F1 & Micro-F1 & Macro-F1 \\
\midrule
\multirow{2}{*}{DeepWalk} & 128 & 15.72  \scriptsize{$\pm$ \emph{1.75}} & 12.56  \scriptsize{$\pm$ \emph{1.84}} & 21.34  \scriptsize{$\pm$ \emph{1.20}} & 18.59  \scriptsize{$\pm$ \emph{1.40}} & 24.44  \scriptsize{$\pm$ \emph{0.32}} & 20.36  \scriptsize{$\pm$ \emph{2.74}} \\
 & 512 & 16.08  \scriptsize{$\pm$ \emph{0.64}} & 12.89  \scriptsize{$\pm$ \emph{1.66}} & 19.90  \scriptsize{$\pm$ \emph{1.02}} & 18.08  \scriptsize{$\pm$ \emph{1.11}} & 21.51  \scriptsize{$\pm$ \emph{5.75}} & 20.36  \scriptsize{$\pm$ \emph{5.05}} \\
\cmidrule(lr){2-8}
\multirow{2}{*}{node2vec} & 128 & 15.65  \scriptsize{$\pm$ \emph{1.46}} & 12.07  \scriptsize{$\pm$ \emph{1.23}} & 20.97  \scriptsize{$\pm$ \emph{1.26}} & 17.86  \scriptsize{$\pm$ \emph{0.85}} & 23.99  \scriptsize{$\pm$ \emph{5.84}} & 19.05  \scriptsize{$\pm$ \emph{2.25}} \\
 & 512 & 15.03  \scriptsize{$\pm$ \emph{3.18}} & 12.19  \scriptsize{$\pm$ \emph{2.34}} & 21.04  \scriptsize{$\pm$ \emph{1.90}} & 18.11  \scriptsize{$\pm$ \emph{2.13}} & 22.02  \scriptsize{$\pm$ \emph{1.14}} & 18.18  \scriptsize{$\pm$ \emph{3.47}} \\
\cmidrule(lr){2-8}
\multirow{2}{*}{VERSE} & 128 & 14.41  \scriptsize{$\pm$ \emph{1.40}} & 11.56  \scriptsize{$\pm$ \emph{1.37}} & 19.63  \scriptsize{$\pm$ \emph{1.08}} & 16.95  \scriptsize{$\pm$ \emph{1.61}} & 22.01  \scriptsize{$\pm$ \emph{2.66}} & 18.71  \scriptsize{$\pm$ \emph{0.61}} \\
 & 512 & 12.59  \scriptsize{$\pm$ \emph{2.54}} & 9.54  \scriptsize{$\pm$ \emph{2.22}} & 13.62  \scriptsize{$\pm$ \emph{0.88}} & 11.67  \scriptsize{$\pm$ \emph{0.85}} & 16.00  \scriptsize{$\pm$ \emph{0.26}} & 13.66  \scriptsize{$\pm$ \emph{0.53}} \\
\cmidrule(lr){2-8}
\multirow{2}{*}{FastRP} & 128 & 11.73  \scriptsize{$\pm$ \emph{2.37}} & 7.24  \scriptsize{$\pm$ \emph{1.49}} & 16.76  \scriptsize{$\pm$ \emph{0.70}} & 11.03  \scriptsize{$\pm$ \emph{1.05}} & 19.45  \scriptsize{$\pm$ \emph{3.10}} & 11.70  \scriptsize{$\pm$ \emph{2.98}} \\
 & 512 & 15.74  \scriptsize{$\pm$ \emph{2.19}} & 11.11  \scriptsize{$\pm$ \emph{1.20}} & 21.19  \scriptsize{$\pm$ \emph{2.25}} & 15.72  \scriptsize{$\pm$ \emph{1.37}} & 21.52  \scriptsize{$\pm$ \emph{5.31}} & 16.63  \scriptsize{$\pm$ \emph{1.87}} \\
\midrule
\multirow{3}{*}{\thisworkml} & 128 & 15.88  \scriptsize{$\pm$ \emph{1.36}} & 11.67  \scriptsize{$\pm$ \emph{1.09}} & 20.51  \scriptsize{$\pm$ \emph{0.70}} & 16.89  \scriptsize{$\pm$ \emph{0.93}} & 21.82  \scriptsize{$\pm$ \emph{2.47}} & 17.49  \scriptsize{$\pm$ \emph{2.36}} \\
 & 512 & 17.67  \scriptsize{$\pm$ \emph{1.22}} & 13.04  \scriptsize{$\pm$ \emph{1.06}} & 23.50  \scriptsize{$\pm$ \emph{0.97}} & 19.84  \scriptsize{$\pm$ \emph{1.34}} & 25.36  \scriptsize{$\pm$ \emph{2.32}} & 21.21  \scriptsize{$\pm$ \emph{2.92}} \\
 & 2048 & \textbf{18.77}  \scriptsize{$\pm$ \emph{1.22}} & \textbf{13.76}  \scriptsize{$\pm$ \emph{1.41}} & \textbf{24.30}  \scriptsize{$\pm$ \emph{0.67}} & \textbf{20.44}  \scriptsize{$\pm$ \emph{0.85}} & \textbf{25.85}  \scriptsize{$\pm$ \emph{2.91}} & \textbf{22.03}  \scriptsize{$\pm$ \emph{3.84}} \\
\bottomrule
\end{tabularx}}
\end{table}
\begin{table}[!t]
\setlength{\tabcolsep}{3.5pt}
\small
\centering{
\caption{Classification micro and macro F1-scores for Blogcatalog.}
\label{appendix:classification-blogcatalog}
\newcolumntype{R}{>{\raggedleft\arraybackslash}X}
\newcolumntype{C}{>{\centering\arraybackslash}X}
\newcolumntype{S}{>{\hsize=.5\hsize}C}
\begin{tabularx}{\linewidth}{p{1.7cm}p{0.7cm}CCCCCC}
\toprule
\multicolumn{2}{c}{} & \multicolumn{6}{c}{\emph{Labeled Nodes}}  \\
\cmidrule(lr){3-8}
 & & \multicolumn{2}{c}{10.00\%} & \multicolumn{2}{c}{50.00\%} & \multicolumn{2}{c}{90.00\%} \\
\cmidrule(lr){3-4}\cmidrule(lr){5-6}\cmidrule(lr){7-8}
\emph{Method} & $d$ & Micro-F1 & Macro-F1 & Micro-F1 & Macro-F1 & Micro-F1 & Macro-F1 \\
\midrule
\multirow{2}{*}{DeepWalk} & 128 & 36.05  \scriptsize{$\pm$ \emph{0.85}} & 20.91  \scriptsize{$\pm$ \emph{0.79}} & 41.07  \scriptsize{$\pm$ \emph{1.05}} & 26.85  \scriptsize{$\pm$ \emph{0.96}} & 42.69  \scriptsize{$\pm$ \emph{1.49}} & 28.87  \scriptsize{$\pm$ \emph{4.61}} \\
 & 512 & 32.48  \scriptsize{$\pm$ \emph{0.35}} & 18.69  \scriptsize{$\pm$ \emph{1.17}} & 37.88  \scriptsize{$\pm$ \emph{0.61}} & 25.38  \scriptsize{$\pm$ \emph{0.85}} & 40.14  \scriptsize{$\pm$ \emph{4.03}} & 26.11  \scriptsize{$\pm$ \emph{6.42}} \\
%\midrule
\cmidrule(lr){2-8}
\multirow{2}{*}{node2vec} & 128 & 33.63  \scriptsize{$\pm$ \emph{0.96}} & 15.28  \scriptsize{$\pm$ \emph{0.99}} & 37.18  \scriptsize{$\pm$ \emph{0.82}} & 20.02  \scriptsize{$\pm$ \emph{0.44}} & 38.34  \scriptsize{$\pm$ \emph{3.62}} & 21.26  \scriptsize{$\pm$ \emph{1.37}} \\
 & 512 & 33.67  \scriptsize{$\pm$ \emph{0.93}} & 16.24  \scriptsize{$\pm$ \emph{1.11}} & 37.42  \scriptsize{$\pm$ \emph{1.40}} & 21.43  \scriptsize{$\pm$ \emph{0.73}} & 38.98  \scriptsize{$\pm$ \emph{4.70}} & 21.94  \scriptsize{$\pm$ \emph{1.49}} \\
%\midrule
\cmidrule(lr){2-8}
\multirow{2}{*}{VERSE} & 128 & 32.57  \scriptsize{$\pm$ \emph{0.96}} & 18.67  \scriptsize{$\pm$ \emph{1.46}} & 38.66  \scriptsize{$\pm$ \emph{0.88}} & 25.0  \scriptsize{$\pm$ \emph{1.37}} & 39.47  \scriptsize{$\pm$ \emph{1.34}} & 26.64  \scriptsize{$\pm$ \emph{1.08}} \\
 & 512 & 24.64  \scriptsize{$\pm$ \emph{0.85}} & 12.33  \scriptsize{$\pm$ \emph{1.58}} & 29.27  \scriptsize{$\pm$ \emph{0.41}} & 18.48  \scriptsize{$\pm$ \emph{0.88}} & 33.18  \scriptsize{$\pm$ \emph{2.51}} & 21.11  \scriptsize{$\pm$ \emph{2.60}} \\
%\midrule
\cmidrule(lr){2-8}
\multirow{2}{*}{FastRP} & 128 & 28.68  \scriptsize{$\pm$ \emph{0.35}} & 12.74  \scriptsize{$\pm$ \emph{1.23}} & 31.22  \scriptsize{$\pm$ \emph{1.34}} & 14.78  \scriptsize{$\pm$ \emph{0.53}} & 31.61  \scriptsize{$\pm$ \emph{1.90}} & 15.34  \scriptsize{$\pm$ \emph{3.27}} \\
 & 512 & 33.54  \scriptsize{$\pm$ \emph{0.96}} & 17.83  \scriptsize{$\pm$ \emph{1.90}} & 36.94  \scriptsize{$\pm$ \emph{1.08}} & 21.49  \scriptsize{$\pm$ \emph{0.38}} & 37.62  \scriptsize{$\pm$ \emph{2.66}} & 22.26  \scriptsize{$\pm$ \emph{2.98}} \\
\midrule
\multirow{3}{*}{\thisworkml} & 128 & 27.99  \scriptsize{$\pm$ \emph{1.20}} & 13.72  \scriptsize{$\pm$ \emph{1.49}} & 32.40  \scriptsize{$\pm$ \emph{1.23}} & 18.77  \scriptsize{$\pm$ \emph{1.40}} & 33.40  \scriptsize{$\pm$ \emph{2.95}} & 19.94  \scriptsize{$\pm$ \emph{3.30}} \\
 & 512 & 33.36  \scriptsize{$\pm$ \emph{1.11}} & 17.37  \scriptsize{$\pm$ \emph{1.61}} & 37.76  \scriptsize{$\pm$ \emph{1.37}} & 23.79  \scriptsize{$\pm$ \emph{1.61}} & 39.33  \scriptsize{$\pm$ \emph{3.45}} & 26.14  \scriptsize{$\pm$ \emph{3.07}} \\
 & 2048 & \textbf{36.05}  \scriptsize{$\pm$ \emph{1.66}} & 19.01  \scriptsize{$\pm$ \emph{1.93}} & \textbf{41.42}  \scriptsize{$\pm$ \emph{1.49}} & \textbf{27.16}  \scriptsize{$\pm$ \emph{1.96}} & 42.46  \scriptsize{$\pm$ \emph{4.35}} & \textbf{29.00}  \scriptsize{$\pm$ \emph{3.94}} \\
\bottomrule
\end{tabularx}}
\end{table}
\begin{table}[!t]
\setlength{\tabcolsep}{3.5pt}
\small
\centering{
\caption{Classification micro and macro F1-scores for CoCit.}
\label{appendix:classification-cocit}
\newcolumntype{R}{>{\raggedleft\arraybackslash}X}
\newcolumntype{C}{>{\centering\arraybackslash}X}
\newcolumntype{S}{>{\hsize=.5\hsize}C}
\begin{tabularx}{\linewidth}{p{1.7cm}p{0.7cm}CCCCCC}
\toprule
\multicolumn{2}{c}{} & \multicolumn{6}{c}{\emph{Labeled Nodes}}  \\
\cmidrule(lr){3-8}
 & & \multicolumn{2}{c}{1.00\%} & \multicolumn{2}{c}{5.00\%} & \multicolumn{2}{c}{9.00\%} \\
\cmidrule(lr){3-4}\cmidrule(lr){5-6}\cmidrule(lr){7-8}
\emph{Method} & $d$ & Micro-F1 & Macro-F1 & Micro-F1 & Macro-F1 & Micro-F1 & Macro-F1 \\
\midrule
\multirow{2}{*}{DeepWalk} & 128 & 36.51  \scriptsize{$\pm$ \emph{0.85}} & 27.54  \scriptsize{$\pm$ \emph{1.26}} & 41.52  \scriptsize{$\pm$ \emph{0.03}} & 29.85  \scriptsize{$\pm$ \emph{1.31}} & 43.21  \scriptsize{$\pm$ \emph{0.61}} & 30.31  \scriptsize{$\pm$ \emph{0.50}} \\
 & 512 & 37.44  \scriptsize{$\pm$ \emph{0.67}} & 26.57  \scriptsize{$\pm$ \emph{0.76}} & 39.41  \scriptsize{$\pm$ \emph{1.11}} & 29.92  \scriptsize{$\pm$ \emph{0.79}} & 40.95  \scriptsize{$\pm$ \emph{0.82}} & 31.48  \scriptsize{$\pm$ \emph{0.91}} \\
\cmidrule(lr){2-8}
\multirow{2}{*}{node2vec} & 128 & 37.55  \scriptsize{$\pm$ \emph{0.99}} & 26.38  \scriptsize{$\pm$ \emph{0.88}} & 42.92  \scriptsize{$\pm$ \emph{0.55}} & 31.12  \scriptsize{$\pm$ \emph{0.41}} & 43.94  \scriptsize{$\pm$ \emph{0.61}} & 32.03  \scriptsize{$\pm$ \emph{0.20}} \\
 & 512 & 38.35  \scriptsize{$\pm$ \emph{1.75}} & 27.71  \scriptsize{$\pm$ \emph{1.17}} & 42.53  \scriptsize{$\pm$ \emph{0.26}} & 31.05  \scriptsize{$\pm$ \emph{0.50}} & 43.99  \scriptsize{$\pm$ \emph{0.32}} & 32.14  \scriptsize{$\pm$ \emph{0.38}} \\
\cmidrule(lr){2-8}
\multirow{2}{*}{VERSE} & 128 & 38.52  \scriptsize{$\pm$ \emph{0.47}} & 28.17  \scriptsize{$\pm$ \emph{1.20}} & 41.68  \scriptsize{$\pm$ \emph{0.96}} & 31.14  \scriptsize{$\pm$ \emph{0.26}} & 43.47  \scriptsize{$\pm$ \emph{0.26}} & 32.22  \scriptsize{$\pm$ \emph{0.53}} \\
 & 512 & 38.22  \scriptsize{$\pm$ \emph{1.34}} & 27.42  \scriptsize{$\pm$ \emph{0.91}} & 38.03  \scriptsize{$\pm$ \emph{0.58}} & 29.50  \scriptsize{$\pm$ \emph{0.88}} & 38.88  \scriptsize{$\pm$ \emph{0.61}} & 31.04  \scriptsize{$\pm$ \emph{0.82}} \\
\cmidrule(lr){2-8}
\multirow{2}{*}{FastRP} & 128 & 15.97  \scriptsize{$\pm$ \emph{0.55}} & 4.18  \scriptsize{$\pm$ \emph{0.29}} & 16.74  \scriptsize{$\pm$ \emph{0.64}} & 4.31  \scriptsize{$\pm$ \emph{0.47}} & 16.62  \scriptsize{$\pm$ \emph{0.35}} & 4.17  \scriptsize{$\pm$ \emph{0.29}} \\
 & 512 & 18.88  \scriptsize{$\pm$ \emph{1.28}} & 6.63  \scriptsize{$\pm$ \emph{0.47}} & 26.82  \scriptsize{$\pm$ \emph{1.23}} & 9.17  \scriptsize{$\pm$ \emph{0.26}} & 27.91  \scriptsize{$\pm$ \emph{0.99}} & 8.79  \scriptsize{$\pm$ \emph{0.38}} \\
\midrule
\multirow{3}{*}{\thisworkml} & 128 & 38.19  \scriptsize{$\pm$ \emph{1.07}} & 25.29  \scriptsize{$\pm$ \emph{1.14}} & 41.23  \scriptsize{$\pm$ \emph{0.49}} & 27.92  \scriptsize{$\pm$ \emph{0.63}} & 42.48  \scriptsize{$\pm$ \emph{0.42}} & 28.44  \scriptsize{$\pm$ \emph{0.72}} \\
 & 512 & 39.95  \scriptsize{$\pm$ \emph{0.67}} & 27.64  \scriptsize{$\pm$ \emph{1.22}} & 43.01  \scriptsize{$\pm$ \emph{0.51}} & 30.61  \scriptsize{$\pm$ \emph{0.51}} & 44.05  \scriptsize{$\pm$ \emph{0.35}} & 31.50  \scriptsize{$\pm$ \emph{0.63}} \\
 & 2048 & \textbf{40.49}  \scriptsize{$\pm$ \emph{1.06}} & \textbf{28.86}  \scriptsize{$\pm$ \emph{0.81}} & \textbf{43.79}  \scriptsize{$\pm$ \emph{0.46}} & \textbf{31.69}  \scriptsize{$\pm$ \emph{0.55}} & \textbf{44.85}  \scriptsize{$\pm$ \emph{0.46}} & \textbf{32.76}  \scriptsize{$\pm$ \emph{0.41}} \\
\bottomrule
\end{tabularx}}
\end{table}
\begin{table}[!t]
\setlength{\tabcolsep}{3.5pt}
\small
\centering{
\caption{Classification micro and macro F1-scores for Flickr.}
\label{appendix:classification-flickr}
\newcolumntype{R}{>{\raggedleft\arraybackslash}X}
\newcolumntype{C}{>{\centering\arraybackslash}X}
\newcolumntype{S}{>{\hsize=.5\hsize}C}
\begin{tabularx}{\linewidth}{p{1.7cm}p{0.7cm}CCCCCC}
\toprule
\multicolumn{2}{c}{} & \multicolumn{6}{c}{\emph{Labeled Nodes}}  \\
\cmidrule(lr){3-8}
& & \multicolumn{2}{c}{1.00\%} & \multicolumn{2}{c}{5.00\%} & \multicolumn{2}{c}{9.00\%} \\
\cmidrule(lr){3-4}\cmidrule(lr){5-6}\cmidrule(lr){7-8}
\emph{Method} & $d$ & Micro-F1 & Macro-F1 & Micro-F1 & Macro-F1 & Micro-F1 & Macro-F1 \\
\midrule
\multirow{2}{*}{DeepWalk} & 128 & 32.55  \scriptsize{$\pm$ \emph{0.91}} & 13.81  \scriptsize{$\pm$ \emph{1.72}} & 37.44  \scriptsize{$\pm$ \emph{0.44}} & 22.58  \scriptsize{$\pm$ \emph{0.53}} & 38.78  \scriptsize{$\pm$ \emph{0.23}} & 24.75  \scriptsize{$\pm$ \emph{0.58}} \\
 & 512 & 31.22  \scriptsize{$\pm$ \emph{0.38}} & 13.42  \scriptsize{$\pm$ \emph{1.23}} & 35.67  \scriptsize{$\pm$ \emph{0.38}} & 22.72  \scriptsize{$\pm$ \emph{1.52}} & 37.25  \scriptsize{$\pm$ \emph{0.09}} & 25.74  \scriptsize{$\pm$ \emph{0.58}} \\
\cmidrule(lr){2-8}
\multirow{2}{*}{node2vec} & 128 & 29.27  \scriptsize{$\pm$ \emph{0.96}} & 6.40  \scriptsize{$\pm$ \emph{0.50}} & 34.12  \scriptsize{$\pm$ \emph{0.47}} & 12.82  \scriptsize{$\pm$ \emph{0.88}} & 35.15  \scriptsize{$\pm$ \emph{0.03}} & 14.89  \scriptsize{$\pm$ \emph{0.47}} \\
 & 512 & 29.80  \scriptsize{$\pm$ \emph{0.67}} & 7.14  \scriptsize{$\pm$ \emph{0.20}} & 34.40  \scriptsize{$\pm$ \emph{0.26}} & 13.50  \scriptsize{$\pm$ \emph{0.20}} & 35.39  \scriptsize{$\pm$ \emph{0.06}} & 15.58  \scriptsize{$\pm$ \emph{0.58}} \\
\cmidrule(lr){2-8}
\multirow{2}{*}{VERSE} & 128 & 28.04  \scriptsize{$\pm$ \emph{1.84}} & 10.52  \scriptsize{$\pm$ \emph{2.37}} & 33.52  \scriptsize{$\pm$ \emph{0.12}} & 19.12  \scriptsize{$\pm$ \emph{0.41}} & 35.38  \scriptsize{$\pm$ \emph{0.41}} & 22.31  \scriptsize{$\pm$ \emph{0.93}} \\
 & 512 & 25.22  \scriptsize{$\pm$ \emph{0.20}} & 7.20  \scriptsize{$\pm$ \emph{1.28}} & 28.25  \scriptsize{$\pm$ \emph{0.29}} & 14.17  \scriptsize{$\pm$ \emph{1.02}} & 29.65  \scriptsize{$\pm$ \emph{0.32}} & 17.09  \scriptsize{$\pm$ \emph{0.29}} \\
\cmidrule(lr){2-8}
\multirow{2}{*}{FastRP} & 128 & 28.20  \scriptsize{$\pm$ \emph{0.53}} & 9.39  \scriptsize{$\pm$ \emph{1.61}} & 30.43  \scriptsize{$\pm$ \emph{0.15}} & 13.82  \scriptsize{$\pm$ \emph{0.61}} & 30.65  \scriptsize{$\pm$ \emph{0.29}} & 14.51  \scriptsize{$\pm$ \emph{0.38}} \\
 & 512 & 29.85  \scriptsize{$\pm$ \emph{0.26}} & 12.28  \scriptsize{$\pm$ \emph{2.72}} & 33.64  \scriptsize{$\pm$ \emph{0.58}} & 18.94  \scriptsize{$\pm$ \emph{1.28}} & 34.88  \scriptsize{$\pm$ \emph{0.58}} & 21.44  \scriptsize{$\pm$ \emph{1.23}} \\
\midrule
\multirow{3}{*}{\thisworkml} & 128 & 27.41  \scriptsize{$\pm$ \emph{0.90}} & 9.14  \scriptsize{$\pm$ \emph{0.56}} & 31.84  \scriptsize{$\pm$ \emph{0.25}} & 14.90  \scriptsize{$\pm$ \emph{0.55}} & 33.14  \scriptsize{$\pm$ \emph{0.33}} & 17.27  \scriptsize{$\pm$ \emph{0.65}} \\
 & 512 & 30.43  \scriptsize{$\pm$ \emph{0.79}} & 10.78  \scriptsize{$\pm$ \emph{1.20}} & 34.00  \scriptsize{$\pm$ \emph{0.25}} & 18.36  \scriptsize{$\pm$ \emph{0.51}} & 35.37  \scriptsize{$\pm$ \emph{0.25}} & 21.26  \scriptsize{$\pm$ \emph{0.48}} \\
 & 2048 & 31.89  \scriptsize{$\pm$ \emph{0.62}} & 11.15  \scriptsize{$\pm$ \emph{1.02}} & 35.94  \scriptsize{$\pm$ \emph{0.23}} & 19.38  \scriptsize{$\pm$ \emph{0.85}} & 37.21  \scriptsize{$\pm$ \emph{0.18}} & 23.02  \scriptsize{$\pm$ \emph{0.56}} \\
\bottomrule
\end{tabularx}}
\end{table}
\begin{table}[!t]
\setlength{\tabcolsep}{3.5pt}
\small
\centering{
\caption{Classification micro and macro F1-scores for YouTube.}
\label{appendix:classification-youtube}
\newcolumntype{R}{>{\raggedleft\arraybackslash}X}
\newcolumntype{C}{>{\centering\arraybackslash}X}
\newcolumntype{S}{>{\hsize=.5\hsize}C}
\begin{tabularx}{\linewidth}{p{1.7cm}p{0.7cm}CCCCCC}
\toprule
\multicolumn{2}{c}{} & \multicolumn{6}{c}{\emph{Labeled Nodes}}  \\
\cmidrule(lr){3-8}
& & \multicolumn{2}{c}{1.00\%} & \multicolumn{2}{c}{5.00\%} & \multicolumn{2}{c}{9.00\%} \\
\cmidrule(lr){3-4}\cmidrule(lr){5-6}\cmidrule(lr){7-8}
\emph{Method} & $d$ & Micro-F1 & Macro-F1 & Micro-F1 & Macro-F1 & Micro-F1 & Macro-F1 \\
\midrule
\multirow{2}{*}{DeepWalk} & 128 & 37.53  \scriptsize{$\pm$ \emph{1.40}} & 29.04  \scriptsize{$\pm$ \emph{3.77}} & 41.64  \scriptsize{$\pm$ \emph{0.15}} & 34.45  \scriptsize{$\pm$ \emph{0.70}} & 42.97  \scriptsize{$\pm$ \emph{0.29}} & 35.62  \scriptsize{$\pm$ \emph{0.93}} \\
 & 512 & 38.69  \scriptsize{$\pm$ \emph{1.17}} & 31.11  \scriptsize{$\pm$ \emph{1.08}} & 40.26  \scriptsize{$\pm$ \emph{0.38}} & 35.09  \scriptsize{$\pm$ \emph{0.26}} & 40.74  \scriptsize{$\pm$ \emph{0.06}} & 36.14  \scriptsize{$\pm$ \emph{0.23}} \\
\cmidrule(lr){2-8}
\multirow{2}{*}{VERSE} & 128 & 37.13  \scriptsize{$\pm$ \emph{0.41}} & 28.54  \scriptsize{$\pm$ \emph{2.39}} & 39.74  \scriptsize{$\pm$ \emph{0.32}} & 33.87  \scriptsize{$\pm$ \emph{0.67}} & 41.70  \scriptsize{$\pm$ \emph{0.38}} & 35.04  \scriptsize{$\pm$ \emph{0.41}} \\
 & 512 & 36.74  \scriptsize{$\pm$ \emph{1.05}} & 27.16  \scriptsize{$\pm$ \emph{0.15}} & 37.47  \scriptsize{$\pm$ \emph{1.37}} & 32.40  \scriptsize{$\pm$ \emph{0.91}} & 37.64  \scriptsize{$\pm$ \emph{0.67}} & 33.00  \scriptsize{$\pm$ \emph{0.35}} \\
\cmidrule(lr){2-8}
\multirow{2}{*}{node2vec} & 128 & 34.64  \scriptsize{$\pm$ \emph{2.63}} & 25.35  \scriptsize{$\pm$ \emph{3.83}} & 40.62  \scriptsize{$\pm$ \emph{1.02}} & 33.26  \scriptsize{$\pm$ \emph{0.20}} & 42.65  \scriptsize{$\pm$ \emph{0.70}} & 35.73  \scriptsize{$\pm$ \emph{0.32}} \\
 & 512 & 36.02  \scriptsize{$\pm$ \emph{2.01}} & 25.03  \scriptsize{$\pm$ \emph{2.89}} & 39.64  \scriptsize{$\pm$ \emph{0.44}} & 33.78  \scriptsize{$\pm$ \emph{0.38}} & 40.47  \scriptsize{$\pm$ \emph{0.85}} & 35.01  \scriptsize{$\pm$ \emph{1.08}} \\
\cmidrule(lr){2-8}
\multirow{2}{*}{FastRP} & 128 & 23.61  \scriptsize{$\pm$ \emph{1.61}} & 6.24  \scriptsize{$\pm$ \emph{0.61}} & 24.16  \scriptsize{$\pm$ \emph{0.96}} & 6.64  \scriptsize{$\pm$ \emph{1.64}} & 24.50  \scriptsize{$\pm$ \emph{0.29}} & 7.09  \scriptsize{$\pm$ \emph{0.35}} \\
 & 512 & 22.83  \scriptsize{$\pm$ \emph{0.41}} & 7.21  \scriptsize{$\pm$ \emph{0.20}} & 23.43  \scriptsize{$\pm$ \emph{0.55}} & 8.77  \scriptsize{$\pm$ \emph{0.82}} & 23.76  \scriptsize{$\pm$ \emph{0.64}} & 9.56  \scriptsize{$\pm$ \emph{0.91}} \\
\midrule
\multirow{3}{*}{\thisworkml} & 128 & 37.89  \scriptsize{$\pm$ \emph{1.02}} & 26.27  \scriptsize{$\pm$ \emph{1.36}} & 40.90  \scriptsize{$\pm$ \emph{0.53}} & 31.57  \scriptsize{$\pm$ \emph{0.86}} & 41.78  \scriptsize{$\pm$ \emph{0.37}} & 32.73  \scriptsize{$\pm$ \emph{0.51}} \\
 & 512 & 40.04  \scriptsize{$\pm$ \emph{0.97}} & 27.52  \scriptsize{$\pm$ \emph{1.60}} & 43.31  \scriptsize{$\pm$ \emph{0.41}} & 33.98  \scriptsize{$\pm$ \emph{0.81}} & 44.00  \scriptsize{$\pm$ \emph{0.42}} & 35.56  \scriptsize{$\pm$ \emph{0.69}} \\
 & 2048 & \textbf{40.91}  \scriptsize{$\pm$ \emph{0.86}} & 28.34 \scriptsize{$\pm$ \emph{1.43}} & \textbf{44.82}  \scriptsize{$\pm$ \emph{0.49}} & \textbf{35.16}  \scriptsize{$\pm$ \emph{1.02}} & \textbf{45.67}  \scriptsize{$\pm$ \emph{0.32}} & \textbf{36.90}  \scriptsize{$\pm$ \emph{0.69}} \\
\bottomrule
\end{tabularx}}
\end{table}

\clearpage
\subsubsection{Task: Link Prediction}

For this task we create edge embeddings by combining node representations, and treat the problem as a binary classification. We observed that different strategies for aggregating embeddings could maximize the performance of different methods under evaluation, so we conducted an in-depth investigation in order for the fairest possible evaluation.  Specifically, for two node embeddings $w$ and $\hat{w}$ we adopt the following strategies for creating edge representations:

\begin{enumerate}
    \item dot-product: $\vW^\top \hat{\vW}$
    \item cosine distance: $\frac{\vW^\top \hat{\vW}}{\lVert\vW\rVert\lVert\hat{\vW}\rVert}$
    \item hadamard product: $\vW \odot \hat{\vW}$
    \item element-wise average: $\frac{1}{2} (\vW + \hat{\vW})$
    \item L1 element-wise distance: $|\vW - \hat{\vW}| $
    \item L2 element-wise distance $(\vW - \hat{\vW}) \odot (\vW - \hat{\vW})$
\end{enumerate}
While the first two strategies directly create a ranking from two embeddings, for the other ones we train a logistic regression on examples from the validation set. In all cases, a likelihood scalar value will be attributed to all edges, and we report their ROC-AUC score on the test set. 

Taking into account that different embedding methods may determine a specific topology of the embedding space, that may in turn favour a specific edge aggregation method, for each method we consider only the strategy that consistently provides good results on all datasets. This ensures that all methods can be objectively compared to one another, independent of the particularities of induced embedding space geometry.

The following tables show detailed analysis of link prediction results for BlogCatalog (Table \ref{appendix-link-blogcatalog}) and CoAuthor (Table \ref{appendix-link-coauthor}).

\begin{table}[!b]
\setlength{\tabcolsep}{3.5pt}
\small
\centering{
\caption{Link-prediction ROC-AUC scores for Blogcatalog. For each method, we highlight the aggregation function that consistently performs good on all datasets.}
\label{appendix-link-blogcatalog}
\newcolumntype{R}{>{\raggedleft\arraybackslash}X}
\newcolumntype{C}{>{\centering\arraybackslash}X}
\newcolumntype{S}{>{\hsize=.5\hsize}C}
\begin{tabularx}{\linewidth}{p{1.7cm}p{0.7cm}CCCCCC}
\toprule
\multicolumn{2}{c}{} & \multicolumn{6}{c}{\emph{Aggregation Function}}  \\
\cmidrule(lr){3-8}
\emph{Method}& $d$ & hadamard & dot-product & cosine & L1 & L2 & average \\
\midrule
\multirow{2}{*}{DeepWalk} & 128 & 68.92  \scriptsize{$\pm$ \emph{2.45}} & 63.01  \scriptsize{$\pm$ \emph{2.83}} & 75.73  \scriptsize{$\pm$ \emph{1.49}} & 91.51  \scriptsize{$\pm$ \emph{0.61}} & \hl 91.84 \scriptsize{$\pm$ \emph{0.88}} & 82.07  \scriptsize{$\pm$ \emph{0.09}} \\
 & 512 & 67.70  \scriptsize{$\pm$ \emph{1.58}} & 62.80  \scriptsize{$\pm$ \emph{2.07}} & 72.83  \scriptsize{$\pm$ \emph{0.82}} & 90.94  \scriptsize{$\pm$ \emph{0.29}} & \hl 91.41  \scriptsize{$\pm$ \emph{0.67}} & 83.71  \scriptsize{$\pm$ \emph{1.46}} \\
\cmidrule(lr){2-8}
\multirow{2}{*}{node2vec} & 128 & \hl 93.12  \scriptsize{$\pm$ \emph{0.20}} & 91.85  \scriptsize{$\pm$ \emph{1.37}} & 22.52  \scriptsize{$\pm$ \emph{0.41}} & 89.90  \scriptsize{$\pm$ \emph{0.70}} & 90.28  \scriptsize{$\pm$ \emph{1.28}} & 94.41  \scriptsize{$\pm$ \emph{0.53}} \\
 & 512 & \hl 92.18  \scriptsize{$\pm$ \emph{0.12}} & 90.96  \scriptsize{$\pm$ \emph{0.12}} & 12.49  \scriptsize{$\pm$ \emph{1.20}} & 93.89  \scriptsize{$\pm$ \emph{0.38}} & 93.50  \scriptsize{$\pm$ \emph{0.76}} & 93.72  \scriptsize{$\pm$ \emph{0.26}} \\
\cmidrule(lr){2-8}
\multirow{2}{*}{VERSE} & 128 & \hl 94.96  \scriptsize{$\pm$ \emph{0.38}} & 95.10  \scriptsize{$\pm$ \emph{0.67}} & 85.21  \scriptsize{$\pm$ \emph{0.88}} & 75.74  \scriptsize{$\pm$ \emph{0.85}} & 75.92  \scriptsize{$\pm$ \emph{0.73}} & 94.07  \scriptsize{$\pm$ \emph{0.47}} \\
 & 512 & \hl 93.42  \scriptsize{$\pm$ \emph{0.35}} & 93.40  \scriptsize{$\pm$ \emph{0.67}} & 61.48  \scriptsize{$\pm$ \emph{0.88}} & 91.52  \scriptsize{$\pm$ \emph{0.26}} & 92.17  \scriptsize{$\pm$ \emph{0.61}} & 93.14  \scriptsize{$\pm$ \emph{0.58}} \\
\cmidrule(lr){2-8}
\multirow{2}{*}{FastRP} & 128 & 73.54  \scriptsize{$\pm$ \emph{0.23}} & 68.16  \scriptsize{$\pm$ \emph{0.55}} & 76.32  \scriptsize{$\pm$ \emph{1.90}} & \hl 85.78  \scriptsize{$\pm$ \emph{2.31}} & 82.46  \scriptsize{$\pm$ \emph{2.01}} & 89.25  \scriptsize{$\pm$ \emph{0.85}} \\
 & 512 & 78.34  \scriptsize{$\pm$ \emph{2.80}} & 70.67  \scriptsize{$\pm$ \emph{0.79}} & 79.25  \scriptsize{$\pm$ \emph{1.02}} & \hl 88.68  \scriptsize{$\pm$ \emph{0.70}} & 84.56  \scriptsize{$\pm$ \emph{0.76}} & 90.99  \scriptsize{$\pm$ \emph{0.55}} \\
\midrule
\multirow{3}{*}{\thisworkml} & 128 & \hl 89.22  \scriptsize{$\pm$ \emph{1.48}} & 84.95  \scriptsize{$\pm$ \emph{4.19}} & 51.57  \scriptsize{$\pm$ \emph{1.14}} & 72.52  \scriptsize{$\pm$ \emph{1.71}} & 64.39  \scriptsize{$\pm$ \emph{1.37}} & 87.65  \scriptsize{$\pm$ \emph{0.70}} \\
 & 512 & \hl 92.74  \scriptsize{$\pm$ \emph{0.60}} & 90.77  \scriptsize{$\pm$ \emph{1.51}} & 51.75  \scriptsize{$\pm$ \emph{1.16}} & 83.07  \scriptsize{$\pm$ \emph{1.00}} & 70.39  \scriptsize{$\pm$ \emph{1.11}} & 90.63  \scriptsize{$\pm$ \emph{0.56}} \\
 & 2048 & \hl 93.84  \scriptsize{$\pm$ \emph{0.33}} & 93.44  \scriptsize{$\pm$ \emph{0.53}} & 51.35  \scriptsize{$\pm$ \emph{1.18}} & 88.95  \scriptsize{$\pm$ \emph{0.85}} & 77.39  \scriptsize{$\pm$ \emph{1.02}} & 92.40  \scriptsize{$\pm$ \emph{0.42}} \\
\bottomrule
\end{tabularx}}
\end{table}
\begin{table}[!h]
\setlength{\tabcolsep}{3.5pt}
\small
\centering{
\caption{Temporal link-prediction ROC-AUC scores for CoAuthor. For each method, we highlight the aggregation function that consistently performs good on all datasets.}
\label{appendix-link-coauthor}
\newcolumntype{R}{>{\raggedleft\arraybackslash}X}
\newcolumntype{C}{>{\centering\arraybackslash}X}
\newcolumntype{S}{>{\hsize=.5\hsize}C}
\begin{tabularx}{\linewidth}{p{1.7cm}p{0.7cm}CCCCCC}
\toprule
\multicolumn{2}{c}{} & \multicolumn{6}{c}{\emph{Aggregation Function}}  \\
\cmidrule(lr){3-8}
\emph{Method}& $d$ & hadamard & dot-product & cosine & L1 & L2 & average \\
\midrule
\multirow{2}{*}{DeepWalk} & 128 & 75.59  \scriptsize{$\pm$ \emph{0.88}} & 74.05  \scriptsize{$\pm$ \emph{1.58}} & 83.5  \scriptsize{$\pm$ \emph{0.12}} & 86.99  \scriptsize{$\pm$ \emph{0.09}} & \hl 87.21  \scriptsize{$\pm$ \emph{0.73}} & 73.64  \scriptsize{$\pm$ \emph{1.72}} \\
 & 512 & 78.42  \scriptsize{$\pm$ \emph{0.53}} & 76.40  \scriptsize{$\pm$ \emph{1.87}} & 82.05  \scriptsize{$\pm$ \emph{1.20}} & 87.85  \scriptsize{$\pm$ \emph{0.29}} & \hl 88.43  \scriptsize{$\pm$ \emph{1.08}} & 79.56  \scriptsize{$\pm$ \emph{0.70}} \\
\cmidrule(lr){2-8}
\multirow{2}{*}{node2vec} & 128 & \hl 80.18  \scriptsize{$\pm$ \emph{0.67}} & 45.00  \scriptsize{$\pm$ \emph{1.34}} & 54.59  \scriptsize{$\pm$ \emph{0.88}} & 70.14  \scriptsize{$\pm$ \emph{1.31}} & 70.32  \scriptsize{$\pm$ \emph{0.58}} & 79.07  \scriptsize{$\pm$ \emph{0.53}} \\
 & 512 & \hl 86.09  \scriptsize{$\pm$ \emph{0.85}} & 45.19  \scriptsize{$\pm$ \emph{0.20}} & 42.99  \scriptsize{$\pm$ \emph{1.66}} & 72.41  \scriptsize{$\pm$ \emph{1.84}} & 72.70  \scriptsize{$\pm$ \emph{1.43}} & 84.00  \scriptsize{$\pm$ \emph{0.38}} \\
\cmidrule(lr){2-8}
\multirow{2}{*}{VERSE} & 128 & \hl 93.16  \scriptsize{$\pm$ \emph{0.44}} & 92.74  \scriptsize{$\pm$ \emph{0.15}} & 90.85  \scriptsize{$\pm$ \emph{0.20}} & 79.24  \scriptsize{$\pm$ \emph{1.49}} & 80.27  \scriptsize{$\pm$ \emph{0.41}} & 86.50  \scriptsize{$\pm$ \emph{0.47}} \\
 & 512 & \hl 92.75  \scriptsize{$\pm$ \emph{0.73}} & 92.36  \scriptsize{$\pm$ \emph{1.08}} & 90.33  \scriptsize{$\pm$ \emph{0.20}} & 72.58  \scriptsize{$\pm$ \emph{1.17}} & 73.82  \scriptsize{$\pm$ \emph{1.49}} & 86.69  \scriptsize{$\pm$ \emph{1.02}} \\
\cmidrule(lr){2-8}
\multirow{2}{*}{FastRP} & 128 & 60.23  \scriptsize{$\pm$ \emph{1.78}} & 59.97  \scriptsize{$\pm$ \emph{1.61}} & 65.08  \scriptsize{$\pm$ \emph{0.93}} & \hl 78.51  \scriptsize{$\pm$ \emph{0.64}} & 77.66  \scriptsize{$\pm$ \emph{0.23}} & 57.69  \scriptsize{$\pm$ \emph{1.90}} \\
 & 512 & 61.16  \scriptsize{$\pm$ \emph{1.75}} & 61.92  \scriptsize{$\pm$ \emph{0.85}} & 70.12  \scriptsize{$\pm$ \emph{0.38}} & \hl 82.19  \scriptsize{$\pm$ \emph{2.22}} & 78.51  \scriptsize{$\pm$ \emph{1.99}} & 63.87  \scriptsize{$\pm$ \emph{1.49}} \\
\midrule
\multirow{3}{*}{\thisworkml} & 128 & \hl 89.41  \scriptsize{$\pm$ \emph{0.67}} & 88.88  \scriptsize{$\pm$ \emph{0.79}} & 89.15  \scriptsize{$\pm$ \emph{0.63}} & 66.19  \scriptsize{$\pm$ \emph{1.92}} & 66.78  \scriptsize{$\pm$ \emph{1.90}} & 83.22  \scriptsize{$\pm$ \emph{0.86}} \\
 & 512 & \hl 90.44  \scriptsize{$\pm$ \emph{0.48}} & 90.10  \scriptsize{$\pm$ \emph{0.69}} & 90.60  \scriptsize{$\pm$ \emph{0.55}} & 76.50  \scriptsize{$\pm$ \emph{1.44}} & 75.76  \scriptsize{$\pm$ \emph{1.41}} & 85.64  \scriptsize{$\pm$ \emph{0.67}} \\
 & 2048 & \hl 89.45  \scriptsize{$\pm$ \emph{0.62}} & 90.38  \scriptsize{$\pm$ \emph{0.60}} & 90.84  \scriptsize{$\pm$ \emph{0.44}} & 88.42  \scriptsize{$\pm$ \emph{0.48}} & 84.83  \scriptsize{$\pm$ \emph{0.67}} & 87.67  \scriptsize{$\pm$ \emph{1.07}} \\
\bottomrule
\end{tabularx}}
\end{table}

\subsection{Epsilon Influence}\label{asec:epsilon}

In order to gain insight into the effect of $\epsilon$ on the behaviour of our method, we test 6 values in the range of $[10^{-1},..., 10^{-6}]$. We note that the decrease of $\epsilon$ is strongly correlated with a better classification performance, but also to a larger computational overhead. The only apparent exception seems to be the Micro-F1 score on the Blogcatalog dataset, which drops suddenly when $\epsilon = 10^{-6}$. We argue that this is due to the fact that more probability mass is dispersed further away from the central node, but the max operator cuts that information away (as the number of nodes is small), and thus the resulting embedding is actually less accurate.

\pgfplotsset{compat=1.5}

\begin{filecontents}{ie_micro.data.ppi}
epsilon       micro
0.1           9
0.01          13.3
0.001         15.8
0.0001        17
0.00001       17.3
0.000001      17.9
\end{filecontents}

\begin{filecontents}{deepwalk_micro.data.ppi}
epsilon       micro
0.1           16.08
0.000001      16.08
\end{filecontents}

\begin{filecontents}{verse_micro.data.ppi}
epsilon       micro
0.1           12.6
0.000001      12.6
\end{filecontents}

\begin{filecontents}{node2vec_micro.data.ppi}
epsilon       micro
0.1           15.03
0.000001      15.03
\end{filecontents}

\begin{filecontents}{fastrp_micro.data.ppi}
epsilon       micro
0.1           15.74
0.000001      15.74
\end{filecontents}

\begin{filecontents}{ie_time.data.ppi}
epsilon       time
0.1           0.0560
0.01          0.0700
0.001         0.0700
0.0001        0.0720
0.00001       0.1220
0.000001      0.2020
\end{filecontents}

\begin{filecontents}{deepwalk_time.data.ppi}
epsilon       time
0.1           254.31
0.000001      254.31
\end{filecontents}

\begin{filecontents}{verse_time.data.ppi}
epsilon       time
0.1           87.53
0.000001      87.53
\end{filecontents}

\begin{filecontents}{node2vec_time.data.ppi}
epsilon       time
0.1           24.82
0.000001      24.82
\end{filecontents}

\begin{filecontents}{fastrp_time.data.ppi}
epsilon       time
0.1           1.81
0.000001      1.81
\end{filecontents}

\begin{filecontents}{ie_memory.data.ppi}
epsilon       memory
0.1           0.0382
0.01          0.0973
0.001         0.1152
0.0001        0.2289
0.00001       0.8503
0.000001      1.0500
\end{filecontents}

\begin{filecontents}{deepwalk_memory.data.ppi}
epsilon       memory
0.1           16.75
0.000001      16.75
\end{filecontents}

\begin{filecontents}{verse_memory.data.ppi}
epsilon       memory
0.1           8.39
0.000001      8.39
\end{filecontents}

\begin{filecontents}{node2vec_memory.data.ppi}
epsilon       memory
0.1           65.98
0.000001      65.98
\end{filecontents}

\begin{filecontents}{fastrp_memory.data.ppi}
epsilon       memory
0.1           197.67
0.000001      197.67
\end{filecontents}
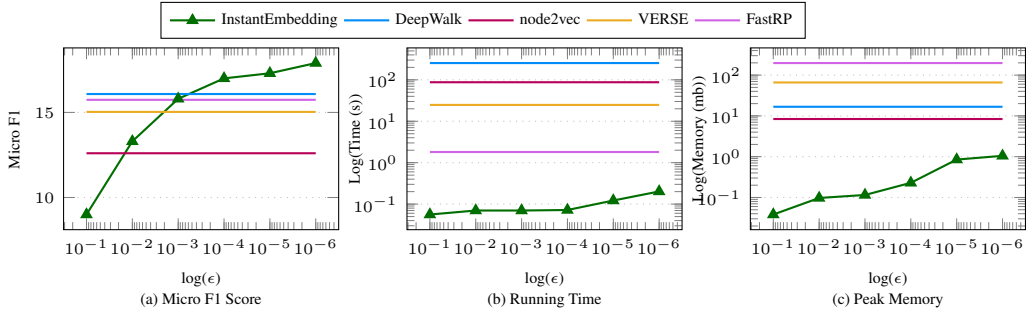
\begin{figure}
\centering
\begin{tikzpicture}
    \begin{groupplot}[
      group style={
        group name=epsilonplots,
        group size=3 by 1,
        horizontal sep=0.065\textwidth,
      },
      every axis/.append style={
            font=\tiny,
        },
        title style={at={(0.5,0)},anchor=north,yshift=-11mm,font=\small},
      yminorticks=true,
      max space between ticks=20,
      width=0.375\textwidth,
      height=4cm,%3.5cm,
    %   ymode=log,
    %   xmode=log,
      ymajorgrids=true,
      grid style=dotted,
     y label style={at={(axis description cs:-0.125,.5)},anchor=south},
      every axis title/.style={below,at={(0.5,-0.3)}}
    ]
    
    % Chart for Epsilon-Micro
    \nextgroupplot[xlabel={log($\epsilon$)}, 
                   %ymode=log,
                   xmode=log,
                   x dir=reverse,
                   ylabel={Micro F1},
                   title={(a) Micro F1 Score},
                  legend columns=10,
                  legend style={at={(1.525,1.05)},anchor=south},
                  legend entries={InstantEmbedding, DeepWalk, node2vec, VERSE, FastRP},
                   ]
        % InstantEmbed Method.
        \addplot[color=customgreen, thick, mark=triangle*] 
            table [x=epsilon,y=micro] {ie_micro.data.PPI};
        \label{method:InstantEmbedding}
            
        % DeepWalk Method.
        \addplot [thick, color=customblue]
            table [x=epsilon,y=micro] {deepwalk_micro.data.PPI};
        \label{method:DeepWalk}

        % VERSE Method.
        \addplot [thick, color=customred]
            table [x=epsilon,y=micro] {verse_micro.data.PPI};
        \label{method:VERSE}

        % % Node2Vec Method.
        \addplot [thick, color=customyellow]
            table [x=epsilon,y=micro] {node2vec_micro.data.PPI};
        \label{method:Node2Vec}

        % % FastRP Method.
        \addplot [thick, color=custombrown]
            table [x=epsilon,y=micro] {fastrp_micro.data.PPI};
        \label{method:FastRP}
        
    % Chart for Epsilon-Time
    \nextgroupplot[xlabel={log($\epsilon$)}, 
                   ymode=log,
                   xmode=log,
                   x dir=reverse,
                   ylabel={Log(Time (s))},
                   ylabel style={align=center},
                   title={(b) Running Time}
                   %   ylabel shift=-0.75em,
                   ]
        % InstantEmbed Method.
        \addplot[color=customgreen, thick, mark=triangle*] 
            table [x=epsilon,y=time] {ie_time.data.PPI};
        \label{method:InstantEmbedding}
            
        % DeepWalk Method.
        \addplot [thick, color=customblue]
            table [x=epsilon,y=time] {deepwalk_time.data.PPI};
        \label{method:DeepWalk}

        % VERSE Method.
        \addplot [thick, color=customred]
            table [x=epsilon,y=time] {verse_time.data.PPI};
        \label{method:VERSE}

        % % Node2Vec Method.
        \addplot [thick, color=customyellow]
            table [x=epsilon,y=time] {node2vec_time.data.PPI};
        \label{method:Node2Vec}

        % % FastRP Method.
        \addplot [thick, color=custombrown]
            table [x=epsilon,y=time] {fastrp_time.data.PPI};
        \label{method:FastRP}

    % Chart for Epsilon-Memory
    \nextgroupplot[xlabel={log($\epsilon$)}, 
                   ymode=log,
                   xmode=log,
                   x dir=reverse,
                   ylabel={Log(Memory (mb))},
                   ylabel style={align=center},
                   title={(c) Peak Memory}
                   %   ylabel shift=-0.75em,
                   ]
        % InstantEmbed Method.
        \addplot[color=customgreen, thick, mark=triangle*] 
            table [x=epsilon,y=memory] {ie_memory.data.PPI};
        \label{method:InstantEmbedding}
            
        % DeepWalk Method.
        \addplot [thick, color=customblue]
            table [x=epsilon,y=memory] {deepwalk_memory.data.PPI};
        \label{method:DeepWalk}

        % VERSE Method.
        \addplot [thick, color=customred]
            table [x=epsilon,y=memory] {verse_memory.data.PPI};
        \label{method:VERSE}

        % % Node2Vec Method.
        \addplot [thick, color=customyellow]
            table [x=epsilon,y=memory] {node2vec_memory.data.PPI};
        \label{method:Node2Vec}

        % % FastRP Method.
        \addplot [thick, color=custombrown]
            table [x=epsilon,y=memory] {fastrp_memory.data.PPI};
        \label{method:FastRP}
    
    \end{groupplot}

\end{tikzpicture}
\vspace*{-4mm}
\caption{Effect of $\epsilon$ on the PPI dataset.}\label{fig:epsilon-PPI}
\vspace{-2mm}
\end{figure}

\pgfplotsset{compat=1.5}

\begin{filecontents}{ie_micro.data.blogcatalog}
epsilon       micro
0.1           18.83
0.01          23.97
0.001         28.17
0.0001        33.09
0.00001       34.24
0.000001      29.65
\end{filecontents}

\begin{filecontents}{deepwalk_micro.data.blogcatalog}
epsilon       micro
0.1           32.48
0.000001      32.48
\end{filecontents}

\begin{filecontents}{node2vec_micro.data.blogcatalog}
epsilon       micro
0.1           33.67
0.000001      33.67
\end{filecontents}

\begin{filecontents}{verse_micro.data.blogcatalog}
epsilon       micro
0.1           24.64
0.000001      24.64
\end{filecontents}

\begin{filecontents}{fastrp_micro.data.blogcatalog}
epsilon       micro
0.1           33.53
0.000001      33.53
\end{filecontents}

\begin{filecontents}{ie_time.data.blogcatalog}
epsilon       time
0.1           0.0560
0.01          0.0620
0.001         0.0680
0.0001        0.0720
0.00001       0.1000
0.000001      0.5100
\end{filecontents}

\begin{filecontents}{deepwalk_time.data.blogcatalog}
epsilon       time
0.1           711
0.000001      711
\end{filecontents}

\begin{filecontents}{node2vec_time.data.blogcatalog}
epsilon       time
0.1           67
0.000001      67
\end{filecontents}

\begin{filecontents}{verse_time.data.blogcatalog}
epsilon       time
0.1           198
0.000001      198
\end{filecontents}

\begin{filecontents}{fastrp_time.data.blogcatalog}
epsilon       time
0.1           5.6
0.000001      5.6
\end{filecontents}

\begin{filecontents}{ie_memory.data.blogcatalog}
epsilon       memory
0.1           0.0191
0.01          0.0594
0.001         0.1121
0.0001        0.1917
0.00001       0.8514
0.000001      4.6300
\end{filecontents}

\begin{filecontents}{deepwalk_memory.data.blogcatalog}
epsilon       memory
0.1           46
0.000001      46
\end{filecontents}

\begin{filecontents}{node2vec_memory.data.blogcatalog}
epsilon       memory
0.1           3000
0.000001      3000
\end{filecontents}

\begin{filecontents}{verse_memory.data.blogcatalog}
epsilon       memory
0.1           23
0.000001      23
\end{filecontents}

\begin{filecontents}{fastrp_memory.data.blogcatalog}
epsilon       memory
0.1           504
0.000001      504
\end{filecontents}

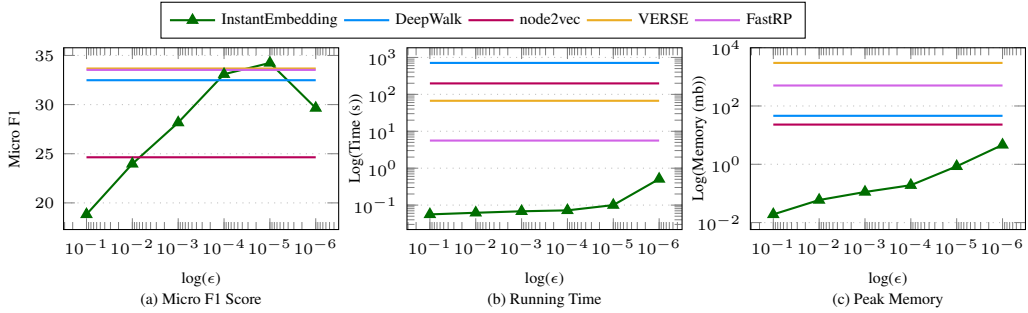
\begin{figure}
\centering
\begin{tikzpicture}
    \begin{groupplot}[
      group style={
        group name=epsilonplots,
        group size=3 by 1,
        horizontal sep=0.065\textwidth,
      },
      every axis/.append style={
            font=\tiny,
        },
        title style={at={(0.5,0)},anchor=north,yshift=-11mm,font=\small},
      yminorticks=true,
      max space between ticks=20,
      width=0.375\textwidth,
      height=4cm,%3.5cm,
    %   ymode=log,
    %   xmode=log,
      ymajorgrids=true,
      grid style=dotted,
     y label style={at={(axis description cs:-0.125,.5)},anchor=south},
      every axis title/.style={below,at={(0.5,-0.3)}}
    ]
    
    % Chart for Epsilon-Micro
    \nextgroupplot[xlabel={log($\epsilon$)}, 
                   %ymode=log,
                   xmode=log,
                   x dir=reverse,
                   ylabel={Micro F1},
                   title={(a) Micro F1 Score},
                  legend columns=10,
                  legend style={at={(1.525,1.05)},anchor=south},
                  legend entries={InstantEmbedding, DeepWalk, node2vec, VERSE, FastRP},
                   ]
        % InstantEmbed Method.
        \addplot[color=customgreen, thick, mark=triangle*] 
            table [x=epsilon,y=micro] {ie_micro.data.BlogCatalog};
        \label{method:InstantEmbedding}
            
        % DeepWalk Method.
        \addplot [thick, color=customblue]
            table [x=epsilon,y=micro] {deepwalk_micro.data.BlogCatalog};
        \label{method:DeepWalk}

        % VERSE Method.
        \addplot [thick, color=customred]
            table [x=epsilon,y=micro] {verse_micro.data.BlogCatalog};
        \label{method:VERSE}

        % % Node2Vec Method.
        \addplot [thick, color=customyellow]
            table [x=epsilon,y=micro] {node2vec_micro.data.BlogCatalog};
        \label{method:Node2Vec}

        % % FastRP Method.
        \addplot [thick, color=custombrown]
            table [x=epsilon,y=micro] {fastrp_micro.data.BlogCatalog};
        \label{method:FastRP}
        
    % Chart for Epsilon-Time
    \nextgroupplot[xlabel={log($\epsilon$)}, 
                   ymode=log,
                   xmode=log,
                   x dir=reverse,
                   ylabel={Log(Time (s))},
                   ylabel style={align=center},
                   title={(b) Running Time}
                   %   ylabel shift=-0.75em,
                   ]
        % InstantEmbed Method.
        \addplot[color=customgreen, thick, mark=triangle*] 
            table [x=epsilon,y=time] {ie_time.data.BlogCatalog};
        \label{method:InstantEmbedding}
            
        % DeepWalk Method.
        \addplot [thick, color=customblue]
            table [x=epsilon,y=time] {deepwalk_time.data.BlogCatalog};
        \label{method:DeepWalk}

        % VERSE Method.
        \addplot [thick, color=customred]
            table [x=epsilon,y=time] {verse_time.data.BlogCatalog};
        \label{method:VERSE}

        % % Node2Vec Method.
        \addplot [thick, color=customyellow]
            table [x=epsilon,y=time] {node2vec_time.data.BlogCatalog};
        \label{method:Node2Vec}

        % % FastRP Method.
        \addplot [thick, color=custombrown]
            table [x=epsilon,y=time] {fastrp_time.data.BlogCatalog};
        \label{method:FastRP}

    % Chart for Epsilon-Memory
    \nextgroupplot[xlabel={log($\epsilon$)}, 
                   ymode=log,
                   xmode=log,
                   x dir=reverse,
                   ylabel={Log(Memory (mb))},
                   ylabel style={align=center},
                   title={(c) Peak Memory}
                   %   ylabel shift=-0.75em,
                   ]
        % InstantEmbed Method.
        \addplot[color=customgreen, thick, mark=triangle*] 
            table [x=epsilon,y=memory] {ie_memory.data.BlogCatalog};
        \label{method:InstantEmbedding}
            
        % DeepWalk Method.
        \addplot [thick, color=customblue]
            table [x=epsilon,y=memory] {deepwalk_memory.data.BlogCatalog};
        \label{method:DeepWalk}

        % VERSE Method.
        \addplot [thick, color=customred]
            table [x=epsilon,y=memory] {verse_memory.data.BlogCatalog};
        \label{method:VERSE}

        % % Node2Vec Method.
        \addplot [thick, color=customyellow]
            table [x=epsilon,y=memory] {node2vec_memory.data.BlogCatalog};
        \label{method:Node2Vec}

        % % FastRP Method.
        \addplot [thick, color=custombrown]
            table [x=epsilon,y=memory] {fastrp_memory.data.BlogCatalog};
        \label{method:FastRP}
    
    \end{groupplot}

\end{tikzpicture}
\vspace*{-4mm}
\caption{Effect of $\epsilon$ on the BlogCatalog dataset.}\label{fig:epsilon-BlogCatalog}
\vspace{-2mm}
\end{figure}

\pgfplotsset{compat=1.5}

\begin{filecontents}{ie_micro.data.cocit}
epsilon       micro
0.1           13.92
0.01          19.64
0.001         30.95
0.0001        39.26
0.00001       40.58
0.000001      40.73
\end{filecontents}

\begin{filecontents}{deepwalk_micro.data.cocit}
epsilon       micro
0.1           37.44
0.000001      37.44
\end{filecontents}

\begin{filecontents}{node2vec_micro.data.cocit}
epsilon       micro
0.1           38.35
0.000001      38.35
\end{filecontents}

\begin{filecontents}{verse_micro.data.cocit}
epsilon       micro
0.1           38.22
0.000001      38.22
\end{filecontents}

\begin{filecontents}{fastrp_micro.data.cocit}
epsilon       micro
0.1           26.03
0.000001      26.03
\end{filecontents}

\begin{filecontents}{ie_time.data.cocit}
epsilon       time
0.1           0.0700
0.01          0.0700
0.001         0.0720
0.0001        0.0780
0.00001       0.1260
0.000001      0.5520
\end{filecontents}

\begin{filecontents}{deepwalk_time.data.cocit}
epsilon       time
0.1           2766
0.000001      2766
\end{filecontents}

\begin{filecontents}{node2vec_time.data.cocit}
epsilon       time
0.1           280
0.000001      280
\end{filecontents}

\begin{filecontents}{verse_time.data.cocit}
epsilon       time
0.1           904
0.000001      904
\end{filecontents}

\begin{filecontents}{fastrp_time.data.cocit}
epsilon       time
0.1           7.2
0.000001      7.2
\end{filecontents}

\begin{filecontents}{ie_memory.data.cocit}
epsilon       memory
0.1           0.0956
0.01          0.0984
0.001         0.1168
0.0001        0.2239
0.00001       0.8215
0.000001      3.1980
\end{filecontents}

\begin{filecontents}{deepwalk_memory.data.cocit}
epsilon       memory
0.1           187
0.000001      187
\end{filecontents}

\begin{filecontents}{node2vec_memory.data.cocit}
epsilon       memory
0.1           284
0.000001      284
\end{filecontents}

\begin{filecontents}{verse_memory.data.cocit}
epsilon       memory
0.1           92
0.000001      92
\end{filecontents}

\begin{filecontents}{fastrp_memory.data.cocit}
epsilon       memory
0.1           1660
0.000001      1660
\end{filecontents}

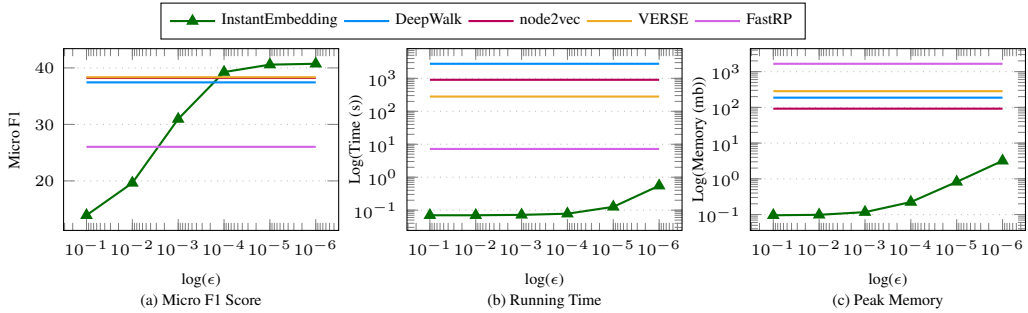
\begin{figure}
\centering
\begin{tikzpicture}
    \begin{groupplot}[
      group style={
        group name=epsilonplots,
        group size=3 by 1,
        horizontal sep=0.065\textwidth,
      },
      every axis/.append style={
            font=\tiny,
        },
        title style={at={(0.5,0)},anchor=north,yshift=-11mm,font=\small},
      yminorticks=true,
      max space between ticks=20,
      width=0.375\textwidth,
      height=4cm,%3.5cm,
    %   ymode=log,
    %   xmode=log,
      ymajorgrids=true,
      grid style=dotted,
     y label style={at={(axis description cs:-0.125,.5)},anchor=south},
      every axis title/.style={below,at={(0.5,-0.3)}}
    ]
    
    % Chart for Epsilon-Micro
    \nextgroupplot[xlabel={log($\epsilon$)}, 
                   %ymode=log,
                   xmode=log,
                   x dir=reverse,
                   ylabel={Micro F1},
                   title={(a) Micro F1 Score},
                  legend columns=10,
                  legend style={at={(1.525,1.05)},anchor=south},
                  legend entries={InstantEmbedding, DeepWalk, node2vec, VERSE, FastRP},
                   ]
        % InstantEmbed Method.
        \addplot[color=customgreen, thick, mark=triangle*] 
            table [x=epsilon,y=micro] {ie_micro.data.CoCit};
        \label{method:InstantEmbedding}
            
        % DeepWalk Method.
        \addplot [thick, color=customblue]
            table [x=epsilon,y=micro] {deepwalk_micro.data.CoCit};
        \label{method:DeepWalk}

        % VERSE Method.
        \addplot [thick, color=customred]
            table [x=epsilon,y=micro] {verse_micro.data.CoCit};
        \label{method:VERSE}

        % % Node2Vec Method.
        \addplot [thick, color=customyellow]
            table [x=epsilon,y=micro] {node2vec_micro.data.CoCit};
        \label{method:Node2Vec}

        % % FastRP Method.
        \addplot [thick, color=custombrown]
            table [x=epsilon,y=micro] {fastrp_micro.data.CoCit};
        \label{method:FastRP}
        
    % Chart for Epsilon-Time
    \nextgroupplot[xlabel={log($\epsilon$)}, 
                   ymode=log,
                   xmode=log,
                   x dir=reverse,
                   ylabel={Log(Time (s))},
                   ylabel style={align=center},
                   title={(b) Running Time}
                   %   ylabel shift=-0.75em,
                   ]
        % InstantEmbed Method.
        \addplot[color=customgreen, thick, mark=triangle*] 
            table [x=epsilon,y=time] {ie_time.data.CoCit};
        \label{method:InstantEmbedding}
            
        % DeepWalk Method.
        \addplot [thick, color=customblue]
            table [x=epsilon,y=time] {deepwalk_time.data.CoCit};
        \label{method:DeepWalk}

        % VERSE Method.
        \addplot [thick, color=customred]
            table [x=epsilon,y=time] {verse_time.data.CoCit};
        \label{method:VERSE}

        % % Node2Vec Method.
        \addplot [thick, color=customyellow]
            table [x=epsilon,y=time] {node2vec_time.data.CoCit};
        \label{method:Node2Vec}

        % % FastRP Method.
        \addplot [thick, color=custombrown]
            table [x=epsilon,y=time] {fastrp_time.data.CoCit};
        \label{method:FastRP}

    % Chart for Epsilon-Memory
    \nextgroupplot[xlabel={log($\epsilon$)}, 
                   ymode=log,
                   xmode=log,
                   x dir=reverse,
                   ylabel={Log(Memory (mb))},
                   ylabel style={align=center},
                   title={(c) Peak Memory}
                   %   ylabel shift=-0.75em,
                   ]
        % InstantEmbed Method.
        \addplot[color=customgreen, thick, mark=triangle*] 
            table [x=epsilon,y=memory] {ie_memory.data.CoCit};
        \label{method:InstantEmbedding}
            
        % DeepWalk Method.
        \addplot [thick, color=customblue]
            table [x=epsilon,y=memory] {deepwalk_memory.data.CoCit};
        \label{method:DeepWalk}

        % VERSE Method.
        \addplot [thick, color=customred]
            table [x=epsilon,y=memory] {verse_memory.data.CoCit};
        \label{method:VERSE}

        % % Node2Vec Method.
        \addplot [thick, color=customyellow]
            table [x=epsilon,y=memory] {node2vec_memory.data.CoCit};
        \label{method:Node2Vec}

        % % FastRP Method.
        \addplot [thick, color=custombrown]
            table [x=epsilon,y=memory] {fastrp_memory.data.CoCit};
        \label{method:FastRP}
    
    \end{groupplot}

\end{tikzpicture}
\vspace*{-4mm}
\caption{Effect of $\epsilon$ on the CoCit dataset.}\label{fig:epsilon-CoCit}
\vspace{-2mm}
\end{figure}

\pgfplotsset{compat=1.5}

\begin{filecontents}{ie_micro.data.flickr}
epsilon       micro
0.1           15.97
0.01          16.35
0.001         17.97
0.0001        25.50
0.00001       29.59
0.000001      30.34
\end{filecontents}

\begin{filecontents}{deepwalk_micro.data.flickr}
epsilon       micro
0.1           31.22
0.000001      31.22
\end{filecontents}

\begin{filecontents}{node2vec_micro.data.flickr}
epsilon       micro
0.1           29.80
0.000001      29.80
\end{filecontents}

\begin{filecontents}{verse_micro.data.flickr}
epsilon       micro
0.1           25.22
0.000001      25.22
\end{filecontents}

\begin{filecontents}{fastrp_micro.data.flickr}
epsilon       micro
0.1           29.85
0.000001      29.85
\end{filecontents}

\begin{filecontents}{ie_time.data.flickr}
epsilon       time
0.1           0.0700
0.01          0.0700
0.001         0.0780
0.0001        0.0820
0.00001       0.1360
0.000001      0.4700
\end{filecontents}

\begin{filecontents}{deepwalk_time.data.flickr}
epsilon       time
0.1           6000
0.000001      6000
\end{filecontents}

\begin{filecontents}{node2vec_time.data.flickr}
epsilon       time
0.1           802
0.000001      802
\end{filecontents}

\begin{filecontents}{verse_time.data.flickr}
epsilon       time
0.1           1800
0.000001      1800
\end{filecontents}

\begin{filecontents}{fastrp_time.data.flickr}
epsilon       time
0.1           79
0.000001      79
\end{filecontents}

\begin{filecontents}{ie_memory.data.flickr}
epsilon       memory
0.1           0.0191
0.01          0.0599
0.001         0.0878
0.0001        0.2346
0.00001       1.2638
0.000001      3.8880
\end{filecontents}

\begin{filecontents}{deepwalk_memory.data.flickr}
epsilon       memory
0.1           387
0.000001      387
\end{filecontents}

\begin{filecontents}{node2vec_memory.data.flickr}
epsilon       memory
0.1           70000
0.000001      70000
\end{filecontents}

\begin{filecontents}{verse_memory.data.flickr}
epsilon       memory
0.1           212
0.000001      212
\end{filecontents}

\begin{filecontents}{fastrp_memory.data.flickr}
epsilon       memory
0.1           3830
0.000001      3830
\end{filecontents}

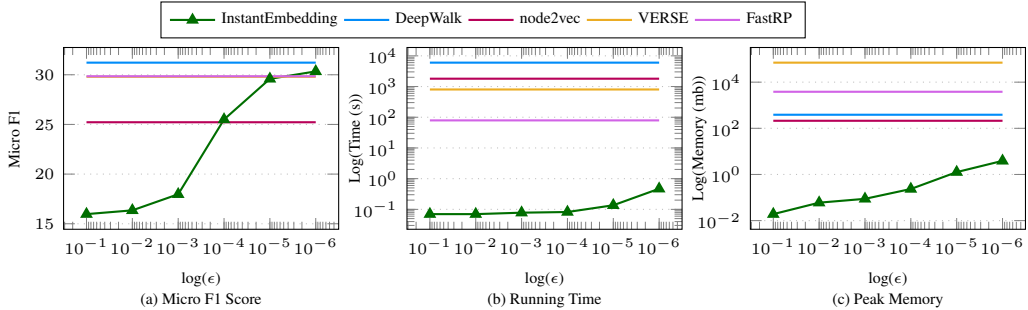
\begin{figure}
\centering
\begin{tikzpicture}
    \begin{groupplot}[
      group style={
        group name=epsilonplots,
        group size=3 by 1,
        horizontal sep=0.065\textwidth,
      },
      every axis/.append style={
            font=\tiny,
        },
        title style={at={(0.5,0)},anchor=north,yshift=-11mm,font=\small},
      yminorticks=true,
      max space between ticks=20,
      width=0.375\textwidth,
      height=4cm,%3.5cm,
    %   ymode=log,
    %   xmode=log,
      ymajorgrids=true,
      grid style=dotted,
     y label style={at={(axis description cs:-0.125,.5)},anchor=south},
      every axis title/.style={below,at={(0.5,-0.3)}}
    ]
    
    % Chart for Epsilon-Micro
    \nextgroupplot[xlabel={log($\epsilon$)}, 
                   %ymode=log,
                   xmode=log,
                   x dir=reverse,
                   ylabel={Micro F1},
                   title={(a) Micro F1 Score},
                  legend columns=10,
                  legend style={at={(1.525,1.05)},anchor=south},
                  legend entries={InstantEmbedding, DeepWalk, node2vec, VERSE, FastRP},
                   ]
        % InstantEmbed Method.
        \addplot[color=customgreen, thick, mark=triangle*] 
            table [x=epsilon,y=micro] {ie_micro.data.Flickr};
        \label{method:InstantEmbedding}
            
        % DeepWalk Method.
        \addplot [thick, color=customblue]
            table [x=epsilon,y=micro] {deepwalk_micro.data.Flickr};
        \label{method:DeepWalk}

        % VERSE Method.
        \addplot [thick, color=customred]
            table [x=epsilon,y=micro] {verse_micro.data.Flickr};
        \label{method:VERSE}

        % % Node2Vec Method.
        \addplot [thick, color=customyellow]
            table [x=epsilon,y=micro] {node2vec_micro.data.Flickr};
        \label{method:Node2Vec}

        % % FastRP Method.
        \addplot [thick, color=custombrown]
            table [x=epsilon,y=micro] {fastrp_micro.data.Flickr};
        \label{method:FastRP}
        
    % Chart for Epsilon-Time
    \nextgroupplot[xlabel={log($\epsilon$)}, 
                   ymode=log,
                   xmode=log,
                   x dir=reverse,
                   ylabel={Log(Time (s))},
                   ylabel style={align=center},
                   title={(b) Running Time}
                   %   ylabel shift=-0.75em,
                   ]
        % InstantEmbed Method.
        \addplot[color=customgreen, thick, mark=triangle*] 
            table [x=epsilon,y=time] {ie_time.data.Flickr};
        \label{method:InstantEmbedding}
            
        % DeepWalk Method.
        \addplot [thick, color=customblue]
            table [x=epsilon,y=time] {deepwalk_time.data.Flickr};
        \label{method:DeepWalk}

        % VERSE Method.
        \addplot [thick, color=customred]
            table [x=epsilon,y=time] {verse_time.data.Flickr};
        \label{method:VERSE}

        % % Node2Vec Method.
        \addplot [thick, color=customyellow]
            table [x=epsilon,y=time] {node2vec_time.data.Flickr};
        \label{method:Node2Vec}

        % % FastRP Method.
        \addplot [thick, color=custombrown]
            table [x=epsilon,y=time] {fastrp_time.data.Flickr};
        \label{method:FastRP}

    % Chart for Epsilon-Memory
    \nextgroupplot[xlabel={log($\epsilon$)}, 
                   ymode=log,
                   xmode=log,
                   x dir=reverse,
                   ylabel={Log(Memory (mb))},
                   ylabel style={align=center},
                   title={(c) Peak Memory}
                   %   ylabel shift=-0.75em,
                   ]
        % InstantEmbed Method.
        \addplot[color=customgreen, thick, mark=triangle*] 
            table [x=epsilon,y=memory] {ie_memory.data.Flickr};
        \label{method:InstantEmbedding}
            
        % DeepWalk Method.
        \addplot [thick, color=customblue]
            table [x=epsilon,y=memory] {deepwalk_memory.data.Flickr};
        \label{method:DeepWalk}

        % VERSE Method.
        \addplot [thick, color=customred]
            table [x=epsilon,y=memory] {verse_memory.data.Flickr};
        \label{method:VERSE}

        % % Node2Vec Method.
        \addplot [thick, color=customyellow]
            table [x=epsilon,y=memory] {node2vec_memory.data.Flickr};
        \label{method:Node2Vec}

        % % FastRP Method.
        \addplot [thick, color=custombrown]
            table [x=epsilon,y=memory] {fastrp_memory.data.Flickr};
        \label{method:FastRP}
    
    \end{groupplot}

\end{tikzpicture}
\vspace*{-4mm}
\caption{Effect of $\epsilon$ on the Flickr dataset.}\label{fig:epsilon-Flickr}
\vspace{-2mm}
\end{figure}

\pgfplotsset{compat=1.5}

\begin{filecontents}{ie_micro.data.youtube}
epsilon       micro
0.1           23.4
0.01          23.6
0.001         27.6
0.0001        33.5
0.00001       38.9
0.000001      40.2
\end{filecontents}

\begin{filecontents}{deepwalk_micro.data.youtube}
epsilon       micro
0.1           38.69
0.000001      38.69
\end{filecontents}

\begin{filecontents}{verse_micro.data.youtube}
epsilon       micro
0.1           36.74
0.000001      36.74
\end{filecontents}

\begin{filecontents}{node2vec_micro.data.youtube}
epsilon       micro
0.1           36.02
0.000001      36.02
\end{filecontents}

\begin{filecontents}{fastrp_micro.data.youtube}
epsilon       micro
0.1           22.83
0.000001      22.83
\end{filecontents}

\begin{filecontents}{ie_time.data.youtube}
epsilon       time
0.1           0.0560
0.01          0.0700
0.001         0.0920
0.0001        0.0880
0.00001       0.2140
0.000001      1.7520
\end{filecontents}

\begin{filecontents}{deepwalk_time.data.youtube}
epsilon       time
0.1           81168
0.000001      81168
\end{filecontents}

\begin{filecontents}{verse_time.data.youtube}
epsilon       time
0.1           31101
0.000001      31101
\end{filecontents}

\begin{filecontents}{node2vec_time.data.youtube}
epsilon       time
0.1           7600
0.000001      7600
\end{filecontents}

\begin{filecontents}{fastrp_time.data.youtube}
epsilon       time
0.1           85.52
0.000001      85.52
\end{filecontents}

\begin{filecontents}{ie_memory.data.youtube}
epsilon       memory
0.1           0.055
0.01          0.082
0.001         0.135
0.0001        0.235
0.00001       1.81
0.000001      12.39
\end{filecontents}

\begin{filecontents}{deepwalk_memory.data.youtube}
epsilon       memory
0.1           4860
0.000001      4860
\end{filecontents}

\begin{filecontents}{verse_memory.data.youtube}
epsilon       memory
0.1           2360
0.000001      2360
\end{filecontents}

\begin{filecontents}{node2vec_memory.data.youtube}
epsilon       memory
0.1           28410
0.000001      28410
\end{filecontents}

\begin{filecontents}{fastrp_memory.data.youtube}
epsilon       memory
0.1           40610
0.000001      40610
\end{filecontents}
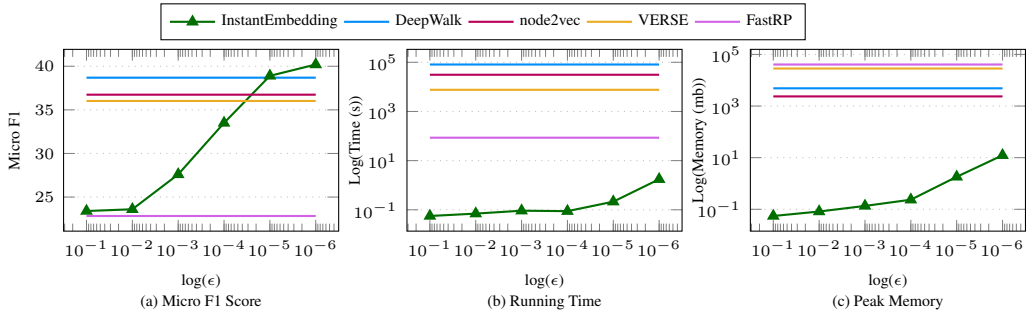
\begin{figure}
\centering
\begin{tikzpicture}
    \begin{groupplot}[
      group style={
        group name=epsilonplots,
        group size=3 by 1,
        horizontal sep=0.065\textwidth,
      },
      every axis/.append style={
            font=\tiny,
        },
        title style={at={(0.5,0)},anchor=north,yshift=-11mm,font=\small},
      yminorticks=true,
      max space between ticks=20,
      width=0.375\textwidth,
      height=4cm,%3.5cm,
    %   ymode=log,
    %   xmode=log,
      ymajorgrids=true,
      grid style=dotted,
     y label style={at={(axis description cs:-0.125,.5)},anchor=south},
      every axis title/.style={below,at={(0.5,-0.3)}}
    ]
    
    % Chart for Epsilon-Micro
    \nextgroupplot[xlabel={log($\epsilon$)}, 
                   %ymode=log,
                   xmode=log,
                   x dir=reverse,
                   ylabel={Micro F1},
                   title={(a) Micro F1 Score},
                  legend columns=10,
                  legend style={at={(1.525,1.05)},anchor=south},
                  legend entries={InstantEmbedding, DeepWalk, node2vec, VERSE, FastRP},
                   ]
        % InstantEmbed Method.
        \addplot[color=customgreen, thick, mark=triangle*] 
            table [x=epsilon,y=micro] {ie_micro.data.YouTube};
        \label{method:InstantEmbedding}
            
        % DeepWalk Method.
        \addplot [thick, color=customblue]
            table [x=epsilon,y=micro] {deepwalk_micro.data.YouTube};
        \label{method:DeepWalk}

        % VERSE Method.
        \addplot [thick, color=customred]
            table [x=epsilon,y=micro] {verse_micro.data.YouTube};
        \label{method:VERSE}

        % % Node2Vec Method.
        \addplot [thick, color=customyellow]
            table [x=epsilon,y=micro] {node2vec_micro.data.YouTube};
        \label{method:Node2Vec}

        % % FastRP Method.
        \addplot [thick, color=custombrown]
            table [x=epsilon,y=micro] {fastrp_micro.data.YouTube};
        \label{method:FastRP}
        
    % Chart for Epsilon-Time
    \nextgroupplot[xlabel={log($\epsilon$)}, 
                   ymode=log,
                   xmode=log,
                   x dir=reverse,
                   ylabel={Log(Time (s))},
                   ylabel style={align=center},
                   title={(b) Running Time}
                   %   ylabel shift=-0.75em,
                   ]
        % InstantEmbed Method.
        \addplot[color=customgreen, thick, mark=triangle*] 
            table [x=epsilon,y=time] {ie_time.data.YouTube};
        \label{method:InstantEmbedding}
            
        % DeepWalk Method.
        \addplot [thick, color=customblue]
            table [x=epsilon,y=time] {deepwalk_time.data.YouTube};
        \label{method:DeepWalk}

        % VERSE Method.
        \addplot [thick, color=customred]
            table [x=epsilon,y=time] {verse_time.data.YouTube};
        \label{method:VERSE}

        % % Node2Vec Method.
        \addplot [thick, color=customyellow]
            table [x=epsilon,y=time] {node2vec_time.data.YouTube};
        \label{method:Node2Vec}

        % % FastRP Method.
        \addplot [thick, color=custombrown]
            table [x=epsilon,y=time] {fastrp_time.data.YouTube};
        \label{method:FastRP}

    % Chart for Epsilon-Memory
    \nextgroupplot[xlabel={log($\epsilon$)}, 
                   ymode=log,
                   xmode=log,
                   x dir=reverse,
                   ylabel={Log(Memory (mb))},
                   ylabel style={align=center},
                   title={(c) Peak Memory}
                   %   ylabel shift=-0.75em,
                   ]
        % InstantEmbed Method.
        \addplot[color=customgreen, thick, mark=triangle*] 
            table [x=epsilon,y=memory] {ie_memory.data.YouTube};
        \label{method:InstantEmbedding}
            
        % DeepWalk Method.
        \addplot [thick, color=customblue]
            table [x=epsilon,y=memory] {deepwalk_memory.data.YouTube};
        \label{method:DeepWalk}

        % VERSE Method.
        \addplot [thick, color=customred]
            table [x=epsilon,y=memory] {verse_memory.data.YouTube};
        \label{method:VERSE}

        % % Node2Vec Method.
        \addplot [thick, color=customyellow]
            table [x=epsilon,y=memory] {node2vec_memory.data.YouTube};
        \label{method:Node2Vec}

        % % FastRP Method.
        \addplot [thick, color=custombrown]
            table [x=epsilon,y=memory] {fastrp_memory.data.YouTube};
        \label{method:FastRP}
    
    \end{groupplot}

\end{tikzpicture}
\vspace*{-4mm}
\caption{Effect of $\epsilon$ on the YouTube dataset.}\label{fig:epsilon-YouTube}
\vspace{-2mm}
\end{figure}

\subsubsection{Task: Visualization}

Figure \ref{fig:visualization} presents multiple UMAP~\citep{mcinnes2018umap} projections on the CoCit dataset, where we grouped together similar conferences.
We note that our sublinear approach is especially well suited to visualizing graph data, as visualization algorithms (such as t-SNE or UMAP) only require a small subset of points (typically downsampling to only thousands) to generate a visualization for datasets.

\begin{figure}[h!]
\centering
\vspace{-1mm}
\begin{subfigure}[t]{0.5\textwidth}
\includegraphics[trim={200px 160px 130px 130px},clip,width=0.95\textwidth]{images/deepwalk_512_15_c.png}
\caption{DeepWalk}
\end{subfigure}\hfill
\begin{subfigure}[t]{0.5\textwidth}
\includegraphics[trim={200px 160px 130px 130px},clip,width=0.95\textwidth]{images/verse_512_15_c.png}
\caption{VERSE}
\end{subfigure}\hfill

\begin{subfigure}[t]{0.5\textwidth}
\includegraphics[trim={200px 160px 130px 130px},clip,width=0.95\textwidth]{images/fastrp_512_15_c.png}
\caption{FastRP}
\end{subfigure}\hfill
\begin{subfigure}[t]{0.5\textwidth}
\includegraphics[trim={200px 160px 130px 130px},clip,width=0.95\textwidth]{images/instantemb_512_15_c.png}
\caption{\mbox{InstantEmbedding}}
\end{subfigure}

\vspace{-2mm}
\caption{UMAP visualization of CoCit ($d$=512). Research areas (\legend{color_ML} ML, \legend{color_DM} DM, \legend{color_DB} DB, \legend{color_IR} IR).}
\label{afig:visualization}
\vspace{-3mm}
\end{figure}

\end{document}